\documentclass{article}

 \usepackage[preprint]{neurips_2026}

\usepackage[utf8]{inputenc} 
\usepackage[T1]{fontenc}    
\usepackage{hyperref}       
\usepackage{url}            
\usepackage{booktabs}       
\usepackage{amsfonts}       
\usepackage{nicefrac}       
\usepackage{microtype}      
\usepackage{xcolor}         

\usepackage{rotating}
\usepackage{amsmath,amsthm,amssymb}
\usepackage{thmtools}
\usepackage{enumerate}
\usepackage{enumitem}
\usepackage{bm}
\usepackage{adjustbox}
\usepackage{subcaption}
\usepackage{wrapfig}
\usepackage{multicol}
\usepackage{multirow}
\usepackage{algorithm}
\usepackage{algorithmic}
\usepackage{kotex}
\usepackage{tabularx}
\usepackage[capitalize,noabbrev]{cleveref}
\Crefname{appsec}{Appendix}{Appendices}
\crefname{appsec}{Appendix}{Appendices}
\Crefname{appsubsec}{Appendix}{Appendices}
\crefname{appsubsec}{Appendix}{Appendices}
\Crefname{appsubsubsec}{Appendix}{Appendices}
\crefname{appsubsubsec}{Appendix}{Appendices}
\crefname{equation}{eq.}{eqs.}
\Crefname{equation}{Eq.}{Eqs.}

\usepackage{amsmath,amsfonts,bm}

\def\eqref#1{equation~\ref{#1}}

\def\1{\bm{1}}

\DeclareMathAlphabet{\mathsfit}{\encodingdefault}{\sfdefault}{m}{sl}
\SetMathAlphabet{\mathsfit}{bold}{\encodingdefault}{\sfdefault}{bx}{n}

\DeclareMathOperator*{\argmin}{arg\,min}

\newtheorem{theorem}{Theorem}[section]

\newtheorem{lemma}[theorem]{Lemma}
\newtheorem{proposition}[theorem]{Proposition}
\newtheorem{remark}{Remark}[section]

\newtheorem{corollary}[theorem]{Corollary}
\newtheorem{definition}{Definition}[section]

\newtheorem{assumption}{Assumption}[section]

\definecolor{cadmiumgreen}{rgb}{0.1, 0.42, 0.24}
\definecolor{cornellred}{rgb}{0.80, 0.11, 0.11}
\definecolor{Gray}{gray}{0.9}

\hypersetup{citecolor=MidnightBlue, linkcolor=BrickRed}
\hypersetup{
  linkcolor = cornellred,
  citecolor  = cadmiumgreen,
  colorlinks = true,
}

\title{What Drives the Inlier-Memorization Effect?\\
A Theory of Outlier Detection via Early Training Dynamics}

\author{%
    Kunwoong Kim \\
    KAIST AI \\
    \texttt{kunwoong.kim@kaist.ac.kr}
    \\
    \And
    Dongha Kim\thanks{Corresponding author.} \\
    Department of Statistics \\
    Center for Data Science \\
    Sungshin Women’s University \\
    \texttt{dongha0718@sungshin.ac.kr}
}

\begin{document}

\maketitle

\begin{abstract}
    Outlier detection (OD) aims to identify anomalous instances by learning the underlying structure of normal data (inliers), and is particularly challenging in fully unsupervised settings where no information about anomalies is available during training. 
    Recent advances have leveraged the \textit{inlier-memorization (IM) effect}, a phenomenon in which deep models memorize inlier patterns earlier than those of outliers, as a powerful signal for distinguishing outliers. 
    However, despite its empirical success, the theoretical understanding of the IM effect remains limited.
    In this work, we present a theoretical study of the IM effect. 
    Focusing on a simple autoencoder, we show that, under mild assumptions, the model can successfully memorize inliers while failing to memorize outliers during certain stages of early training. 
    In particular, we characterize not only the emergence of the IM effect, but also its strength and persistence, and analyze how these properties depend on the data distribution and parameter initialization.
    In addition, building on these insights, we derive simple yet practical guidelines for enhancing the IM effect, including data preprocessing and parameter initialization schemes, achieving state-of-the-art performance on the \texttt{ADBench} datasets. 
    Our findings provide a theoretical foundation for the IM effect and offer actionable directions for improving IM-based outlier detection methods.
\end{abstract}

\section{Introduction}
\label{sec:intro}

Outlier detection (OD) aims to identify anomalous instances (outliers) that deviate from the underlying structure of normal data (inliers) \citep{chandola2009anomaly,chalapathy2019deep}.
It plays a crucial role in a wide range of applications, including fraud detection \citep{hilal2022financial}, cybersecurity \citep{ahmed2016survey}, medical diagnosis \citep{fernando2021deep}, and data preprocessing for downstream learning tasks.
Among various settings, unsupervised outlier detection (UOD), where both inliers and outliers are contained in the training data and no information about anomalousness is accessible, is the most general and particularly challenging setting. 
In this scenario, the model must learn the patterns of inliers while being trained on contaminated data.

To this end, a recent line of work exploits the \textit{inlier-memorization (IM) effect} \citep{DBLP:conf/icml/KimHLKK24}: when a deep unsupervised model is trained on data containing both inliers and outliers, it prioritizes reducing the loss of inliers before that of outliers in early training steps.
Thus, the per-sample loss of an under-fitted model effectively serves as an outlier score, and this insight motivates various methodologies \citep{DBLP:conf/icml/KimHLKK24,zhang2024gradstop,DBLP:conf/aaai/ChoHBK25,kang2026memorize}.
These approaches consistently demonstrate strong empirical performance across diverse datasets, highlighting the practical importance of the IM effect in UOD.

Despite these successes, the IM effect itself remains largely underexplored from a theoretical perspective.
Existing works primarily rely on empirical observations or heuristic arguments (for example, attributing the IM effect to the higher density and prevalence of inliers in the data distribution), but lack a rigorous understanding of how long the effect persists and how strong the resulting separation is, i.e., its persistence and strength, and how they depend on key factors such as the data distribution and model initialization.
This gap limits both the interpretability of the IM effect and the principled design of more reliable IM-based outlier detection methods.

In this work, we theoretically study the IM effect. 
For a simple autoencoder trained by minimizing reconstruction error via gradient descent, we theoretically show that, under mild assumptions, an under-fitted autoencoder memorizes inliers while failing to memorize outliers.
We characterize the conditions under which the IM effect exhibits persistence (i.e., the range of training steps over which the effect holds) and strength (reflecting how many outliers can be separated from inliers), from two key perspectives: the data distribution and weight initialization.
In particular, we show that the IM effect persists longer and exhibits stronger separability when inliers form dense clusters and outliers are sparse and well-separated, and that it further persists longer when the initialization captures inlier structure.
\cref{fig:im_effect} illustrates the IM effect and the factors governing its strength and persistence.

\begin{figure}[t]
    \centering
    \includegraphics[width=0.95\linewidth]{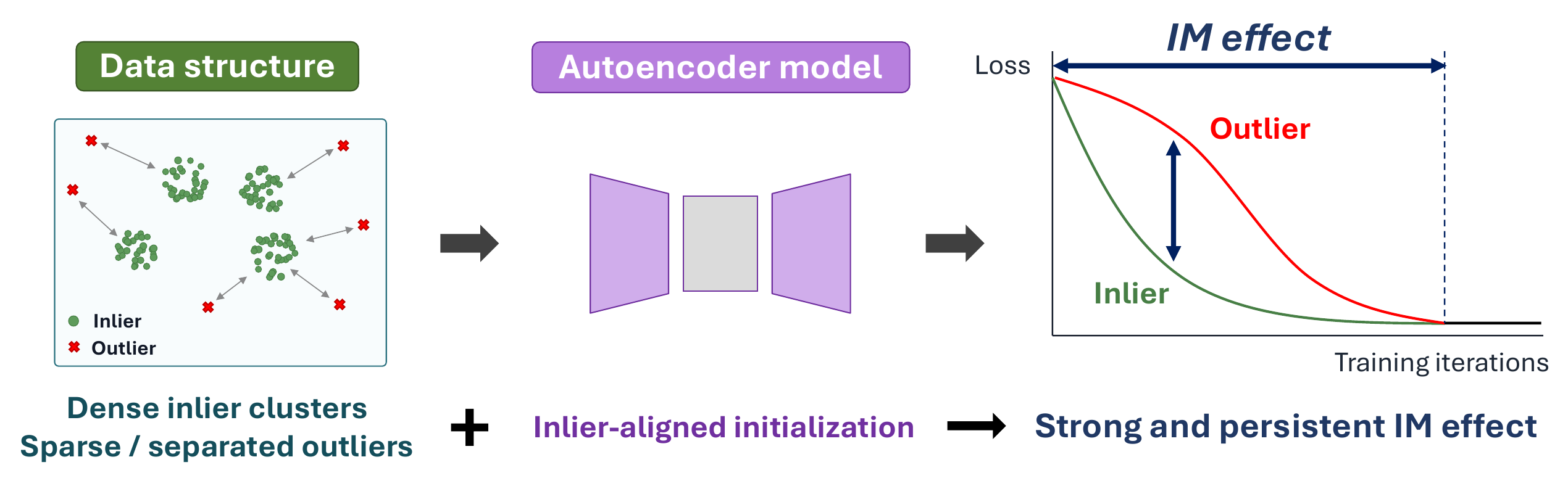}
    \caption{
    Inlier-memorization (IM) effect and the factors that influence its strength (how large the loss gap is) and persistence (how long the loss gap persists).
    }
    \label{fig:im_effect}
    \vskip -0.2in
\end{figure}

Building on these insights, we provide two simple yet effective guidelines for enhancing the IM effect in practice.
First, we show that using embedding representations can induce more favorable data structures, making the IM effect clearer.
Second, we demonstrate that applying EMA-based weight initialization helps preserve inlier information during early training, allowing the IM effect to last longer.
These guidelines are intentionally simple and broadly applicable, requiring only minimal modifications to existing methods. 
We empirically verify that incorporating these two components into existing IM-based outlier detection approaches consistently leads to improved performance, achieving gains over state-of-the-art IM-based methods.



The remainder of this paper is organized as follows. 
Section \ref{sec:review} briefly reviews related research on OD problems. 
In Section \ref{sec:main}, we present a theoretical study of the IM effect, including the problem setup, a formal description of the phenomenon, and the corresponding theoretical analysis.
Section \ref{sec:exp} provides empirical validation and discusses practical implications, including guidelines for data preprocessing and weight initialization, followed by concluding remarks in Section \ref{sec:conclusion}.

Our contributions can be summarized as follows:

\begin{itemize}[topsep=0pt, leftmargin=1.5em, labelsep=0.5em]
    \item[$\diamond$] We theoretically study the inlier-memorization (IM) effect, showing that under-fitted autoencoders memorize inliers earlier than outliers, and identifying key factors that govern this behavior, including the data distribution and weight initialization.
    \item[$\diamond$] Inspired by our theoretical study, we propose two simple yet effective strategies for IM-based outlier detection and show that they consistently improve performance over existing IM-based methods, achieving state-of-the-art results.
\end{itemize}

\section{Related Works}
\label{sec:review}

Recent advances have shown that deep learning provides a powerful framework for UOD.
Classical approaches include density-based methods such as LOF \citep{10.1145/335191.335388}, cluster-based methods such as CBLOF \citep{he2003discovering}, robust statistics-based approaches such as MCD \citep{fauconnier2009outliers}, and tree-based methods such as Isolation Forest (IF) \citep{liu2008isolation}.
Boundary-based methods such as OCSVM \citep{ocsvm} and SVDD \citep{svdd} learn a separating boundary for normal data, with deep extensions including DeepSVDD \citep{deepsvdd} and DeepSAD \citep{deepsad}.


Self-supervised learning has also been widely applied to OD tasks \citep{csi,golan2018deep}.
Representative approaches include SimCLR \citep{simclr}, which learns informative representations via contrastive learning, and ICL \citep{icl}, which detects anomalies by maximizing mutual information between masked and unmasked data.
NeuTraL AD \citep{DBLP:conf/icml/QiuPKMR21} further learns neural transformations to generate diverse views, enabling effective anomaly detection. 


A wide range of deep learning techniques with diverse modeling perspectives have been applied to UOD.
For example, RDA \citep{rda} enhances autoencoder-based anomaly detection by improving robustness to outliers during representation learning, while DSEBM \citep{zhai2016deep} detects anomalies using deep energy-based models that assign low energy to normal samples and high energy to anomalous ones.
AnoGAN \citep{DBLP:conf/ipmi/SchleglSWSL17} employs generative adversarial networks to learn the manifold of normal data, and GAAL \citep{liu2019generative} generates informative potential outliers to better define the decision boundary.
More recently, DTE \citep{DBLP:journals/corr/abs-2305-18593} leverages diffusion-based models for likelihood-based detection, and ROBOD \citep{DBLP:conf/nips/DingZA22} improves robustness via a hyper-ensemble framework.


Recently, the IM effect has attracted increasing attention in the outlier detection literature.
Following its introduction in ODIM \citep{DBLP:conf/icml/KimHLKK24}, several subsequent works have developed improved methods based on this phenomenon.
For instance, ALTBI \citep{DBLP:conf/aaai/ChoHBK25} enhances the IM effect via adaptive mini-batch scheduling and trimmed loss optimization, while IMBoost \citep{kang2026memorize} improves performance by incorporating active learning.
These methods achieve strong performance while remaining efficient.
However, they largely assume the existence of the IM effect, without providing a theoretical explanation of why it arises or when it can be expected to hold.
Although GradStop \citep{zhang2024gradstop} takes a step toward understanding this phenomenon, a rigorous theoretical characterization of the IM effect remains limited.


\section{A Theoretical Analysis of the Inlier-Memorization Effect}
\label{sec:main}


\subsection{Preliminaries and Problem Setup}
\label{sec:notation}

\paragraph{Inlier-Memorization Effect}
When training deep unsupervised models on datasets containing both inliers and outliers, it is often observed that the model tends to fit inlier samples earlier than outliers \citep{DBLP:conf/icml/KimHLKK24,zhang2024gradstop}.
Consequently, after a certain number of training iterations, the reconstruction or likelihood errors of inliers become small, whereas those of outliers remain relatively large. 
We refer to this phenomenon as the \emph{inlier-memorization (IM) effect}. 
Intuitively, inliers are more frequent and share a common structure, whereas outliers are less frequent and deviate from this structure, making them harder to fit.
As a result, reducing the loss of inliers first provides a more effective direction for minimizing the overall loss during the early stages of training, thereby leading to the IM effect.

Importantly, the IM effect is a general phenomenon arising during the training of deep unsupervised models, rather than being specific to a particular architecture.
It has been empirically observed across a range of models, including autoencoders, variational autoencoders (VAEs, \citep{kingma2013auto,pmlr-v32-rezende14}), and normalizing flows \citep{kobyzev2020normalizing}.
For theoretical simplicity, we focus our analysis on a simple autoencoder.
Extending the analysis to other generative models is a promising direction for future work.




\paragraph{Data Distribution}

Under the UOD regime, we observe data $\{x_i\}_{i=1}^n\subset \mathbb{R}^p$ containing both inliers and outliers. 
For simplicity, we assume that the norm of each sample is uniformly bounded as 
$\|x_i\|_2 \le 1$ for all $i\in[n]$. 
In practice, this condition can be readily satisfied, e.g., by applying min-max normalization followed by scaling.

We assume that the data exhibit an underlying \textit{clustered structure}. 
Specifically, there exist $K$ cluster centers $\mu_1,\dots,\mu_K \in \mathbb{R}^p$. 
Let $g:[n]\to[K]$ denote a cluster-assignment map, where $[N]:=\{1,\ldots,N\}$ for $N\in\mathbb{N}$. 
We represent each observed sample by a decomposition
\begin{equation}\label{eq:cluster_struc}
    x_i = \widetilde{x}_i + \xi_i + o_i, \quad i \in [n],
\end{equation}
where
(i) $\widetilde{x}_i := \mu_{g(i)}$ is the cluster center associated with $x_i,$
(ii) $\xi_i\in\mathbb{R}^p$ represents within-cluster variation,
and 
(iii) $o_i\in\mathbb{R}^p$ denotes an additional perturbation corresponding to outliers. 
For each cluster $k\in[K]$, we define $\Lambda_k := \{i\in[n]: g(i)=k\}$ as the index set of data assigned to $k$-th cluster.
We further define $n_{\min} := \min_{k \in [K]} n_k$ and $n_{\max} := \max_{k \in [K]} n_k$. 
And let $\delta := \min_{a \neq b \in [K]} \|\mu_a - \mu_b\|_2$ denote the minimum separation between cluster centers.
%

The term $\xi_i$ captures small deviations of samples around the cluster center, reflecting the natural noise of inlier data.
\cref{assumption:clustered_model} assumes that inlier data remain close to their corresponding cluster centers.

\begin{assumption}[Small within-cluster variation]\label{assumption:clustered_model}
    $\|\xi_i\|_2 \le \varepsilon_c, \forall i\in[n],$ for some constant $\varepsilon_c \ge 0$.
\end{assumption}


Furthermore, the term $o_i$ represents abnormal deviations that generate outliers.
Under this decomposition, we define $x_i$ as an \textit{inlier} when $o_i=0$, and as an \textit{outlier} otherwise. 
Let $\mathcal{O}:=\{i\in[n]:o_i\neq0\}$ denote the index set of outliers.
The {outlier rate} of cluster $k$ is then defined as $\frac{|\mathcal{O}\cap\Lambda_k|}{n_k}$.
\cref{assumption:outlier_rate} assumes that the fraction of outliers in each cluster is uniformly bounded.

\begin{assumption}[Bounded outlier rate]\label{assumption:outlier_rate}
    The outlier rate is at most $\rho$ for some constant $\rho>0$, i.e., $\max_{k\in[K]} \frac{|\mathcal{O}\cap\Lambda_k|}{n_k} \le \rho$.
\end{assumption}

In addition, we say that an outlier is \emph{well-separated} if its additional perturbation is relatively large:

\begin{assumption}[Well-separated outlier]\label{assumption:well_separation}
    Let $x_i$ be an outlier with perturbation $o_i$. 
    We say that $x_i$ is well-separated if
    $\|o_i\|_2 > \delta + 2\varepsilon_c.$
\end{assumption}

Here, the quantity $\delta + 2\varepsilon_c$ corresponds to an upper bound on the distance between inliers from two closest clusters. 
Thus, a well-separated outlier lies sufficiently far from all inliers.

\paragraph{Model Setup}
We consider a bias-free single-hidden-layer autoencoder $F_W:\mathbb{R}^p\to\mathbb{R}^p$ defined as
\begin{align}\label{eq:autoencoder}
    F_W(x):=A\phi(Wx),
\end{align}
where $W\in\mathbb{R}^{H\times p}$ is the \textit{trainable} encoder weight matrix ($H$ denotes the number of hidden nodes), $A\in\mathbb{R}^{p\times H}$ is the \textit{fixed} decoder weight matrix, and $\phi$ is an activation function applied elementwise.

Following standard autoencoder training frameworks \cite{doi:10.1126/science.1127647}, we train an autoencoder model to reconstruct each input from itself (i.e., the prediction target is the input itself).
To learn the encoder parameters $W$, we minimize the mean squared reconstruction error over the training data.
Specifically, we minimize the following mean squared reconstruction error
\begin{equation}\label{eq:recon_loss}
    \mathcal L(W) := \frac{1}{2}\sum_{i=1}^{n}\|x_i - F_W(x_i)\|_2^2=\frac{1}{2}\|X-\mathcal{F}_{W}(X) \|_2^2
\end{equation}
where
$X := [x_1^{\top}, \ldots, x_n^{\top}]^{\top} \in \mathbb{R}^{np}$ and $\mathcal{F}_{W}(X) := [F_W(x_1)^{\top}, \ldots, F_W(x_n)^{\top}]^{\top} \in \mathbb{R}^{np}$. 
That is, the model is trained so that $x_i \approx F_W(x_i)$ for all $i\in[n]$.
We optimize this objective using gradient descent, a standard learning approach in deep learning.
For theoretical simplicity, as mentioned above, we assume that only the encoder weight matrix $W$ is trained, while the decoder weight matrix $A$ is fixed during optimization. 
We note that such a simplification is commonly adopted in theoretical analyses of single-hidden-layer neural networks \citep{DBLP:conf/icml/AroraDHLW19,DBLP:journals/corr/abs-1902-04674,DBLP:conf/aistats/LiSO20}.

\paragraph{Training via Gradient Descent}
We minimize the objective function \cref{eq:recon_loss} using gradient descent (GD) with initialization 
$W_0 \in \mathbb{R}^{H\times p}$ and learning rate $\eta>0$. 
Let $w := \mathrm{vec}(W) \in \mathbb{R}^{Hp}$ denote the vectorized parameter, and we use $W$ and $w$ interchangeably when no confusion arises.
The GD algorithm at each iteration step $\tau\in\mathbb{Z}_{\ge 0}$ updates the parameter as 
$ w_{\tau+1}=w_\tau-\eta \nabla_w \mathcal L(W_\tau)$, given an initial weight $w_0$.
The update can be equivalently written as
\begin{align}
\label{eq:gd_alg}  
    w_{\tau+1} = w_\tau-\eta J(W_\tau)^\top r_\tau,
\end{align}
where $J(W_{\tau}) := \frac{\partial}{\partial w} \mathcal{F}_{W_{\tau}}(X) \in \mathbb{R}^{np\times Hp}$ and $r_\tau := \mathcal{F}_{W_\tau}(X) - X\in \mathbb{R}^{np}$ denote the Jacobian of the model outputs with respect to the trainable parameter $w$ and the concatenated residual at iteration step $\tau$, respectively.

\subsection{Theoretical Analysis}
\label{sec:theory}

We define the initialization error on the cluster-centered-inputs 
$
\widetilde{R}_{0}:=\|\mathcal{F}_{W_0}(\widetilde{X})-\widetilde{X}\|_2,
$
where $\widetilde{X} = [\widetilde{x}_1^\top,\ldots,\widetilde{x}_n^\top]^\top$ denotes the concatenation of cluster centers corresponding to the samples. 
This quantity measures how well the initialization $W_0$ captures the underlying cluster structure.
As $W_0$ better captures the inlier cluster structure, $\widetilde{R}_0$ becomes smaller.
In addition, we use the notation $c = c(a_1\uparrow,\ldots,a_m\downarrow)$ to indicate that the constant $c$ depends on $a_1,\ldots,a_m$, where $\uparrow$ and $\downarrow$ denote that $c$ is increasing or decreasing in the corresponding argument. 
Logarithmic factors are ignored in this notation when specifying the monotonicity.

We show that under several mild assumptions, minimizing the reconstruction objective in \cref{eq:recon_loss} via GD in \cref{eq:gd_alg} leads to a training phase in which the reconstruction error of inliers decreases earlier than that of outliers, thereby giving rise to the IM effect. 

\begin{theorem}[IM Effect]\label{thm_main}
    Assume that the training data $\{x_i\}_{i=1}^n$, containing both inliers and outliers, satisfy Assumptions~\ref{assumption:clustered_model} and~\ref{assumption:outlier_rate} as well as regularity conditions in Appendix~\ref{sec-appen:theory}. 
    Suppose that the autoencoder in~\cref{eq:autoencoder} is trained by minimizing the reconstruction error in~\cref{eq:recon_loss} via GD, updating the encoder parameter $W$ according to~\cref{eq:gd_alg}.
    {\color{black}Then, there exist a learning rate $\eta>0$ and two integers $0<T_1<T_2$ with}
    \begin{equation}
        T_1=c(n_{\mathrm{min}}\downarrow) 
        \quad
        \textup{and}
        \quad
        T_2=c(n_{\mathrm{min}}\uparrow, n_{\mathrm{max}}\downarrow,\varepsilon_c\downarrow, \widetilde{R}_{0} \downarrow,\rho\downarrow,\delta\uparrow)
    \end{equation}
    such that for any iteration $\tau\in[T_1,T_2]$, inliers have smaller reconstruction loss than well-separated outliers.
    In particular, for any outlier $x_i$ satisfying \cref{assumption:well_separation}, we have
    \begin{equation}
        \max_{j\in\mathcal{O}^c}\|x_j-F_{W_\tau}(x_j)\|_2^2 < \|x_i-F_{W_\tau}(x_i)\|_2^2.
    \end{equation}
\end{theorem}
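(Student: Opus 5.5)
We plan to compare the gradient-descent trajectory on the observed data $X$ with a reference trajectory driven by the cluster-centered inputs $\widetilde X$. The theorem is an early-time statement, so the natural tool is a local (NTK-type) linearization argument. The regularity conditions in Appendix~\ref{sec-appen:theory} should give smoothness of $\phi$, bounded operator norms of $A$ and $W_0$, and a lower bound on the smallest singular value of the cluster-level Jacobian. Under these conditions, $J(W)$ stays Lipschitz and well conditioned in a ball around $W_0$.

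\textbf{Step 1: cluster-level dynamics.} Because inputs within a cluster agree up to $\xi_i$ and $o_i$, we group the residual by cluster. The reconstruction map evaluated at the centers $\mu_1,\dots,\mu_K$ behaves like a $K$-point regression problem in which each center is weighted by $n_k$. Its Gram matrix is therefore $\widetilde J^\top D \widetilde J$, with $D=\mathrm{diag}(n_k)$, and its eigenvalues lie between a constant times $n_{\min}$ and a constant times $n_{\max}$. We take $\eta \asymp 1/n_{\max}$ to ensure stability. The standard argument then gives geometric decay of the center residual:
\[
\widetilde R_\tau \le (1-\eta c\, n_{\min})^\tau \widetilde R_0 .
\]
The parameters stay within a radius $O(\widetilde R_0/\sqrt{n_{\min}})$ of $W_0$. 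This yields $T_1$: it is the first $\tau$ at which the per-center error $\|F_{W_\tau}(\mu_k)-\mu_k\|$ falls below a threshold $\gamma$ tied to $\delta$. Since the contraction rate is $\eta n_{\min}$, $T_1$ decreases in $n_{\min}$.

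\textbf{Step 2: perturbation by $\xi_i$ and $o_i$.} We write the true gradient as the cluster gradient plus error terms. The first error term comes from $\xi_i$ and is of size $O(\varepsilon_c)$ per sample. The second comes from outliers: at most $\rho n_k$ samples per cluster, each contributing a residual $O(1)$ because $\|x_i\|\le 1$. The accumulated drift of $w_\tau$ from the reference trajectory then grows roughly like $\tau\,\eta\,(n_{\max}\varepsilon_c+\rho n_{\max}+\text{curvature}\cdot \widetilde R_0)$. $T_2$ is the largest $\tau$ for which this drift keeps the per-center error below $\gamma$ and keeps $W_\tau$ in the good ball. This gives the stated monotonicities: $T_2$ increases in $n_{\min}$ and $\delta$, and decreases in $n_{\max},\varepsilon_c,\widetilde R_0,\rho$. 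We expect this drift control to be the main obstacle. Outlier residuals do not decay, so their cumulative push on $W$ grows linearly in $\tau$. The cleanest route is probably to project the outlier gradient onto the cluster-Jacobian range and bound only its $O(\rho)$ relative contribution per cluster.

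\textbf{Step 3: comparing losses.} For $\tau\in[T_1,T_2]$ we compare an inlier $x_j$ with a well-separated outlier $x_i$. For the inlier, Lipschitz continuity of $F_{W_\tau}$ (via bounded $\|A\|$, $\|W_\tau\|$ and $\phi'$) gives
\[
\|x_j-F_{W_\tau}(x_j)\| \le \gamma + (1+L)\varepsilon_c .
\]
For the outlier, the key is that $F_{W_\tau}$ maps inputs near the span of centers close to centers. We would formalize this by showing that the output $F_{W_\tau}(x_i)$ lies within $\gamma + L\|x_i-\mu_{g(i)}\|$-type neighborhoods of the inlier manifold, or more simply that the reconstruction is pulled toward the centers. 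Combined with
\[
\|o_i\| > \delta + 2\varepsilon_c ,
\]
the triangle inequality then gives $\|x_i - F_{W_\tau}(x_i)\|$ strictly larger than the inlier bound, provided $\gamma$ and $L$ are small relative to the margin. This lower bound on the outlier error is the other delicate point. If the general version fails, we would first try to use the regularity assumption that the decoder range (the column space of $A$) is aligned with the cluster centers, so that $F_W(x_i)$ cannot reproduce the component $o_i$.
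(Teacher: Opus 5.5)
Your Steps 1--2 have the same skeleton as the paper. Both use a reference gradient-descent run driven by the cluster centers, a Gronwall-type bound on how far the true run drifts from it, $T_1$ from the contraction rate $\eta\alpha^2$ with $\alpha^2=n_{\min}\alpha_c^2$, and $T_2$ from the radius and separation constraints. The one real difference is the reference run. The paper feeds the center inputs $\widetilde X$ but keeps the observed targets $X$. Since $\mathrm{range}\,\widetilde J(W)\subset S_+$, the within-cluster component $\Pi_{S_-}$ of the residual (each outlier's deviation from its cluster mean) is frozen exactly. The reference outputs converge to the cluster means of the targets, within $\varepsilon_c+\rho\varepsilon_o$ of the centers (Lemmas~\ref{lem:clean_meta}, \ref{lem:ae_projection}, \ref{lem:xi_projection}). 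This is the exact form of your ``project onto the cluster-Jacobian range'' idea. Non-decaying outlier target residuals never act as a forcing term, and outliers enter the drift of Proposition~\ref{prop:path_perturb} only through their inputs, via $\Delta_f$ and $\Delta_J$.

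The genuine gap is Step 3. You need both the per-center error $\gamma$ and the input-Lipschitz constant $L$ of $F_{W_\tau}$ to be small, and these are incompatible. For the two closest centers,
$\delta\le\|\mu_a-F_{W_\tau}(\mu_a)\|_2+\|F_{W_\tau}(\mu_a)-F_{W_\tau}(\mu_b)\|_2+\|F_{W_\tau}(\mu_b)-\mu_b\|_2\le 2\gamma+L\delta$,
so $(1-L)\delta\le 2\gamma$. Your outlier lower bound $(1-L)(\|o_i\|_2-\varepsilon_c)-\gamma$ beats your inlier upper bound $\gamma+(1+L)\varepsilon_c$ only if $(1-L)\|o_i\|_2>2\gamma+2\varepsilon_c$. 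Combined with $(1-L)\delta\le 2\gamma$, that means at best $\|o_i\|_2>\delta+2\varepsilon_c/(1-L)$. This is strictly more than Assumption~\ref{assumption:well_separation} gives whenever $L\varepsilon_c>0$. It also needs $L<1$, which no regularity condition supplies, and fitting the centers well forces $L\ge 1-2\gamma/\delta$. So the triangle-inequality route cannot reach the stated threshold, and your decoder-alignment fallback is a hypothesis the paper does not make.

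Following the paper will not close this gap. Corollary~\ref{cor:ae_outlier_general} runs the same triangle inequality. It folds the uniform input-Lipschitz term $\Delta_f=\ell(\varepsilon_c+\varepsilon_o)$, with $\ell:=\|A\|B_1R_W$, into $\zeta_T$, and assumes $\delta>2\zeta_T+4(\varepsilon_c+\rho\varepsilon_o)$. The paper's own estimates also give, for every $k$,
$\|F_{W_T}(\mu_k)-\mu_k\|_2\le 2(\varepsilon_c+\rho\varepsilon_o)+\beta\Delta_{\mathrm{path}}(T)$.
This follows from Lemma~\ref{lem:clean_meta} together with $\|\mathcal F_{W_T}(\widetilde X)-\mathcal F_{\widetilde W_T}(\widetilde X)\|_2\le\beta\Delta_{\mathrm{path}}(T)$ from the proof of Proposition~\ref{prop:path_perturb}. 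In addition, $F_{W_T}$ is $\ell$-Lipschitz in its input. With $K\ge 2$, the two-center inequality then gives
$(1-\ell)\delta\le 4(\varepsilon_c+\rho\varepsilon_o)+2\beta\Delta_{\mathrm{path}}(T)<\delta-2\ell(\varepsilon_c+\varepsilon_o)$,
which forces $\ell>0$ and $\delta>2(\varepsilon_c+\varepsilon_o)$. A well-separated outlier would then satisfy $\|o_i\|_2>\delta+2\varepsilon_c>2\varepsilon_o\ge 2\|o_i\|_2$ by Assumption~\ref{assumption:outlier_magnitude}, which is impossible. So under the paper's hypotheses no outlier is well-separated, and the separation conclusion of Theorem~\ref{thm_main} holds only vacuously. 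Closing Step 3 genuinely needs an ingredient beyond input-Lipschitz control, for instance a structural condition, like your fallback, that prevents $F_W$ from reproducing $o_i$.
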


The regularity conditions, the full statement of \cref{thm_main}, and its proof are deferred to Appendix~\ref{sec-appen:theory} (see \cref{thm:ae_beyond_exact,cor:ae_outlier_general} for details).
Detailed derivations of the monotonic dependence of $T_1$ and $T_2$ on key factors are provided in \cref{app:dependence}.

When every outlier is well-separated, \cref{thm_main} immediately implies the occurrence of the perfect IM effect, as stated in \cref{cor:perfect_sep}.

\begin{corollary}[Perfect separation via the IM effect]\label{cor:perfect_sep}
    Under the assumptions of Theorem~\ref{thm_main} and with the interval $[T_1,T_2]$ defined therein, suppose that \cref{assumption:well_separation} holds for all $i\in\mathcal{O}$.
    Then for any iteration $\tau\in[T_1,T_2]$, the reconstruction error of every outlier is strictly larger than that of every inlier, i.e.,
    $$
    \max_{j\in\mathcal{O}^c}\|x_j-F_{W_\tau}(x_j)\|_2^2
    <
    \min_{i\in\mathcal{O}}\|x_i-F_{W_\tau}(x_i)\|_2^2.
    $$
    Hence, perfect separation between inliers and outliers in terms of reconstruction loss is achieved during the training iterations $[T_1,T_2]$.
\end{corollary}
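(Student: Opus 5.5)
The corollary follows directly from \cref{thm_main}: we apply the theorem outlier by outlier and then take a minimum over the finite set $\mathcal{O}$. The point to watch is that the interval $[T_1,T_2]$ must be the same for every outlier. In \cref{thm_main}, $\eta$, $T_1$ and $T_2$ depend only on global quantities of the data and initialization ($n_{\min}$, $n_{\max}$, $\varepsilon_c$, $\widetilde{R}_0$, $\rho$, $\delta$, and the regularity constants in Appendix~\ref{sec-appen:theory}). They do not depend on which well-separated outlier is being considered. So I would first note, by checking the full statement (\cref{thm:ae_beyond_exact,cor:ae_outlier_general}), that a single choice of $\eta$, $T_1$, $T_2$ works simultaneously for every index $i$ satisfying \cref{assumption:well_separation}.

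With this in hand, fix any $\tau\in[T_1,T_2]$ and run GD with that common $\eta$. Since every $i\in\mathcal{O}$ is well-separated by hypothesis, \cref{thm_main} gives, for each $i\in\mathcal{O}$,
\[
\max_{j\in\mathcal{O}^c}\|x_j-F_{W_\tau}(x_j)\|_2^2 < \|x_i-F_{W_\tau}(x_i)\|_2^2 .
\]
The left-hand side does not depend on $i$. Because $\mathcal{O}\subseteq[n]$ is finite, the minimum over $i\in\mathcal{O}$ is attained at some $i^\star$. Applying the strict inequality at $i^\star$ gives
\[
\max_{j\in\mathcal{O}^c}\|x_j-F_{W_\tau}(x_j)\|_2^2<\min_{i\in\mathcal{O}}\|x_i-F_{W_\tau}(x_i)\|_2^2 .
\]
If $\mathcal{O}=\emptyset$, the statement is vacuous, with the usual convention $\min_{\emptyset}=+\infty$. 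If $\mathcal{O}^c=\emptyset$, the left maximum is $-\infty$, although \cref{assumption:outlier_rate} with $\rho<1$ already excludes this case.

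The only real obstacle is the uniformity issue above. If the appendix proof instead produced an outlier-dependent window, I would fix it by tracing where $\|o_i\|_2$ enters. The likely route is a lower bound on the outlier residual of the form $\|o_i\|_2-\delta-2\varepsilon_c-(\text{inlier error})$ at the GD iterate. The strict condition $\|o_i\|_2>\delta+2\varepsilon_c$ is then used only through positivity of this margin. I would take the worst case over the finitely many $i\in\mathcal{O}$, which gives the smallest positive margin. Then I would intersect the finitely many resulting windows, keeping $T_1$ as the maximum and $T_2$ as the minimum. The monotone dependence of $T_1$ and $T_2$ in \cref{app:dependence} shows that this intersection has the same qualitative form.
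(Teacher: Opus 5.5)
Your proposal is correct and follows the paper's route. The corollary is read off from \cref{thm_main}, concretely from \cref{cor:ae_outlier_general}. That corollary gives the inlier upper bound $\zeta_T+3\varepsilon_c+2\rho\varepsilon_o$ and the outlier lower bound $\|o_i\|_2-\zeta_T-3\varepsilon_c-2\rho\varepsilon_o$ on a window fixed only by global quantities, so your uniformity check succeeds, taking the minimum over the finite set $\mathcal{O}$ finishes the argument, and your fallback of intersecting outlier-dependent windows is never needed.
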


The IM effect is an early-training phenomenon in which a separation in reconstruction error between inliers and outliers is guaranteed over a finite interval $[T_1, T_2]$, rather than after full convergence.
This provides a theoretical explanation for why under-fitted unsupervised models, such as autoencoders, are beneficial for UOD, as suggested in previous studies on the IM effect \citep{DBLP:conf/icml/KimHLKK24,zhang2024gradstop}.
\cref{thm_main} and \cref{cor:perfect_sep} yield several important implications that are closely aligned with practical scenarios. 

\begin{itemize}[topsep=0pt, leftmargin=1.5em, labelsep=0.5em]
    \item[$\diamond$] \textbf{(Persistence of the IM Effect)}  
    The persistence of the IM effect, characterized by the interval $[T_1, T_2]$, depends on both the \textit{data distribution} and \textit{weight initialization}. 
    From a data distribution perspective, this interval tends to be extended when inlier clusters are internally compact, i.e., when $\delta$ is large and $\varepsilon_c$ is small. 
    Specifically, this is reflected in the quantity $\delta - 4\varepsilon_c$, whose monotonic dependence is summarized in \cref{app:dependence}.
    In addition, the interval $[T_1, T_2]$ becomes longer when the cluster sizes are balanced, i.e., when the difference between $n_{\min}$ and $n_{\max}$ is small. 
    If the data distribution is imbalanced, the interval becomes shorter, as minor clusters may be misrecognized as outliers by the model. 
    Moreover, the interval increases when the outlier rate is small, since such outliers are harder to memorize during the early training stage, which is related to the empirical findings in \citep{DBLP:conf/aaai/ChoHBK25}.
    
    Finally, the interval is also affected by the weight initialization. 
    In particular, when the initial autoencoder captures inlier cluster structure more effectively, i.e., for smaller values of $ \widetilde{R}_{0} $, the IM effect persists over a longer training period. 
    This can be explained by the fact that such initialization biases the optimization trajectory toward inlier reconstruction, thereby delaying the model's tendency to fit outliers.
    
    \item[$\diamond$] \textbf{(Strength of the IM Effect)} 
    The strength of the IM effect depends on the \textit{data distribution}; specifically, the gap between inliers and outliers in terms of loss values is determined by their distance in the input space.
    The degree of separation in \cref{assumption:well_separation}, i.e., $\delta + 2\varepsilon_c$, reflects the \textit{proximity} among inlier clusters. 
    When $\delta + 2\varepsilon_c$ is small, the inliers are closely distributed, and in this case, outliers can be easily separated even if they are not far from the inliers. 
    On the other hand, when the inlier clusters are relatively far apart, outliers can be identified only if they are located sufficiently far from the inliers.
\end{itemize}

In conclusion, the persistence and strength of the IM effect are governed by multiple aspects of the data distribution, with the separation $\delta$ and the within-cluster variation $\varepsilon_c$ jointly influencing both.
Persistence is associated with a large value of $\delta - 4\varepsilon_c$, whereas strength is associated with a small value of $\delta + 2\varepsilon_c$.
Both quantities improve as $\varepsilon_c$ becomes smaller; 
a smaller $\varepsilon_c$ increases $\delta - 4\varepsilon_c$ and decreases $\delta + 2\varepsilon_c$, thereby extending the IM interval and strengthening the IM effect.

\begin{remark}[Milder Assumptions]
    An advantage of our theoretical framework is that, in contrast to existing works that analyze training dynamics of neural networks, it does not rely on strong overparameterization or restrictive architectural assumptions. 
    These analyses typically consider extremely wide networks and carefully structured output layers to guarantee perfect fitting \citep{DBLP:conf/icml/AroraDHLW19,DBLP:journals/corr/abs-1902-04674,DBLP:conf/aistats/LiSO20,DBLP:conf/iclr/LeeBNSPS18,DBLP:conf/nips/JacotHG18,DBLP:conf/nips/LeeXSBNSP19}.
    On the other hand, our analysis does not require perfect fitting, but only that inliers achieve smaller reconstruction error than outliers at some stage of training. 
    This enables milder assumptions, including no restriction on the number of hidden nodes and only a bounded spectral norm of the decoder matrix $A$, thereby narrowing the gap between theory and practice.
\end{remark}

\begin{remark}[Possible Extensions]
    For theoretical simplicity, we consider a setting where the decoder is fixed and only the encoder is trained.
    While this simplifies the analysis, practical autoencoders are typically trained jointly.
    Recent studies have analyzed the training dynamics in such jointly trained settings \citep{DBLP:journals/tit/NguyenWH21}.
    These results suggest that extending our analysis to jointly trained models may yield a more complete understanding of the IM effect.
    Moreover, our framework may extend to broader generative models, such as VAEs, where training dynamics have also been studied \citep{DBLP:journals/ml/WangH25}.
    We leave these directions for future work.
\end{remark}

\section{Experiments}
\label{sec:exp}


\subsection{Simulation Study for Validating Theoretical Results}
\label{sec:exp_simulation}

We conduct a simulation study to empirically validate the key implications of \cref{thm_main} and \cref{cor:perfect_sep}.
Specifically, we investigate the following four key main factors that affect the IM effect. 
The first three focus on the effects of data distribution, including 
(i) the within-cluster variation $\varepsilon_c$, 
(ii) cluster-size balance ($n_{\min} : n_{\max}$), and 
(iii) the outlier rate $\rho$. 
The fourth examines the effect of parameter initialization, namely 
(iv) the initialization quality $\widetilde{R}_0$.

\paragraph{Data Generation}
We generate $n = 2{,}000$ samples in $\mathbb{R}^2$ with $K = 3$ cluster centers. 
The centers $\mu_1, \mu_2,$ and $\mu_3$ are located in $[-10, 10]^2$, with pairwise separation $\delta = 16$.
Inlier samples are distributed equally across the three clusters ($n_{\min}=666$ and $n_{\max}=667$) unless otherwise noted.
Each inlier receives Gaussian perturbation $\xi_i \sim \mathcal{N}(0, \widetilde{\varepsilon}_c^{2} I),$ where $I$ denotes the identity matrix, and we can control the within-cluster variation $\varepsilon_c$ in \cref{assumption:clustered_model} using $\widetilde{\varepsilon}_c$.
For notational simplicity, we write $\varepsilon_c$ instead of $\widetilde{\varepsilon}_c$ whenever the context is clear in this section.
We set the default outlier rate as $\rho = 0.05$ ($100$ outliers), and the outliers are distributed uniformly in $[-18, 18]^2$ subject to a minimum distance of $10$ from every cluster center.
All features are min–max scaled to $[0, 1/\sqrt{2}]$ so that their norms are at most one.

\paragraph{Model Training and Evaluation Metrics}

We train a single-hidden-layer autoencoder in \cref{eq:autoencoder} with $\mathrm{tanh}$ activation 
and hidden size $H = 32$.
Both the encoder and decoder weights are trained and initialized using the Kaiming uniform scheme \cite{7410480}. 
We use the Adam optimizer \cite{kingma2014adam} with a learning rate of $2 \times 10^{-4}$ and a batch size of $500$ for $1{,}000$ epochs, corresponding to $4{,}000$ updates.

At each epoch $\tau$, we compute the per-sample reconstruction error $\|x_i - F_{W_\tau}(x_i)\|_2^2$ as the outlier score and evaluate the corresponding AUROC, denoted by $\textup{AUROC}(\tau)$. 
We define the \textit{IM window} as the maximal contiguous interval of epochs for which $\textup{AUROC}(\tau) \geq 0.9$, which serves as an empirical approximation of $[T_1, T_2]$.

\paragraph{Results}

We present the results showing how $\varepsilon_c$, $(n_{\mathrm{min}}, n_{\mathrm{max}})$, $\rho$, and $\widetilde{R}_0$ each affect the IM effect.

(i) \textit{Effect of within-cluster variation $\varepsilon_c$}:
    We vary $\varepsilon_c \in \{1.0, 2.5, 4.0\}$ while keeping all other terms fixed.
    \cref{fig:sim_exp1} shows the data distributions and AUROC trajectories of the trained autoencoder.
    As suggested by \cref{thm_main}, tighter clusters (smaller $\varepsilon_c$) yield a stronger and more sustained IM effect: the peak AUROC increases from $0.944$ ($\varepsilon_c = 4.0$) to $0.975$ ($\varepsilon_c = 1.0$), and the IM window widens from $193$ to $372$ epochs.
    This result confirms that smaller within-cluster variation makes the IM effect more stable and more pronounced.

    \begin{figure}[h!]
        \centering
        \includegraphics[width=0.9\linewidth]{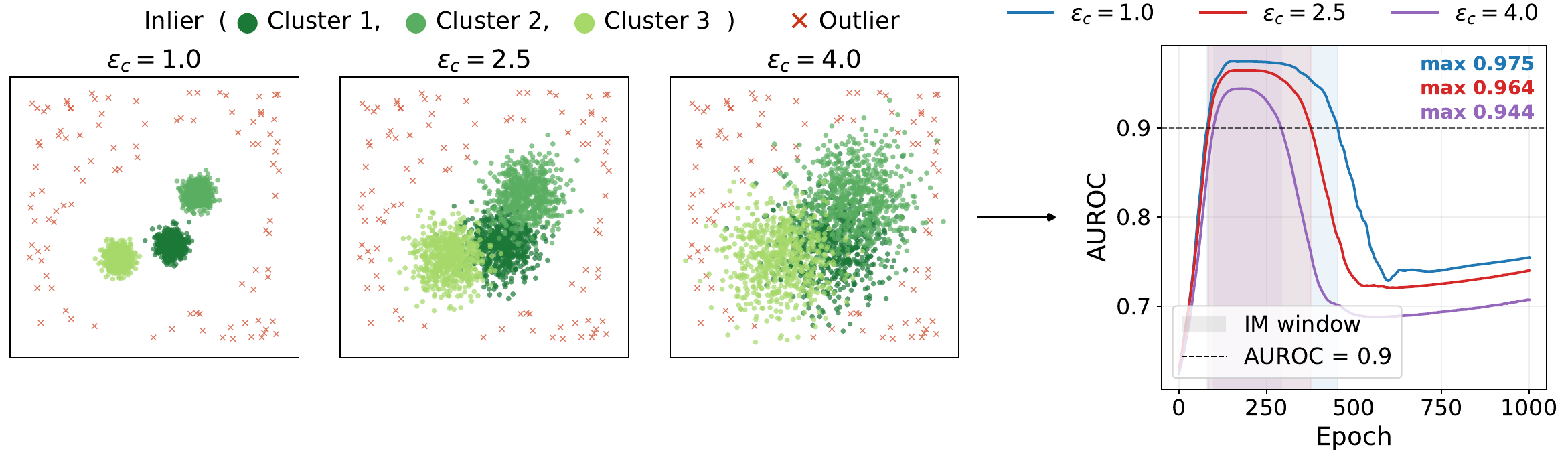}
        \vskip -0.1in
        \caption{
        Effect of within-cluster variation $\varepsilon_c$.
        (Left) Data distributions for $\varepsilon_c \in \{ 1.0, 2.5, 4.0 \}$.
        (Right) AUROC trajectories; shaded regions indicate the IM window.
        }
        \label{fig:sim_exp1}
        \vskip -0.1in
    \end{figure}

(ii) \textit{Effect of cluster-size balance}:
    With fixed $\varepsilon_c = 1.5$, we vary the cluster-size ratio $n_{\max}:n_{\min}$ across three levels: balanced ($1{:}1$), moderate ($11{:}1$), and heavy imbalance ($27{:}1$).
    The smallest cluster is assigned to a fixed spatial position across settings for fair comparison.
    \cref{fig:sim_exp2} shows that heavier imbalance degrades the persistence of the IM effect: the IM window narrows from $596$ epochs (balanced) to $462$ epochs (heavy imbalance).
    This is consistent with \cref{thm_main}, which suggests that imbalanced cluster sizes lead to a shorter duration of the IM effect.

    \begin{figure}[h!]
        \centering
        \includegraphics[width=0.9\linewidth]{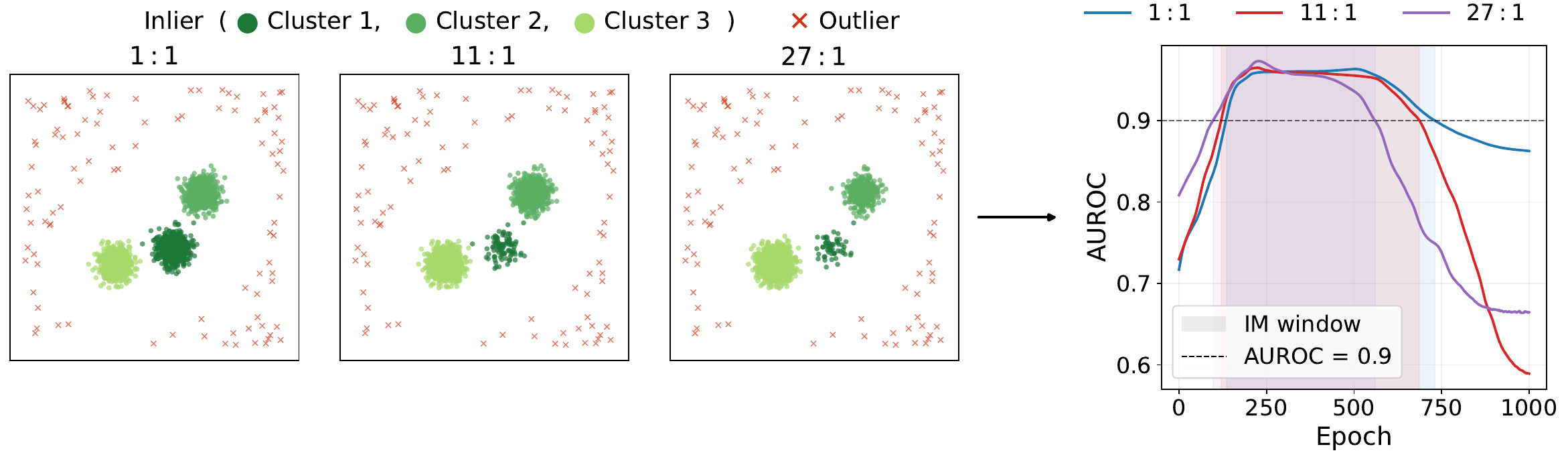}
        \vskip -0.1in
        \caption{
        Effect of cluster-size balance $n_{\max} : n_{\min}$.
        (Left) Data distributions for $(n_{\max} : n_{\min}) \in \{ (1:1), (11:1), (27:1) \}$.
        (Right) AUROC trajectories; shaded regions indicate the IM window.
        }
        \label{fig:sim_exp2}
        \vskip -0.1in
    \end{figure}

(iii) \textit{Effect of outlier rate $\rho$}:
    We fix $\varepsilon_c = 2.0$ and vary $\rho \in \{0.03,\, 0.10,\, 0.25\}$.
    \cref{fig:sim_exp4} shows that a higher $\rho$ substantially shortens the IM window from $391$ epochs ($\rho = 0.03$) to $94$ epochs ($\rho = 0.25$).
    This is consistent with our theoretical analysis, which suggests that a smaller $\rho$ leads to a prolonged IM effect.
    Interestingly, we also observe that strength of the IM effect improves as $\rho$ decreases, as the peak AUROC increases from $0.929$ to $0.968$.
    While this behavior is not directly explained by the current theory, it suggests that a lower $\rho$ may further facilitate the distinction between inliers and outliers in practice.

    \begin{figure}[h!]
        \centering
        \includegraphics[width=0.9\linewidth]{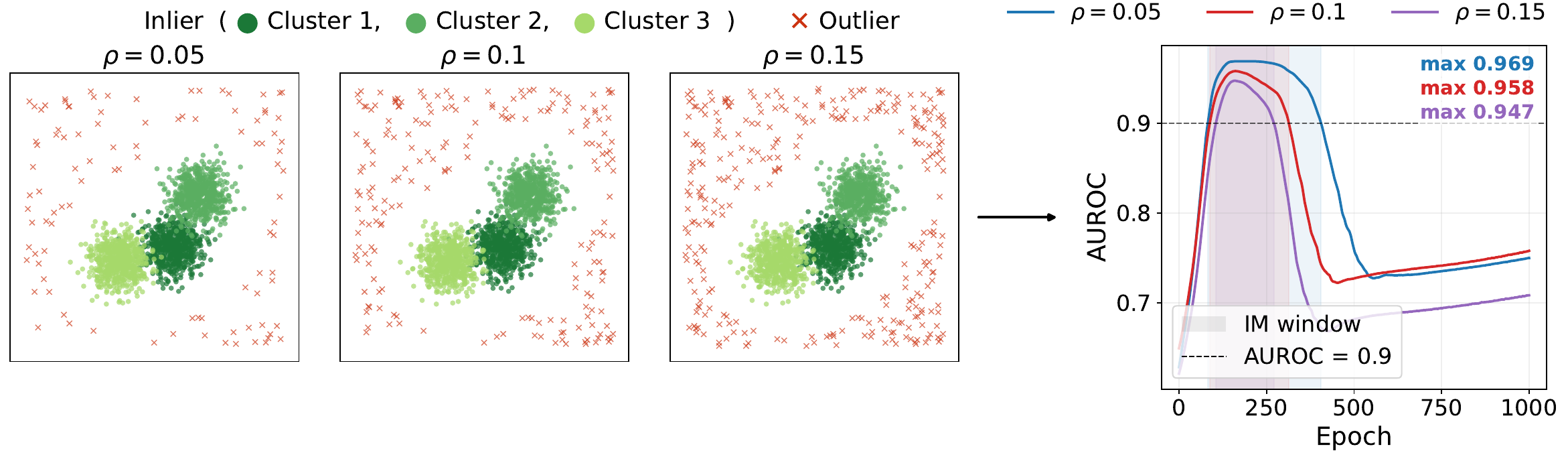}
        \vskip -0.1in
        \caption{
        Effect of outlier rate $\rho$.
        (Left) Data distributions for $\rho \in \{ 0.03, 0.10, 0.25 \}$.
        (Right) AUROC trajectories; shaded regions indicate the IM window.
        }
        \label{fig:sim_exp4}
    \end{figure}

(iv) \textit{Effect of initialization quality $\widetilde{R}_0$}:
    We fix $\varepsilon_c = 3.0$ and examine how the initial reconstruction quality affects the IM window.
    As discussed in \cref{sec:theory}, a low $\widetilde{R}_{0} = \|\mathcal{F}_{W_0}(\widetilde{X})-\widetilde{X}\|_2$ indicates a high quality of the initial parameter $W_0.$
    \textcolor{black}{\Cref{tab:sim_exp3} reports the IM window $[T_1,T_2]$ and the best AUROC for three initializations with different $\widetilde{R}_0$.}
    {\color{black}
    A smaller $\widetilde{R}_0$ produces a later $T_2$ and thus a wider IM window 
    at a similar best AUROC, supporting our theoretical finding that a smaller $\widetilde{R}_0$ lets the autoencoder capture the inlier cluster structure early, thereby prolonging the IM effect.
    Interestingly, although not predicted by the theory, $T_1$ also decreases as $\widetilde{R}_0$ becomes smaller, suggesting that better initialization accelerates the onset of the IM effect in practice.
    }
\begin{table}[h!]
    \vskip -0.15in
    \small
    \centering
    \caption{Effect of initialization error $\widetilde{R}_0$ on the IM window and detection performance (best AUROC).}
    \label{tab:sim_exp3}
    \vskip 0.1in
    \begin{tabular}{c|cccc}
        \toprule
        $\widetilde{R}_0$ & $T_1$ & $T_2$ & IM window length $(T_2-T_1)$ & Best AUROC \\
        \midrule
        0.40 & 37 & 509 & {472} & 0.961 \\
        0.55 & 56 & 433 & 377 & 0.962 \\
        0.60 & 94 & 456 & 362 & 0.964 \\
        \bottomrule
    \end{tabular}
    \vskip -0.15in
\end{table}

\subsection{Theory-Driven Guidelines for Real-World Data Analysis}
\label{sec:exp-real}

Our theoretical analysis in \cref{sec:main} explains the emergence of the IM effect and characterizes how its strength and persistence depend on data distribution and initialization.
Building on these insights, we propose two simple yet effective strategies for improving IM-based UOD solvers: (i) data preprocessing to shape the data distribution, and (ii) initialization schemes that guide optimization toward inlier-favorable dynamics.





\paragraph{Using Compact Representations}

While several key properties of the data distribution, such as the outlier rate $\rho$ and cluster sizes $(n_{\min}, n_{\max})$, are intrinsic and not directly controllable, data preprocessing can influence the within-cluster variation $\varepsilon_c$.
As suggested by our theoretical analysis, we again note that both the persistence and strength of the IM effect improve as $\varepsilon_c$ decreases.
In practice, this can be achieved by removing nuisance variations and noise that do not contribute to the underlying cluster structure of inliers.




A natural approach is to transform the data into a representation space where inlier clusters become more compact. 
One can leverage pre-trained foundation models used in the same domain, such as ViT \cite{DBLP:conf/iclr/DosovitskiyB0WZ21} for images, BERT \cite{DBLP:conf/naacl/DevlinCLT19} for text, and TabPFN \cite{DBLP:conf/iclr/Hollmann0EH23,DBLP:journals/corr/abs-2502-17361} for tabular data. 
These models, trained on large-scale real and synthetic datasets, capture meaningful structures while filtering out irrelevant variations, often yielding well-formed cluster structures.
When such models are unavailable, an alternative is to use self-supervised representations \citep{DBLP:journals/pami/GuiCZCSLT24}, such as contrastive learning \citep{DBLP:journals/corr/abs-1807-03748}.
Prior work shows that contrastive learning preserves cluster structure while reducing within-cluster variability \citep{DBLP:conf/iclr/0001YZJ23}.
As a result, these approaches reduce $\varepsilon_c$ and enhance both persistence and strength, improving IM-based outlier detection.
\paragraph{Initialization for Capturing Cluster Structure}

A smaller value of $\widetilde{R}_{0}$ leads to a wider IM window, extending the period during which inliers are reconstructed more accurately than outliers. 
This suggests that favorable initializations should capture the coarse structure of inliers while avoiding early fitting of sample-specific noise. 
In practice, however, obtaining such favorable initializations is challenging in the UOD setting, motivating the need for practical mechanisms to approximate them.

One practical approach is to refine the initialization by applying an exponential moving average (EMA, \cite{DBLP:conf/nips/TarvainenV17}) to the model parameters during the early training phase.
While EMA is commonly used to stabilize optimization, here it also plays an important role in suppressing the influence of outliers in the early stage.
By averaging parameters over successive updates, EMA biases the model toward consistently learned structures, which are dominated by inliers, and mitigates the effect of irregular gradient signals induced by outliers.
As a result, the resulting model better captures the underlying inlier structure and effectively reduces the initial reconstruction error $\widetilde{R}_0$.

\paragraph{\textcolor{black}{Empirical Validation on Real-World Data}}



We evaluate the two proposed strategies on the tabular and image datasets from \citep{han2022adbench} using two representative IM-based methods, ODIM \citep{DBLP:conf/icml/KimHLKK24} and ALTBI \citep{DBLP:conf/aaai/ChoHBK25}. 
For each method, we compare the vanilla version (using raw inputs and random initialization) with its variants incorporating representation learning and EMA warm-up. 
Specifically, we initialize the model parameters obtained by applying EMA with decay $0.999$ over the first $50$ training iterations. 
For representation learning, we use TabPFN \citep{DBLP:conf/iclr/Hollmann0EH23} and ViT \citep{DBLP:conf/iclr/DosovitskiyB0WZ21} embeddings for tabular and image data, respectively. 
For tabular data, TabPFN embeddings are applied only to datasets of moderate size due to scalability limitations. 
The UOD performance is measured by AUROC and AUPRC.
Further details are provided in \cref{sec-appen:data,sec-appen:implementation}.



\Cref{fig:real_auroc} shows that the proposed variants, which combine EMA warm-up and learned representations, outperform all baselines on the 47 tabular datasets and on \texttt{CIFAR10} and \texttt{FashionMNIST} (AUPRC results and ablation studies are given in \cref{sec-append:futehr_exp}). 
The only exception is \texttt{SVHN}, where the pre-trained ViT embeddings do not produce sufficiently compact inlier clusters.
Notably, the improvement is particularly pronounced on \texttt{CIFAR10}, where using pre-trained embeddings significantly outperforms using raw inputs.
This is expected, as background variations in raw images tend to dominate pairwise distances, whereas embeddings suppress such nuisance factors and better capture the underlying structure \cite{10.1109/TPAMI.2013.50,8503149}.


\begin{figure}[h]
    \vskip -0.1in
    \centering
    \includegraphics[width=0.85\linewidth]{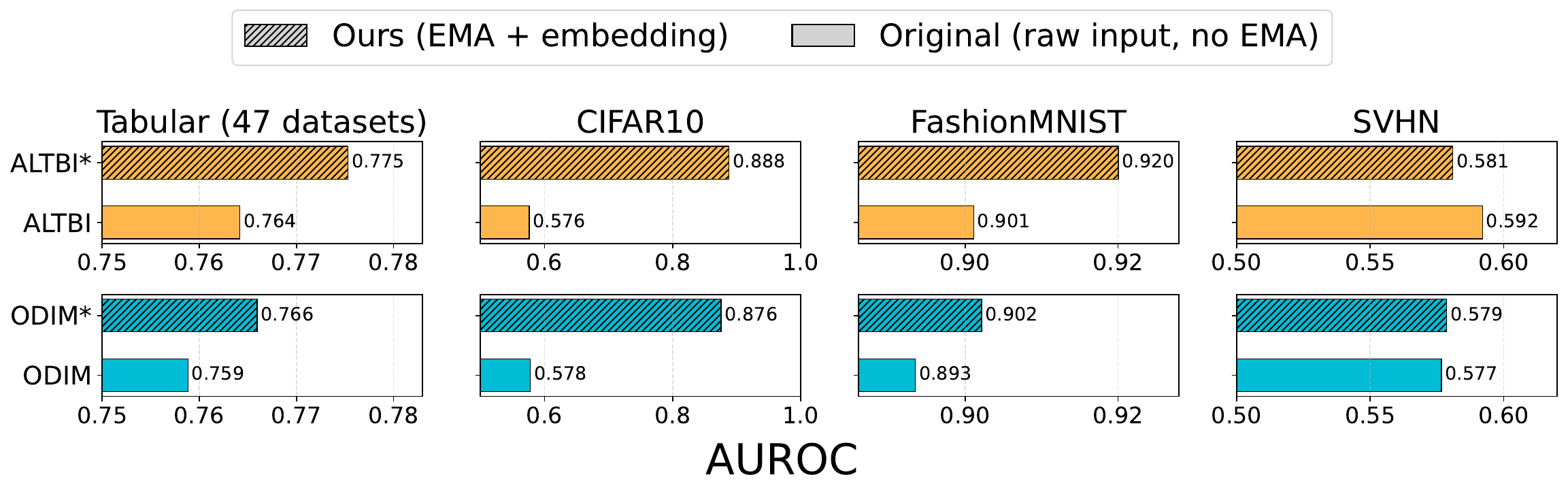}
    \caption{\textcolor{black}{
    Average over 47 tabular datasets (leftmost) and results on \texttt{CIFAR10}, \texttt{FashionMNIST}, and \texttt{SVHN}.
    Hatched bars (ALTBI* and ODIM*) are the proposed variants combining TabPFN/ViT embeddings with EMA warm-up, while the plain bars (ALTBI and ODIM) correspond to training on raw inputs without EMA.
   }
    }
    \label{fig:real_auroc}
    \vskip -0.1in
\end{figure}

Furthermore, we apply the proposed modifications to ALTBI and ODIM across all 57 datasets in \texttt{ADBench} to evaluate their overall effectiveness. 
We compare our methods against 22 existing baselines, including both classical machine-learning and deep-learning approaches. 
The results of the 22 baseline methods are directly taken from \citep{DBLP:journals/corr/abs-2305-18593}, while ALTBI and ODIM are evaluated using our own implementations. 
All reported results are averaged over three random initializations.
Note that the original ALTBI and ODIM already use embedding representations for image and text datasets; therefore, our modifications mainly consist of introducing the EMA warm-up and partially replacing tabular raw inputs with TabPFN embeddings.

The proposed guidelines improve the average AUROC of the original ALTBI from $0.757$ to $0.766$ and that of the original ODIM from $0.751$ to $0.757$. 
Moreover, among 22 existing baselines, the modified ALTBI achieves state-of-the-art performance with an average AUROC of $0.766$ and an average AUPRC of $0.352$. 
Detailed implementations and comprehensive experimental results, including per-dataset analyses and AUPRC comparisons, are provided in \cref{sec-append:futehr_exp}.  
Overall, these findings demonstrate the practical effectiveness and broad applicability of the proposed theoretical insights.

\section{Concluding Remarks}
\label{sec:conclusion}

We presented a theoretical study of the inlier-memorization (IM) effect, focusing on explaining why under-fitted autoencoders can reconstruct inliers earlier than outliers during training. 
Our analysis showed that the strength and persistence of the IM effect are governed by the data distribution and parameter initialization, providing a principled understanding of when IM-based outlier detection is effective. 
Simulation studies further verified the key implications of our theory under controlled settings.
Building on these insights, we proposed two simple yet practical strategies for real-world UOD: using compact representations and an EMA-based technique. 
Experiments on real datasets demonstrated that these strategies improve IM-based outlier detection performance, achieving the state-of-the-art results.

Future work includes extending the present analysis to deeper and jointly trained architectures, as well as to broader generative models such as VAEs, normalizing flows, and diffusion-based models. 
Another promising direction is to theoretically investigate more adaptive strategies for strengthening the IM effect, and to analyze how it can be integrated with active or semi-supervised frameworks for more label-efficient anomaly detection.



\begingroup
\small
\bibliographystyle{unsrt}
\bibliography{references}
\endgroup

\clearpage
\appendix

\crefalias{section}{appsec}
\crefalias{subsection}{appsubsec}
\crefalias{subsubsection}{appsubsubsec}

\numberwithin{equation}{section}
\numberwithin{figure}{section}
\numberwithin{table}{section}

\section{Theoretical Studies}\label{sec-appen:theory}

We present here the assumptions, formal statements, and proofs of \cref{thm_main} and \cref{cor:perfect_sep}, which establish the early inlier fitting behavior of the autoencoder. 
The proofs proceed in three steps. 

First, in \cref{sec:appen-rep_setup_theory}, we introduce the cluster-consistent subspace, study gradient descent on the cluster-centered inputs $\widetilde X$, and analyze the corresponding gradient descent dynamics.
Second, in \cref{sec:appen-transfer}, we show that the actual gradient descent trajectory on the observed inputs $X$ remains close to the cluster-centered dynamics, by establishing uniform bounds on the induced output and Jacobian discrepancies caused by $X-\widetilde X$.
Finally, in \cref{sec:appen-theory_mains}, we combine these ingredients to prove the occurrence of the IM effect first in the exact clustered case $(\varepsilon_c=0)$, and then in the general case $(\varepsilon_c>0)$. 
We note that our proof follows a similar structure to that of \cite{DBLP:conf/aistats/LiSO20}.


\subsection{Preliminaries and Technical Setup}

For a matrix $W=[w_1,\ldots,w_v]\in\mathbb R^{u\times v}$, we write $w:=\mathrm{vec}(W)\in\mathbb R^{uv}$ for the vector obtained by stacking the columns of $W$. 
We denote by $\mathrm{mat}(\cdot)$ the inverse operation of $\mathrm{vec}(\cdot)$, which reshapes a vector in $\mathbb R^{uv}$ back into a $u\times v$ matrix.

For given $W,$ we define the representative-input map and the observed-input map by
$$
\mathcal{F}_{W}(\widetilde{X}) := \begin{bmatrix}F_W(\widetilde x_1)\\ \vdots\\ F_W(\widetilde x_n)\end{bmatrix}\in\mathbb R^{np},
\quad
\mathcal{F}_{W}(X) := \begin{bmatrix}F_W(x_1)\\ \vdots\\ F_W(x_n)\end{bmatrix}\in\mathbb R^{np}.
$$
We also define
$$
X:=\begin{bmatrix}x_1\\ \vdots\\ x_n\end{bmatrix}\in\mathbb R^{np},
\quad
\widetilde X:=\begin{bmatrix}\widetilde x_1\\ \vdots\\ \widetilde x_n\end{bmatrix}\in\mathbb R^{np},
\quad
O:=\begin{bmatrix}o_1\\ \vdots\\ o_n\end{bmatrix}\in\mathbb R^{np},
\quad
\Xi:=\begin{bmatrix}\xi_1\\ \vdots\\ \xi_n\end{bmatrix}\in\mathbb R^{np}.
$$
Let
$$
E:=X-\widetilde X=\Xi+O.
$$
Note that under \cref{assumption:exact_clustered}, which leads to the idealized setting (no within-cluster variation) below, we have $E=O$.

For any $v\in\mathbb R^{np}$, we write
$$
v=\begin{bmatrix}v_1\\ \vdots\\ v_n\end{bmatrix},\quad v_i\in\mathbb R^p,
$$
and define
$
\|v\|_2:=\left(\sum_{i=1}^n\|v_i\|_2^2\right)^{1/2},
\|v\|_{2,\infty}:=\max_{1\le i\le n}\|v_i\|_2.
$
For matrices, let $\|\cdot\|$ denote the operator norm and $\|\cdot\|_F$ denote the Frobenius norm.

In addition to \cref{assumption:clustered_model} and \cref{assumption:outlier_rate}, we additionally impose the following assumptions for technical reasons. 
\cref{ass:bounded_sample_norm} states that the norm of each sample is uniformly bounded. 
We set the upper bound to one for mathematical convenience, although any positive constant can be used. 
\cref{assumption:outlier_magnitude} simply requires that the outlier perturbation is not excessively large, which is also a mild condition.

\begin{assumption}[Bounded Sample Norms]\label{ass:bounded_sample_norm}
    $ \|x_i\|_2 \le 1, \forall i\in [n]. $
\end{assumption}

\begin{assumption}[Bounded Outlier Magnitude]\label{assumption:outlier_magnitude}
    $\|o_i\|_2 \le \varepsilon_o$ for all $i \in \mathcal{O}.$
\end{assumption}

\subsection{Cluster-Centered Framework}
\label{sec:appen-rep_setup_theory}

\subsubsection{Idealized Cluster Setting and Subspace Projections}

For technical simplicity, we first consider the case $\varepsilon_c = 0$, so that $\xi_i = 0$ for all $i \in [n]$, and then extend the analysis to the general case $\varepsilon_c > 0$. 
The following assumption formalizes this idealized setting.

\begin{assumption}[No Within-Cluster Variation]\label{assumption:exact_clustered}
    $\xi_i=0$ for all $i=1,\dots,n$.
\end{assumption}


\begin{definition}[Support Subspace]\label{def:support_subspace}
Define
$$
S_+:=\left\{v=(v_1^\top,\ldots,v_n^\top)^\top\in\mathbb R^{np}:\exists u_1,\dots,u_K\in\mathbb R^p\text{ such that }v_i=u_k\text{ whenever }i\in\Lambda_k\right\}.
$$
We also define
$
S_-:=S_+^\perp,
$
and let $\Pi_{S_+}$ and $\Pi_{S_-}$ denote the orthogonal projections onto $S_+$ and $S_-$, respectively.
\end{definition}

\begin{remark}
$S_+$ consists of vectors whose components are identical within each cluster. 
That is, all entries corresponding to samples in the same cluster share the same vector value.
\end{remark}

\begin{lemma}[Explicit Expression for $\Pi_{S_+}$]\label{lem:projection_formula}
    For any $v=(v_1^\top,\dots,v_n^\top)^\top\in\mathbb R^{np}$ and any cluster $k\in[K]$, the orthogonal projection of $v$ onto $S_+$ is given by, 
    $$
    (\Pi_{S_+}v)_i=\frac{1}{n_k}\sum_{j\in\Lambda_k} v_j,
    \qquad \forall i\in\Lambda_k.
    $$
\end{lemma}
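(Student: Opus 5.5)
The claim is that the block-averaging map $P$ defined by $(Pv)_i=\frac{1}{n_k}\sum_{j\in\Lambda_k}v_j$ for $i\in\Lambda_k$ coincides with $\Pi_{S_+}$. We use the standard characterization of an orthogonal projection onto a closed subspace $S_+$ of the finite-dimensional space $\mathbb R^{np}$: $\Pi_{S_+}v$ is the unique vector $u$ such that $u\in S_+$ and $v-u\perp S_+$. It therefore suffices to check these two properties for $u=Pv$; uniqueness then gives $Pv=\Pi_{S_+}v$.

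\textbf{Membership.} Fix $k$. For every $i\in\Lambda_k$, the block $(Pv)_i$ equals the same vector $\bar v_k:=\frac{1}{n_k}\sum_{j\in\Lambda_k}v_j$. Taking $u_k=\bar v_k$ for each $k$ in \cref{def:support_subspace} shows $Pv\in S_+$. This uses that the $\Lambda_k$ partition $[n]$, which holds because $g$ is a function $[n]\to[K]$. It also uses $n_k\ge1$, which we take as implicit since $n_{\min}$ enters the bounds; the formula is only meaningful for nonempty clusters.

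\textbf{Orthogonality.} Let $s\in S_+$ have blocks $s_i=u_{g(i)}$. Then
\begin{equation*}
\langle v-Pv,\,s\rangle=\sum_{k=1}^K\Big\langle \sum_{i\in\Lambda_k}(v_i-\bar v_k),\,u_k\Big\rangle .
\end{equation*}
Each inner sum vanishes, because $\sum_{i\in\Lambda_k}v_i=n_k\bar v_k$. Hence $v-Pv\in S_+^\perp=S_-$.

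Combining the two properties with uniqueness proves the lemma. As an optional alternative, one can write $S_+=\mathrm{range}(B\otimes I_p)$, where $B\in\{0,1\}^{n\times K}$ is the cluster indicator matrix. Then $\Pi_{S_+}=B(B^\top B)^{-1}B^\top\otimes I_p$ with $B^\top B=\mathrm{diag}(n_1,\dots,n_K)$, which reproduces the same formula.

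There is no real obstacle in this argument. The only points needing care are the decomposition of the inner product over the partition $\{\Lambda_k\}_{k=1}^K$ and the implicit requirement $n_k\ge1$.
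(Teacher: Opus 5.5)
Your proof is correct and follows essentially the same route as the paper. You verify that the block-averaging map lands in $S_+$, show that $v-Pv$ is orthogonal to every element of $S_+$ by splitting the inner product over the partition $\{\Lambda_k\}_{k=1}^K$, and conclude by uniqueness of the orthogonal projection; your remarks on $n_k\ge 1$ and the $B(B^\top B)^{-1}B^\top\otimes I_p$ formula are harmless additions.
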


\begin{proof}
    Define $Pv\in\mathbb R^{np}$ by
    $
    (Pv)_i:=\frac{1}{n_k}\sum_{j\in\Lambda_k}v_j, \forall i \in \Lambda_k.
    $
    Clearly $Pv\in S_+$.
    It remains to show that $v-Pv\in S_+^\perp$.
    Fix $u\in S_+$.
    Then there exist $u_1,\dots,u_K\in\mathbb R^p$ such that $u_i=u_k$ whenever $i\in\Lambda_k$.
    Hence
    \begin{align*}
    \langle v-Pv,u\rangle
    &=
    \sum_{k=1}^K\sum_{i\in\Lambda_k}\left\langle v_i-\frac{1}{n_k}\sum_{j\in\Lambda_k}v_j,u_k\right\rangle
    =
    \sum_{k=1}^K\left\langle \sum_{i\in\Lambda_k}v_i-\sum_{i\in\Lambda_k}\frac{1}{n_k}\sum_{j\in\Lambda_k}v_j,u_k\right\rangle\\
    &=
    \sum_{k=1}^K\left\langle \sum_{i\in\Lambda_k}v_i-\sum_{j\in\Lambda_k}v_j,u_k\right\rangle
    =0.
    \end{align*}
    Therefore $v-Pv\in S_+^\perp$, so $Pv=\Pi_{S_+}v$.
\end{proof}


\begin{remark}[Interpretation of $\Pi_{S_+}(O)$]
By \cref{lem:projection_formula}, the operator $\Pi_{S_+}$ replaces, within each cluster, all block components corresponding to samples in the same cluster by their cluster-wise average. 
In particular, for the outlier perturbation matrix $O$, we have for each $i\in\Lambda_k$,
$$
(\Pi_{S_+}(O))_i=\frac{1}{n_k}\sum_{j\in\Lambda_k}o_j.
$$
Hence, $\Pi_{S_+}(O)$ represents the cluster-wise average of the outlier perturbations.
\end{remark}

\begin{lemma}[Uniform Bound for $\Pi_{S_+}(O)$]\label{lem:ae_projection}
    Under Assumptions~\ref{assumption:outlier_rate} and~\ref{assumption:outlier_magnitude}, the following inequality holds:
    $$
    \|\Pi_{S_+}(O)\|_{2,\infty}\le\rho \varepsilon_o .
    $$
\end{lemma}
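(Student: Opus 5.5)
The bound follows from the explicit projection formula in \cref{lem:projection_formula}, followed by a triangle inequality and the two assumptions. We fix an index $i\in[n]$ and let $k=g(i)$, so that $i\in\Lambda_k$. By \cref{lem:projection_formula}, the $i$-th block of $\Pi_{S_+}(O)$ is the cluster average
$$
(\Pi_{S_+}(O))_i=\frac{1}{n_k}\sum_{j\in\Lambda_k}o_j .
$$
Since $o_j=0$ for every inlier $j\notin\mathcal{O}$, the sum runs only over $j\in\mathcal{O}\cap\Lambda_k$.

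Next we apply the triangle inequality and \cref{assumption:outlier_magnitude}, which gives $\|o_j\|_2\le\varepsilon_o$ for each $j\in\mathcal{O}$:
$$
\|(\Pi_{S_+}(O))_i\|_2\le\frac{1}{n_k}\sum_{j\in\mathcal{O}\cap\Lambda_k}\|o_j\|_2\le\frac{|\mathcal{O}\cap\Lambda_k|}{n_k}\,\varepsilon_o .
$$
By \cref{assumption:outlier_rate}, the ratio $|\mathcal{O}\cap\Lambda_k|/n_k$ is at most $\rho$. Hence the right-hand side is at most $\rho\varepsilon_o$.

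This bound does not depend on $i$. Taking the maximum over $i\in[n]$ in the definition of $\|\cdot\|_{2,\infty}$ therefore gives $\|\Pi_{S_+}(O)\|_{2,\infty}\le\rho\varepsilon_o$.

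There is no real obstacle. The only point needing care is to restrict the average to the outlier indices before bounding, so that the factor $1/n_k$ combines with the count $|\mathcal{O}\cap\Lambda_k|$ to give the outlier rate.
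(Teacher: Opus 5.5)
Your proof is correct and follows the paper's argument step for step. You use the projection formula to write each block as a cluster average, restrict the sum to $\mathcal{O}\cap\Lambda_k$, apply the triangle inequality with $\|o_j\|_2\le\varepsilon_o$, invoke the outlier-rate bound, and take the maximum over $i$.
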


\begin{proof}
    Let $P:=\Pi_{S_+}$.
    By \cref{lem:projection_formula}, for any $i\in\Lambda_k$,
    $
    (PO)_i=\frac{1}{n_k}\sum_{j\in\Lambda_k}o_j.
    $
    Hence
    \begin{align*}
    \|(PO)_i\|_2
    \le\frac{1}{n_k}\sum_{j\in\Lambda_k}\|o_j\|_2
    =\frac{1}{n_k}\sum_{j\in\mathcal{O}\cap\Lambda_k}\|o_j\|_2
    \le\frac{|\mathcal{O}\cap\Lambda_k|}{n_k} \varepsilon_o 
    \le\rho \varepsilon_o .
    \end{align*}
    Taking the maximum over all $i$ gives the result.
\end{proof}

\begin{lemma}[Uniform Bound for $\Pi_{S_+}(\Xi)$]\label{lem:xi_projection}
Under \cref{assumption:clustered_model}, the following inequality holds:
$$
\|\Pi_{S_+}(\Xi)\|_{2,\infty}\le \varepsilon_c .
$$
\end{lemma}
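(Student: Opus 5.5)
The argument mirrors the proof of \cref{lem:ae_projection}, with the outlier perturbations $o_j$ replaced by the within-cluster variations $\xi_j$ and the outlier-rate factor no longer needed. Write $P:=\Pi_{S_+}$ and fix an index $i\in[n]$, with $k:=g(i)$ so that $i\in\Lambda_k$. By the explicit formula in \cref{lem:projection_formula}, the $i$-th block of $P\Xi$ is the cluster-wise average
$$
(P\Xi)_i=\frac{1}{n_k}\sum_{j\in\Lambda_k}\xi_j .
$$

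Next we bound this average using the triangle inequality and \cref{assumption:clustered_model}, which gives $\|\xi_j\|_2\le\varepsilon_c$ for every $j$:
$$
\|(P\Xi)_i\|_2\le\frac{1}{n_k}\sum_{j\in\Lambda_k}\|\xi_j\|_2\le\frac{1}{n_k}\cdot n_k\,\varepsilon_c=\varepsilon_c .
$$
Here we use $n_k=|\Lambda_k|$, and $n_k\ge 1$ because $i\in\Lambda_k$, so the average is well defined.

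Finally, the bound $\|(P\Xi)_i\|_2\le\varepsilon_c$ holds uniformly in $i$. Taking the maximum over $i\in[n]$ in the definition of $\|\cdot\|_{2,\infty}$ therefore yields $\|\Pi_{S_+}(\Xi)\|_{2,\infty}\le\varepsilon_c$.

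There is no real obstacle in this argument. The only point to check is that the averaging uses the correct normalization $n_k=|\Lambda_k|$, so that the sum of $n_k$ terms each bounded by $\varepsilon_c$ divides back to exactly $\varepsilon_c$.
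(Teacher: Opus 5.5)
Your proposal is correct and follows the paper's proof essentially verbatim: apply the cluster-average formula from \cref{lem:projection_formula}, bound the average by the triangle inequality together with $\|\xi_j\|_2\le\varepsilon_c$, and take the maximum over $i$. Your remark that $n_k\ge 1$ because $i\in\Lambda_k$ is a small extra check that the paper leaves implicit.
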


\begin{proof}
    Let $P:=\Pi_{S_+}$.
    By \cref{lem:projection_formula}, for any $i\in\Lambda_k$,
    $
    (P\Xi)_i=\frac{1}{n_k}\sum_{j\in\Lambda_k}\xi_j.
    $
    Hence
    $$
    \|(P\Xi)_i\|_2\le\frac{1}{n_k}\sum_{j\in\Lambda_k}\|\xi_j\|_2\le\frac{1}{n_k}\sum_{j\in\Lambda_k} \varepsilon_c = \varepsilon_c .
    $$
    Taking the maximum over $i$ completes the proof.
\end{proof}


\subsubsection{Jacobian Structure on the Cluster-Centered Inputs}

We define the reconstruction loss using cluster-centered inputs with observed targets by
$
\widetilde{\mathcal L}(W):=\frac{1}{2} \| \mathcal{F}_{W}(\widetilde{X}) - X \|_2^2
$
as well as the standard reconstruction loss by
$
\mathcal L(W):=\frac{1}{2}\|\mathcal{F}_{W}(X)-X\|_2^2.
$
The cluster-centered loss replaces each input sample by its corresponding cluster-centered input while preserving the original reconstruction targets.

For a given $W$, let $\widetilde J(W)$ and $J(W)$ denote the Jacobian matrices of the cluster-centered and observed-input maps, respectively:
$$
\widetilde J(W):=\frac{\partial \mathcal{F}_{W}(\widetilde{X})}{\partial w}\in\mathbb R^{np\times Hp}
\quad \text{ and } \quad
J(W):=\frac{\partial \mathcal{F}_{W}(X)}{\partial w}\in\mathbb R^{np\times Hp}.
$$
In addition, for two parameter matrices $W_1,W_2$, define their corresponding averaged Jacobians by
$$
\widetilde{J}(W_1,W_2):=\int_0^1 \widetilde{J} ( W_2+t(W_1-W_2))dt
$$
and
$$
J(W_1,W_2):=\int_0^1 J (W_2+t(W_1-W_2))dt.
$$
The following lemma shows that the averaged Jacobians yield an exact mean-value representation of the change in the autoencoder outputs between two parameter values.

\begin{lemma}[Mean-Value Jacobian Representation]\label{lem:avg_jacobian}
For any $W_1,W_2\in\mathbb R^{H\times p}$, then we have
$$
\mathcal{F}_{W_{1}}(\widetilde{X}) - \mathcal{F}_{W_{2}}(\widetilde{X}) = \widetilde{J}(W_1, W_2)(\mathrm{vec}(W_1)-\mathrm{vec}(W_2))
$$
and
$$
\mathcal{F}_{W_{1}}(X) - \mathcal{F}_{W_{2}}(X) = J(W_1, W_2)(\mathrm{vec}(W_1)-\mathrm{vec}(W_2))
$$
\end{lemma}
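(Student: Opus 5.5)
The plan is to apply the fundamental theorem of calculus along the line segment joining the two parameters. Set $W(t):=W_2+t(W_1-W_2)$ for $t\in[0,1]$, so that $\mathrm{vec}(W(t))=\mathrm{vec}(W_2)+t\,(\mathrm{vec}(W_1)-\mathrm{vec}(W_2))$, and define $g(t):=\mathcal{F}_{W(t)}(\widetilde X)\in\mathbb R^{np}$. We use that $\phi$ is continuously differentiable, which is implicit in the paper's use of Jacobians and holds for the activations considered, such as $\tanh$. Then each block $F_{W(t)}(\widetilde x_i)=A\phi(W(t)\widetilde x_i)$ is a $C^1$ function of $t$, being a composition of an affine map in $t$, the elementwise $C^1$ map $\phi$, and the fixed linear map $A$.

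By the chain rule,
\[
g'(t)=\widetilde J(W(t))\,\frac{d}{dt}\mathrm{vec}(W(t))=\widetilde J(W(t))\,(\mathrm{vec}(W_1)-\mathrm{vec}(W_2)).
\]
Since $g'$ is continuous on $[0,1]$, the fundamental theorem of calculus applies componentwise and gives
\[
g(1)-g(0)=\int_0^1 g'(t)\,dt=\Big(\int_0^1\widetilde J(W(t))\,dt\Big)(\mathrm{vec}(W_1)-\mathrm{vec}(W_2)).
\]
The constant vector can be pulled out of the integral because the integral of a matrix-valued function is taken entrywise and is linear. Finally, $g(1)=\mathcal{F}_{W_1}(\widetilde X)$ and $g(0)=\mathcal{F}_{W_2}(\widetilde X)$, and the integral in the last display is exactly $\widetilde J(W_1,W_2)$ by its definition. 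This proves the first identity.

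The second identity follows by repeating the argument verbatim with $\widetilde X$ replaced by $X$ and $\widetilde J$ replaced by $J$; nothing in the argument used the structure of the inputs.

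There is no real obstacle here. The only point needing care is the regularity of $\phi$, which is needed so that $t\mapsto J(W(t))$ is continuous and therefore Riemann integrable. If $\phi$ were merely Lipschitz, for example ReLU, the same identity would still hold by absolute continuity of $g$, with the integral understood in the Lebesgue sense and the Jacobian taken almost everywhere along the segment.
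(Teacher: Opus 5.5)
Your proposal is correct and follows the paper's proof essentially verbatim: you parametrize the segment $W_2+t(W_1-W_2)$, apply the chain rule to get $\widetilde J(W_2+t(W_1-W_2))(\mathrm{vec}(W_1)-\mathrm{vec}(W_2))$ as the derivative, integrate over $[0,1]$, pull the constant vector out of the integral, and treat the $X$ case the same way. Your explicit note that $\phi$ must be $C^1$ to justify the fundamental-theorem-of-calculus step is a point the paper leaves implicit; that regularity follows, for instance, from the bounded $\phi''$ in the activation assumption.
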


\begin{proof}
    Let $w_1:=\mathrm{vec}(W_1)$, $w_2:=\mathrm{vec}(W_2)$, and $\Delta w:=w_1-w_2$.
    Then, we define
    $ \gamma(t) := \mathcal{F}_{\mathrm{mat}(w_2+t\Delta w)}(\widetilde{X}), t\in[0,1].$
    By the chain rule,
    $ \gamma'(t) = \widetilde{J} (W_2+t(W_1-W_2))\Delta w. $
    Therefore
    \begin{align*}
        & \mathcal{F}_{W_{1}}(\widetilde{X}) - \mathcal{F}_{W_{2}}(\widetilde{X})
        =\gamma(1)-\gamma(0)
        =\int_0^1\gamma'(t)dt
        =\int_0^1 \widetilde{J} (W_2+t(W_1-W_2))\Delta wdt\\
        & = \left(\int_0^1 \widetilde{J} (W_2+t(W_1-W_2))dt\right)\Delta w
        = \widetilde{J} (W_1,W_2)(\mathrm{vec}(W_1)-\mathrm{vec}(W_2)).
    \end{align*}
    We can similarly prove the claim $\mathcal{F}_{W_{1}}(X) - \mathcal{F}_{W_{2}}(X) = J(W_1, W_2)(\mathrm{vec}(W_1)-\mathrm{vec}(W_2))$ using the same argument.
\end{proof}

We define the cluster-center map by
$$
\mathcal{F}_{W}(\mu) := \begin{bmatrix}F_W(\mu_1)\\ \vdots\\ F_W(\mu_K)\end{bmatrix}\in\mathbb R^{Kp},
$$
which stacks the autoencoder outputs evaluated at the cluster centers. 
We further define its Jacobian matrix with respect to the parameter vector $w$ by
$$
J_c(W):=\frac{\partial \mathcal{F}_{W}(\mu)}{\partial w}\in\mathbb R^{Kp\times Hp}.
$$
The following assumption imposes standard local regularity conditions on the Jacobian at the cluster centers, ensuring stable and well-conditioned optimization dynamics near the initialization.

\begin{assumption}[Cluster-Center Jacobian]\label{assumption:center_jacobian}
    Fix an initialization $W_0\in\mathbb R^{H\times p}$ and a radius $R_{\mathrm{loc}}>0$.
    Define
    $$
    \mathcal D(W_0,R_{\mathrm{loc}}):=\{W\in\mathbb R^{H\times p}:\|W-W_0\|_F\le R_{\mathrm{loc}}\}.
    $$
    Assume that there exist constants $\alpha_c,\beta_c,L_c>0$ such that, for every $W\in\mathcal D(W_0,R_{\mathrm{loc}})$,
    \begin{enumerate}[label=(\roman*)]
    \item for every unit vector $z\in\mathbb R^{Kp}$, $\alpha_c\le\|J_c(W)^\top z\|_2\le\beta_c$;
    \item for all $W_1,W_2\in\mathcal D(W_0,R_{\mathrm{loc}})$, $\|J_c(W_1)-J_c(W_2)\|\le L_c\|W_1-W_2\|_F$.
    \end{enumerate}
\end{assumption}

Under \cref{assumption:center_jacobian}, the cluster-representative Jacobian inherits the same structured regularity properties, summarized in the following proposition.


\begin{proposition}[Jacobian Structure of Cluster-Representative Inputs]\label{prop:clean_jacobian}
    Assume \cref{assumption:center_jacobian}.
    Fix
    $$
    \alpha:=\sqrt{n_{\min}}\alpha_c,
    \beta:=\sqrt{n_{\max}}\beta_c,
    L:=\sqrt{n_{\max}}L_c.
    $$
    Then, for every $W\in\mathcal D(W_0,R_{\mathrm{loc}})$,
    \begin{enumerate}[label=(\roman*)]
        \item $\mathrm{range}(\widetilde J(W))\subset S_+$;
        \item for every unit vector $u\in S_+$, $\alpha\le\|\widetilde J(W)^\top u\|_2\le\beta$;
        \item for all $W_1,W_2\in\mathcal D(W_0,R_{\mathrm{loc}})$, $\|\widetilde J(W_1)-\widetilde J(W_2)\|\le L\|W_1-W_2\|_F$.
    \end{enumerate}
\end{proposition}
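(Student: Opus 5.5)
The whole argument rests on one structural identity: because $\widetilde x_i=\mu_{g(i)}$, the representative map $\mathcal F_W(\widetilde X)$ is obtained from $\mathcal F_W(\mu)$ by a fixed linear "replication" operator. Define $M\in\mathbb R^{np\times Kp}$ by $(Mv)_i:=v_{g(i)}$ for $v=(v_1^\top,\ldots,v_K^\top)^\top\in\mathbb R^{Kp}$. Then
\[
\mathcal F_W(\widetilde X)=M\,\mathcal F_W(\mu).
\]
Since $M$ does not depend on $W$, differentiating gives
\[
\widetilde J(W)=M\,J_c(W)\quad\text{for all }W.
\]
I will first record this identity and then read off the three claims from the properties of $M$ together with \cref{assumption:center_jacobian}.

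For (i), $\mathrm{range}(M)=S_+$ directly from \cref{def:support_subspace}: a vector lies in $S_+$ exactly when its $i$-th block depends only on $g(i)$. Hence $\mathrm{range}(\widetilde J(W))\subset\mathrm{range}(M)=S_+$.

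For (ii), I will compute $M^\top$. It sums blocks within each cluster, $(M^\top u)_k=\sum_{i\in\Lambda_k}u_i$, so $M^\top M=\mathrm{diag}(n_1I_p,\ldots,n_KI_p)$. Now take a unit $u\in S_+$ with $u_i=u_k$ for $i\in\Lambda_k$. Then $M^\top u=(n_1u_1,\ldots,n_Ku_K)$ and $1=\|u\|_2^2=\sum_k n_k\|u_k\|_2^2$. It follows that
\[
\|M^\top u\|_2^2=\sum_k n_k^2\|u_k\|_2^2\in[n_{\min},n_{\max}].
\]
Writing $z:=M^\top u/\|M^\top u\|_2$, which is a unit vector in $\mathbb R^{Kp}$, we have
\[
\|\widetilde J(W)^\top u\|_2=\|J_c(W)^\top M^\top u\|_2=\|M^\top u\|_2\,\|J_c(W)^\top z\|_2.
\]
By \cref{assumption:center_jacobian}(i), this lies in $[\sqrt{n_{\min}}\alpha_c,\sqrt{n_{\max}}\beta_c]=[\alpha,\beta]$.

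For (iii), $\|M\|=\sqrt{\|M^\top M\|}=\sqrt{n_{\max}}$. Therefore
\[
\|\widetilde J(W_1)-\widetilde J(W_2)\|\le\|M\|\,\|J_c(W_1)-J_c(W_2)\|\le\sqrt{n_{\max}}L_c\|W_1-W_2\|_F=L\|W_1-W_2\|_F .
\]

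No step is really hard. The only point needing care is (ii): the lower bound has to use the fact that $u\in S_+$. A general unit $u$ can satisfy $M^\top u=0$, so the restriction to $S_+$ is essential. I also need to normalize $M^\top u$ correctly before invoking the unit-vector bound in \cref{assumption:center_jacobian}(i), and handle $M^\top u\neq0$. The latter is automatic, since $\|M^\top u\|_2^2\ge n_{\min}>0$.
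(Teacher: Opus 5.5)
Your proposal is correct and follows essentially the same route as the paper: your replication operator $M$ is exactly the paper's duplication operator $D$. From the identity $\widetilde J(W)=M J_c(W)$ you obtain (i)--(iii) via $\mathrm{range}(M)=S_+$, via $M^\top u=(n_1u_1,\ldots,n_Ku_K)$ with $\|M^\top u\|_2^2\in[n_{\min},n_{\max}]$ for unit $u\in S_+$ followed by normalization, and via $\|M\|=\sqrt{n_{\max}}$, just as the paper does.
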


\begin{proof}
    Define the duplication operator $D:\mathbb R^{Kp}\to\mathbb R^{np}$ by
    $ (Dz)_i:=z_k $ whenever $i\in\Lambda_k.$
    Then $D(\mathbb R^{Kp})=S_+, \mathcal{F}_{W}(\widetilde{X}) = D \mathcal{F}_{W}(\mu)W),$ and $\widetilde J(W)=DJ_c(W).$
    This proves $\mathrm{range}(\widetilde J(W))\subset S_+$.
    
    Now fix a unit vector $u\in S_+$.
    Since $u\in S_+$, there exist vectors $u_1,\dots,u_K\in\mathbb R^p$ such that $u_i=u_k$ for all $i\in\Lambda_k$.
    Then
    $
    \|u\|_2^2=\sum_{k=1}^Kn_k\|u_k\|_2^2=1
    $
    and
    $$
    D^\top u=\begin{bmatrix}n_1u_1\\ \vdots\\ n_Ku_K\end{bmatrix}.
    $$
    Hence, we get
    $
    \|D^\top u\|_2^2=\sum_{k=1}^Kn_k^2\|u_k\|_2^2.
    $
    Since $\sum_{k=1}^Kn_k\|u_k\|_2^2=1$, we also get
    $ n_{\min}\le\|D^\top u\|_2^2\le n_{\max} $
    and
    $ \widetilde J(W)^\top u=J_c(W)^\top D^\top u. $
    Let $z:=D^\top u/\|D^\top u\|_2$.
    Then $\|z\|_2=1$, so by \cref{assumption:center_jacobian},
    $$
    \alpha_c\|D^\top u\|_2\le\|\widetilde J(W)^\top u\|_2\le\beta_c\|D^\top u\|_2.
    $$
    Using the bound on $\|D^\top u\|_2$ yields
    $
    \sqrt{n_{\min}}\alpha_c\le\|\widetilde J(W)^\top u\|_2\le\sqrt{n_{\max}}\beta_c.
    $    
    Finally, we have
    $$
    \widetilde J(W_1)-\widetilde J(W_2)=D(J_c(W_1)-J_c(W_2)).
    $$
    Since $\|D\|=\sqrt{n_{\max}}$, we obtain
    $$
    \|\widetilde J(W_1)-\widetilde J(W_2)\|\le\sqrt{n_{\max}}\|J_c(W_1)-J_c(W_2)\|\le\sqrt{n_{\max}}L_c\|W_1-W_2\|_F.
    $$
    This completes the proof.
\end{proof}

\subsubsection{Gradient Descent Dynamics on Cluster-Centered Inputs}

Let $(\widetilde W_\tau)_{\tau\ge0}$ denote the matrix-form gradient descent sequence for the cluster-centered loss $\widetilde{\mathcal L}(W)$, initialized at $\widetilde W_0=W_0$, and generated by
$$
\widetilde w_{\tau+1}
=
\widetilde w_\tau-\eta \widetilde J(\widetilde W_\tau)^\top \widetilde r_\tau,
\quad \text{where} \quad
\widetilde r_\tau:=\mathcal{F}_{\widetilde W_\tau}(\widetilde X)-X,
$$
and 
$
\widetilde W_\tau:=\mathrm{mat}(\widetilde w_\tau).
$
Let
$
 R_{0} :=\|\widetilde r_0\|_2,
 {\color{black}{\widehat R_0:=\widetilde{R}_{0}+\sqrt n \varepsilon_c+\sqrt{\rho n} \varepsilon_o}} 
$
and for any $\nu>0$, we also define the iteration point
$$
T_\star(\nu):=\left\lceil\frac{2}{\eta\alpha^2}\log_+\left(\frac{ {\color{black} \widehat R_0} }{\nu}\right)\right\rceil.
$$
To ensure that the gradient descent trajectory remains within the local neighborhood where the Jacobian regularity conditions hold, we impose the following radius condition.

\begin{assumption}[Radius condition]\label{assumption:radius_condition}
    $ R_{\mathrm{loc}}\ge\frac{2\beta}{\alpha^2} {\color{black} \widehat R_0} . $
\end{assumption}

\begin{remark}
{\color{black}
    Recall that
    $ \widetilde{R}_{0} = \|\mathcal F_{W_0}(\widetilde X)-\widetilde X\|_2. $
    Since $X=\widetilde X+E=\widetilde X+\Xi+O,$ we have the following relation between $R_0$ and $\widehat R_0$:
    $$ R_0 \le \widetilde{R}_{0}+\|\Xi\|_2+\|O\|_2 \le \widetilde{R}_{0}+\sqrt n \varepsilon_c+\sqrt{\rho n} \varepsilon_o=\widehat R_0.$$
}
\end{remark}

The following lemma summarizes the key gradient descent dynamics under the cluster-centered loss.
\begin{lemma}[Gradient Descent Dynamics on Cluster Representatives]
\label{lem:clean_meta}
    Assume \cref{assumption:center_jacobian} and \cref{assumption:radius_condition}.
    If
    $
    0<\eta\le\min\left\{\frac{\alpha^2}{8\beta^4},\frac{\alpha^2}{2L\beta^2 {\color{black} \widehat R_0} }\right\},
    $
    then we have for all $\tau\ge0$ that
    $
    \Pi_{S_-}(\widetilde r_\tau)=\Pi_{S_-}(\widetilde r_0),
    $
    $
    \|\Pi_{S_+}(\widetilde r_\tau)\|_2^2\le(1-\eta\alpha^2)^\tau\|\Pi_{S_+}(\widetilde r_0)\|_2^2,
    $
    and
    $
    \|\widetilde W_\tau-W_0\|_F\le\frac{2\beta}{\alpha^2} {\color{black} \widehat R_0} .
    $
    Moreover, if $\|\Pi_{S_+}(E)\|_{2,\infty}\le\nu$, then every $\tau\ge T_\star(\nu)$ satisfies
    $$
    \| \mathcal{F}_{\widetilde{W}_{\tau}}(\widetilde{X}) - \widetilde X\|_{2,\infty}\le2\nu.
    $$
\end{lemma}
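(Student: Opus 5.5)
Throughout, write $\widetilde J_\tau:=\widetilde J(\widetilde W_\tau)$ and argue by induction on $\tau$, in the style of \cite{DBLP:conf/aistats/LiSO20}.

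\textbf{Induction hypothesis.} For all $s\le\tau$ we have $\widetilde W_s\in\mathcal D(W_0,R_{\mathrm{loc}})$ and
$$
\|\Pi_{S_+}(\widetilde r_s)\|_2\le(1-\eta\alpha^2/2)^s\|\Pi_{S_+}(\widetilde r_0)\|_2 .
$$

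\textbf{Step 1: the $S_-$ component is frozen.} By Proposition~\ref{prop:clean_jacobian}(i), $\mathrm{range}(\widetilde J(W))\subset S_+$ for every $W$ in the ball. This has two consequences. First, $\widetilde J_\tau^\top\widetilde r_\tau=\widetilde J_\tau^\top\Pi_{S_+}(\widetilde r_\tau)$, so only the $S_+$ part of the residual drives the update. Second, by Lemma~\ref{lem:avg_jacobian}, $\widetilde r_{\tau+1}-\widetilde r_\tau=\widetilde J(\widetilde W_{\tau+1},\widetilde W_\tau)(\widetilde w_{\tau+1}-\widetilde w_\tau)$. The averaged Jacobian also has range in $S_+$, provided the segment between the two iterates stays in the (convex) ball. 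Hence $\Pi_{S_-}(\widetilde r_{\tau+1})=\Pi_{S_-}(\widetilde r_\tau)$, and inductively $\Pi_{S_-}(\widetilde r_\tau)=\Pi_{S_-}(\widetilde r_0)$.

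\textbf{Step 2: contraction of the $S_+$ component.} Let $p_\tau:=\Pi_{S_+}(\widetilde r_\tau)$. Then
$$
p_{\tau+1}=\bigl(I-\eta\bar J\widetilde J_\tau^\top\bigr)p_\tau,\qquad \bar J:=\widetilde J(\widetilde W_{\tau+1},\widetilde W_\tau).
$$
Split this as $(I-\eta\widetilde J_\tau\widetilde J_\tau^\top)p_\tau-\eta(\bar J-\widetilde J_\tau)\widetilde J_\tau^\top p_\tau$.

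For the main term, Proposition~\ref{prop:clean_jacobian}(ii) restricted to $S_+$ and $\eta\beta^2\le1$ (which follows from the first stepsize condition) give
$$
\|(I-\eta\widetilde J_\tau\widetilde J_\tau^\top)p\|_2^2\le\bigl(1-2\eta\alpha^2+\eta^2\beta^4\bigr)\|p\|_2^2 .
$$

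For the perturbation term, Lipschitzness gives $\|\bar J-\widetilde J_\tau\|\le L\eta\beta\|p_\tau\|_2/2$. Using $\|p_\tau\|_2\le R_0\le\widehat R_0$, this term is bounded by $\eta^2L\beta^2\widehat R_0\|p_\tau\|_2/2$.

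With $\eta\le\alpha^2/(8\beta^4)$ and $\eta\le\alpha^2/(2L\beta^2\widehat R_0)$, the quadratic and Lipschitz corrections are absorbed. I expect $\|p_{\tau+1}\|_2\le(1-\eta\alpha^2/2)\|p_\tau\|_2$, whose square is at most $(1-\eta\alpha^2)$ times $\|p_\tau\|_2^2$ up to constants that the stated stepsize bounds should make fit. The constant bookkeeping here, matching exactly $(1-\eta\alpha^2)^\tau$ for the squared norm, is the main obstacle, and I would tune the splitting (for instance, comparing $\bar J\widetilde J_\tau^\top$ to $\widetilde J_\tau\widetilde J_\tau^\top$ in a symmetric form) until the constants close.

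\textbf{Step 3: staying in the ball.} Sum the parameter steps:
$$
\|\widetilde W_\tau-W_0\|_F\le\sum_s\eta\beta\|p_s\|_2\le\eta\beta\widehat R_0\sum_s(1-\eta\alpha^2/2)^s=\frac{2\beta}{\alpha^2}\widehat R_0 .
$$
By Assumption~\ref{assumption:radius_condition} this is at most $R_{\mathrm{loc}}$, which closes the induction (including convexity of the ball for the averaged Jacobians in Step 1).

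\textbf{Step 4: the $\|\cdot\|_{2,\infty}$ bound.} Write $\widetilde r_\tau=\mathcal F_{\widetilde W_\tau}(\widetilde X)-\widetilde X-E$. Since $\mathcal F_{\widetilde W_\tau}(\widetilde X)-\widetilde X\in S_+$,
$$
\Pi_{S_+}(\widetilde r_\tau)=\mathcal F_{\widetilde W_\tau}(\widetilde X)-\widetilde X-\Pi_{S_+}(E).
$$
Hence
$$
\|\mathcal F_{\widetilde W_\tau}(\widetilde X)-\widetilde X\|_{2,\infty}\le\|p_\tau\|_2+\|\Pi_{S_+}(E)\|_{2,\infty}\le\|p_\tau\|_2+\nu .
$$
By Step 2, $\|p_\tau\|_2\le e^{-\eta\alpha^2\tau/2}\widehat R_0$, which is at most $\nu$ once $\tau\ge T_\star(\nu)$. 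This gives the bound $2\nu$.
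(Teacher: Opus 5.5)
Your four steps follow the paper's own proof: the $S_-$ component is frozen, $p_\tau$ contracts via $p_{\tau+1}=(I-\eta\bar J\widetilde J_\tau^\top)p_\tau$ with $\|\bar J-\widetilde J_\tau\|\le\frac{L}{2}\|\widetilde W_{\tau+1}-\widetilde W_\tau\|_F$, a geometric sum controls the radius, and the final bound uses $\mathcal F_{\widetilde W_\tau}(\widetilde X)-\widetilde X=\Pi_{S_+}(\widetilde r_\tau)+\Pi_{S_+}(E)$. Putting ball membership into the induction hypothesis is, if anything, more careful than the paper, which leaves that interleaving implicit. The genuine gap is the step you flagged, and the claim you make there is false. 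A norm contraction $\|p_{\tau+1}\|_2\le(1-\eta\alpha^2/2)\|p_\tau\|_2$ cannot give the stated rate, since $(1-x/2)^2=1-x+x^2/4>1-x$ for every $x>0$. Iterating yields only $(1-\eta\alpha^2/2)^{2\tau}$, which is strictly weaker than $(1-\eta\alpha^2)^\tau$, and no step-size constant removes the $x^2/4$ excess.

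Your own estimates close the gap if you keep them sharp. Put $x:=\eta\alpha^2$. The first step-size condition gives $\eta^2\beta^4\le x/8$, so the main term is at most $\sqrt{1-\tfrac{15}{8}x}\,\|p_\tau\|_2\le(1-\tfrac{15}{16}x)\|p_\tau\|_2$. The second condition gives $\tfrac12\eta^2L\beta^2\widehat R_0\le x/4$ for the perturbation term. Hence $\|p_{\tau+1}\|_2\le(1-\tfrac{11}{16}x)\|p_\tau\|_2$. Finally, $(1-\tfrac{11}{16}x)^2\le 1-x$ holds whenever $x\le 96/121$, and this is guaranteed because $x\le\alpha^4/(8\beta^4)\le 1/8$ (recall $\alpha\le\beta$).

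The paper instead expands the square directly: $\|p_{\tau+1}\|_2^2=\|p_\tau\|_2^2-2\eta\langle p_\tau,C_\tau p_\tau\rangle+\eta^2\|C_\tau p_\tau\|_2^2$, with the nonsymmetric $C_\tau=\bar J\widetilde J_\tau^\top$. The second step-size condition together with $\|p_\tau\|_2\le\widehat R_0$ gives $\|\bar J-\widetilde J_\tau\|\le\alpha^2/(4\beta)$. Hence $\langle u,C_\tau u\rangle\ge\tfrac34\alpha^2$ and $\|C_\tau u\|_2\le 2\beta^2$ for unit $u\in S_+$, and the resulting factor $1-\tfrac32\eta\alpha^2+4\eta^2\beta^4$ is at most $1-\eta\alpha^2$ by the first condition.

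With either repair, Steps 3 and 4 go through, using $\sum_{s\ge0}(1-x)^{s/2}=1/(1-\sqrt{1-x})\le 2/x$ and $(1-x)^{\tau/2}\le e^{-x\tau/2}$. A minor point: the proviso in Step 1 is unnecessary. Since $\mathcal F_W(\widetilde X)\in S_+$ for every $W$, we have $\widetilde r_{\tau+1}-\widetilde r_\tau\in S_+$ regardless of where the iterates lie.
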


\begin{proof}
    Write
    $
    \overline r_\tau:=\Pi_{S_+}(\widetilde r_\tau),
    \overline e_\tau:=\Pi_{S_-}(\widetilde r_\tau),
    \widetilde r_\tau=\overline r_\tau+\overline e_\tau.
    $
    By \cref{lem:avg_jacobian} and the gradient descent update, we have
    $
    \widetilde r_{\tau+1}=\widetilde r_\tau-\eta C_\tau\widetilde r_\tau,
    $
    where
    $
    C_\tau:=\widetilde J(\widetilde W_{\tau+1},\widetilde W_\tau)\widetilde J(\widetilde W_\tau)^\top.
    $
    
    Since $\mathrm{range}(\widetilde J(\widetilde W_\tau))\subset S_+$ by \cref{prop:clean_jacobian}(i),
    we have
    $ \widetilde J(\widetilde W_\tau)^\top v=0\quad $ for all $ v\in S_-.$
    Hence
    $
    \widetilde J(\widetilde W_\tau)^\top\overline e_\tau=0,
    $
    and 
    $
    C_\tau\overline e_\tau=0.
    $
    Therefore
    $
    \Pi_{S_-}(\widetilde r_{\tau+1})=\Pi_{S_-}(\widetilde r_\tau-\eta C_\tau\widetilde r_\tau)=\Pi_{S_-}(\widetilde r_\tau).
    $
    By induction,
    $$
    \Pi_{S_-}(\widetilde r_\tau)=\Pi_{S_-}(\widetilde r_0)\quad\text{for all }\tau\ge0.
    $$
    
    Moreover, since $C_\tau\overline e_\tau=0$, we have
    $ \overline r_{\tau+1}=\overline r_\tau-\eta C_\tau\overline r_\tau, $
    which results in
    \begin{equation}\label{eq:clean_rplus}
    \|\overline r_{\tau+1}\|_2^2=\|\overline r_\tau\|_2^2-2\eta\langle\overline r_\tau,C_\tau\overline r_\tau\rangle+\eta^2\|C_\tau\overline r_\tau\|_2^2.
    \end{equation}
    
    We also show that $\|\overline r_\tau\|_2\le {\color{black} \widehat R_0} $ for all $\tau\ge0$.
    For $\tau=0$, it is clear since $\|\overline r_\tau\|_2 \le R_0\le {\color{black} \widehat R_0}$.
    Assume it holds at time $\tau$.
    Then
    $$
    \|\widetilde W_{\tau+1}-\widetilde W_\tau\|_F=\eta\|\widetilde J(\widetilde W_\tau)^\top\widetilde r_\tau\|_2=\eta\|\widetilde J(\widetilde W_\tau)^\top\overline r_\tau\|_2\le\eta\beta\|\overline r_\tau\|_2\le\eta\beta {\color{black} \widehat R_0} .
    $$
    By \cref{prop:clean_jacobian}(iii),
    $
    \|\widetilde J(\widetilde W_{\tau+1},\widetilde W_\tau)-\widetilde J(\widetilde W_\tau)\|\le\frac{L}{2}\|\widetilde W_{\tau+1}-\widetilde W_\tau\|_F\le\frac{L\eta\beta}{2} {\color{black} \widehat R_0} \le\frac{\alpha^2}{4\beta}.
    $
    
    Now fix a unit vector $u\in S_+$.
    Then
    $
    \langle u,C_\tau u\rangle=\left\langle \widetilde J(\widetilde W_{\tau+1},\widetilde W_\tau)^\top u,\widetilde J(\widetilde W_\tau)^\top u\right\rangle.
    $
    Using \cref{prop:clean_jacobian}(ii),
    \begin{align*}
        \langle u,C_\tau u\rangle
        &=
        \|\widetilde J(\widetilde W_\tau)^\top u\|_2^2+\left\langle (\widetilde J(\widetilde W_{\tau+1},\widetilde W_\tau)-\widetilde J(\widetilde W_\tau))^\top u,\widetilde J(\widetilde W_\tau)^\top u\right\rangle\\
        &\ge
        \alpha^2-\|\widetilde J(\widetilde W_{\tau+1},\widetilde W_\tau)-\widetilde J(\widetilde W_\tau)\|\|\widetilde J(\widetilde W_\tau)^\top u\|_2
        \ge
        \alpha^2-\frac{\alpha^2}{4}
        =
        \frac{3\alpha^2}{4},
    \end{align*}
    and
    \begin{align*}
        \|C_\tau u\|_2
        \le
        \|\widetilde J(\widetilde W_{\tau+1},\widetilde W_\tau)\|\|\widetilde J(\widetilde W_\tau)^\top u\|_2
        \le
        \left(\beta+\frac{\alpha^2}{4\beta}\right)\beta
        \le
        2\beta^2.
    \end{align*}
    
    Applying these bounds to \cref{eq:clean_rplus},
    $
    \|\overline r_{\tau+1}\|_2^2\le\left(1-\frac{3}{2}\eta\alpha^2+4\eta^2\beta^4\right)\|\overline r_\tau\|_2^2.
    $
    Since $\eta\le\alpha^2/(8\beta^4)$,
    $
    4\eta^2\beta^4\le\frac{1}{2}\eta\alpha^2,
    $
    and therefore
    $
    \|\overline r_{\tau+1}\|_2^2\le(1-\eta\alpha^2)\|\overline r_\tau\|_2^2.
    $
    In particular, we have that
    $
    \|\overline r_{\tau+1}\|_2\le\|\overline r_\tau\|_2\le {\color{black} \widehat R_0} ,
    $
    which leads to
    $
    \|\Pi_{S_+}(\widetilde r_\tau)\|_2^2\le(1-\eta\alpha^2)^\tau\|\Pi_{S_+}(\widetilde r_0)\|_2^2.
    $
    
    Using the gradient descent update, it holds that
    $
    \|\widetilde W_{\tau+1}-\widetilde W_\tau\|_F\le\eta\beta\|\overline r_\tau\|_2\le\eta\beta(1-\eta\alpha^2)^{\tau/2} {\color{black} \widehat R_0} .
    $
    Therefore, we get
    \begin{align*}
        \|\widetilde W_\tau-W_0\|_F
        \le
        \sum_{s=0}^{\tau-1}\|\widetilde W_{s+1}-\widetilde W_s\|_F
        \le
        \eta\beta R_{0} \sum_{s=0}^{\infty}(1-\eta\alpha^2)^{s/2}
        \le
        \frac{2\beta}{\alpha^2} R_{0} 
        \le
        R_{\mathrm{loc}}.
    \end{align*}
    
    Finally, since the inputs $\widetilde x_i$ are constant inside each cluster, we have $ \mathcal{F}_{W}(\widetilde{X}) \in S_+$ for every $W$.
    Also $\widetilde X\in S_+$.
    Hence
    $
    \mathcal{F}_{\widetilde{W}_{\tau}}(\widetilde{X}) - \widetilde X\in S_+.
    $
    Since
    $
    \widetilde r_\tau = \mathcal{F}_{\widetilde{W}_{\tau}}(\widetilde{X}) - X
    = (\mathcal{F}_{\widetilde{W}_{\tau}}(\widetilde{X}) - \widetilde X) - E,
    $
    we get
    $
    \mathcal{F}_{\widetilde{W}_{\tau}}(\widetilde{X}) - \widetilde X=\Pi_{S_+}(\widetilde r_\tau)+\Pi_{S_+}(E).
    $
    Therefore
    $
    \| \mathcal{F}_{\widetilde{W}_{\tau}}(\widetilde{X}) -\widetilde X\|_{2,\infty}\le\|\Pi_{S_+}(\widetilde r_\tau)\|_2+\|\Pi_{S_+}(E)\|_{2,\infty}.
    $
    If $\|\Pi_{S_+}(E)\|_{2,\infty}\le\nu$ and $\tau\ge T_\star(\nu)$, then
    $
    \|\Pi_{S_+}(\widetilde r_\tau)\|_2\le(1-\eta\alpha^2)^{\tau/2} {\color{black} \widehat R_0} \le\nu.
    $
    Hence, we conclude that
    $
    \| \mathcal{F}_{\widetilde{W}_{\tau}}(\widetilde{X}) - \widetilde X\|_{2,\infty}\le2\nu.
    $
\end{proof}

\subsection{From Cluster-Centered Inputs to Observed Inputs}
\label{sec:appen-transfer}

\subsubsection{Output and Jacobian Perturbation Bounds}\label{sec:appen-outjbounds}

Before deriving the discrepancy bounds, we impose the following mild regularity condition on the activation function. 
This assumption is mild and is satisfied by many standard activations. 
For example, sigmoid satisfies $(B_1,B_2)=(1/4,1/(6\sqrt3))$, while ReLU satisfies $(B_1,B_2)=(1,0)$ almost everywhere, i.e., for all $t\neq 0$.

\begin{assumption}[Activation bounds]\label{assumption:phi_bounds}
    Assume that $|\phi'(t)|\le B_1$ and $|\phi''(t)|\le B_2$ for all $t\in\mathbb R$.
\end{assumption}

Let
$$
R_\mu:=\max_{1\le k\le K}\|\mu_k\|_2,
\quad
R_W:=\|W_0\|_F+R_{\mathrm{loc}},
$$
and define
$$
\Delta_f:=\|A\|B_1R_W(\varepsilon_c+\varepsilon_o),
\quad
\Delta_J:=\sqrt n\,\|A\|\bigl(B_1+B_2R_WR_\mu\bigr)(\varepsilon_c+\varepsilon_o).
$$


The following proposition quantifies how the mismatch between $X$ and $\widetilde X$ affects the autoencoder outputs and Jacobians.

\begin{proposition}[Output and Jacobian Discrepancy Bounds]\label{prop:input_perturb}
    Under \cref{assumption:phi_bounds}, we have for every $W\in\mathcal D(W_0,R_{\mathrm{loc}})$ that
    $
    \|\mathcal{F}_{W}(X) - \mathcal{F}_{W}(\widetilde{X}) \|_{2,\infty}\le\Delta_f
    $
    and
    $
    \|J(W)-\widetilde J(W)\|\le\Delta_J.
    $
\end{proposition}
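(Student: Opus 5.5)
The plan is a direct per-sample computation. Both quantities decompose into $n$ blocks indexed by samples, so I will bound each block uniformly in $i$ and then aggregate. Throughout, write $e_i := x_i-\widetilde x_i = \xi_i+o_i$. By \cref{assumption:clustered_model} and \cref{assumption:outlier_magnitude} we have $\|e_i\|_2\le \varepsilon_c+\varepsilon_o$ for every $i$, since inliers have $o_i=0$. For $W\in\mathcal D(W_0,R_{\mathrm{loc}})$ the triangle inequality gives $\|W\|\le\|W\|_F\le \|W_0\|_F+R_{\mathrm{loc}}=R_W$.

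\emph{Output bound.} For each $i$,
\[
F_W(x_i)-F_W(\widetilde x_i)=A\bigl(\phi(Wx_i)-\phi(W\widetilde x_i)\bigr).
\]
Applying the mean value theorem coordinatewise with $|\phi'|\le B_1$ from \cref{assumption:phi_bounds} gives $\|\phi(Wx_i)-\phi(W\widetilde x_i)\|_2\le B_1\|We_i\|_2\le B_1R_W\|e_i\|_2$. Therefore each block is at most $\|A\|B_1R_W(\varepsilon_c+\varepsilon_o)=\Delta_f$. Taking the maximum over $i$ yields the $\|\cdot\|_{2,\infty}$ bound.

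\emph{Jacobian bound.} First I will write the per-sample Jacobian explicitly. Since $d(Wx)=(dW)x=(x^\top\otimes I_H)\,dw$, the $i$-th block row of $J(W)$ is
\[
J_i(W)=A\,D_W(x_i)\,(x_i^\top\otimes I_H),\qquad D_W(x):=\mathrm{diag}(\phi'(Wx)),
\]
and likewise $\widetilde J_i(W)=A\,D_W(\widetilde x_i)(\widetilde x_i^\top\otimes I_H)$. I then split the difference as
\[
J_i-\widetilde J_i = A\,D_W(x_i)\bigl(e_i^\top\otimes I_H\bigr)+A\bigl(D_W(x_i)-D_W(\widetilde x_i)\bigr)(\widetilde x_i^\top\otimes I_H).
\]
The bound uses three facts: $\|v^\top\otimes I_H\|=\|v\|_2$, $\|D_W(x)\|\le B_1$, and $\|D_W(x_i)-D_W(\widetilde x_i)\|=\max_h|\phi'(w_h^\top x_i)-\phi'(w_h^\top\widetilde x_i)|\le B_2\max_h|w_h^\top e_i|\le B_2R_W\|e_i\|_2$. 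Here the last step uses $|\phi''|\le B_2$ and $\|w_h\|_2\le\|W\|$. With $\|\widetilde x_i\|_2\le R_\mu$, this gives
\[
\|J_i-\widetilde J_i\|\le \|A\|\bigl(B_1+B_2R_WR_\mu\bigr)(\varepsilon_c+\varepsilon_o).
\]

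Finally, $J-\widetilde J$ is the vertical stack of the blocks $J_i-\widetilde J_i$. For any unit $v$ we have $\|(J-\widetilde J)v\|_2^2=\sum_i\|(J_i-\widetilde J_i)v\|_2^2\le n\max_i\|J_i-\widetilde J_i\|^2$, so the operator norm picks up the factor $\sqrt n$ and is at most $\Delta_J$.

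The only step requiring care is the Kronecker-form Jacobian and the correct split into a term where the input changes and a term where the activation derivative changes. The second term is the one that needs $\phi''$, which is why $B_2$ and $R_\mu$ appear in $\Delta_J$. Everything else is routine norm bookkeeping.
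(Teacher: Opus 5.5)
Your proposal is correct and follows essentially the same route as the paper. The output bound is the same per-sample Lipschitz argument, and the Jacobian bound uses the same split of each block into an input-change term (bounded via $B_1$) and an activation-derivative-change term (bounded via $B_2R_WR_\mu$), aggregated over samples with a $\sqrt n$ factor. The only difference is cosmetic: you write the block Jacobian in Kronecker form $A\,D_W(x_i)(x_i^\top\otimes I_H)$, whereas the paper works with directional derivatives $A\,\mathrm{diag}(\phi'(Wx))Ux$ for $\|U\|_F=1$.
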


\begin{proof}
    Fix $W\in\mathcal D(W_0,R_{\mathrm{loc}})$.
    Then $\|W\|\le\|W\|_F\le R_W$.

    For each $i$, since
    $
    x_i-\widetilde x_i=\xi_i+o_i,
    $
    Assumptions~\ref{assumption:clustered_model} and~\ref{assumption:outlier_magnitude} give
    $
    \|x_i-\widetilde x_i\|_2
    \le
    \|\xi_i\|_2+\|o_i\|_2
    \le
    \varepsilon_c+\varepsilon_o.
    $
    For the output bound, for each $i$,
    \begin{align*}
    \|F_W(x_i)-F_W(\widetilde x_i)\|_2
    &=
    \|A(\phi(Wx_i)-\phi(W\widetilde x_i))\|_2\\
    &\le
    \|A\|\|\phi(Wx_i)-\phi(W\widetilde x_i)\|_2\\
    &\le
    \|A\|B_1\|W(x_i-\widetilde x_i)\|_2\\
    &\le
    \|A\|B_1R_W\|x_i-\widetilde x_i\|_2\\
    &\le
    \|A\|B_1R_W(\varepsilon_c+\varepsilon_o)
    =
    \Delta_f.
    \end{align*}
    Taking the maximum over $i$ gives
    $$
    \|\mathcal{F}_{W}(X)-\mathcal{F}_{W}(\widetilde X)\|_{2,\infty}\le\Delta_f.
    $$

    For the Jacobian bound, fix a matrix $U\in\mathbb R^{H\times p}$ with $\|U\|_F=1$.
    For each $i$, let
    $$
    D_i:=\mathrm{diag}(\phi'(Wx_i)),
    \qquad
    \widetilde D_i:=\mathrm{diag}(\phi'(W\widetilde x_i)).
    $$
    The directional derivative of $F_W(x)$ in direction $U$ is
    $A\mathrm{diag}(\phi'(Wx))Ux$.
    Hence
    \begin{align*}
        (J(W)-\widetilde J(W))[U]_i
        &=
        AD_iU(x_i-\widetilde x_i)+A(D_i-\widetilde D_i)U\widetilde x_i.
    \end{align*}
    Therefore
    \begin{align*}
        \|(J(W)-\widetilde J(W))[U]_i\|_2
        &\le
        \|A\|\|D_i\|\|U(x_i-\widetilde x_i)\|_2
        +\|A\|\|D_i-\widetilde D_i\|\|U\widetilde x_i\|_2\\
        &\le
        \|A\|B_1\|x_i-\widetilde x_i\|_2
        +\|A\|B_2\|W(x_i-\widetilde x_i)\|_\infty\|\widetilde x_i\|_2\\
        &\le
        \|A\|\bigl(B_1+B_2R_WR_\mu\bigr)\|x_i-\widetilde x_i\|_2\\
        &\le
        \|A\|\bigl(B_1+B_2R_WR_\mu\bigr)(\varepsilon_c+\varepsilon_o).
    \end{align*}
    Summing over $i$ and using $\|U\|_F=1$, we obtain
    $$
    \|(J(W)-\widetilde J(W))[U]\|_2
    \le
    \sqrt n\,\|A\|\bigl(B_1+B_2R_WR_\mu\bigr)(\varepsilon_c+\varepsilon_o)
    =
    \Delta_J.
    $$
    Taking the supremum over $\|U\|_F=1$ concludes that
    $$
    \|J(W)-\widetilde J(W)\|\le\Delta_J.
    $$
\end{proof}

\subsubsection{Deviation from the Cluster-Centered Path}

Let $(W_\tau)_{\tau\ge 0}$ be the gradient descent sequence for the standard autoencoder loss:
$$
w_{\tau+1}=w_\tau-\eta J(W_\tau)^\top r_\tau,
\quad \text{where} \quad
r_\tau := \mathcal{F}_{W_\tau}(X) - X,
$$
and $W_\tau:=\mathrm{mat}(w_\tau).$
Let
$$
\beta_X:=\beta+\Delta_J,
\quad
\kappa:=L {\color{black} \widehat R_0} +\beta\beta_X,
\quad
\gamma := \Delta_J  {\color{black} \widehat R_0}  + \beta_X \sqrt{n} \Delta_f.
$$
For any integer $T\ge0$, we further introduce
$$
\Delta_{\mathrm{path}}(T):=\eta\gamma T\exp(\eta\kappa T),
\quad
\zeta_T:=\Delta_f+\beta\Delta_{\mathrm{path}}(T).
$$
Using \cref{prop:clean_jacobian,lem:clean_meta,prop:input_perturb}, we obtain the following proposition, which shows that the actual gradient descent path remains close to the cluster-centered path, together with a corresponding bound on the output discrepancy.


\begin{proposition}[Perturbation of gradient descent path]\label{prop:path_perturb}
    Under \cref{assumption:phi_bounds},
    if
    $
    \Delta_{\mathrm{path}}(T)\le R_{\mathrm{loc}}-\frac{2\beta}{\alpha^2} {\color{black} \widehat R_0} ,
    $
    then for every $\tau\le T$, we have
    $$
    \|W_\tau-\widetilde W_\tau\|_F\le\Delta_{\mathrm{path}}(T)
    $$
    and
    $$
    \| \mathcal{F}_{W_\tau}(X) - \mathcal{F}_{\widetilde{W}_{\tau}}(\widetilde{X}) \|_{2,\infty}\le\zeta_T.
    $$
\end{proposition}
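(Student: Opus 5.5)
The plan is a discrete Grönwall argument on the parameter deviation $d_\tau:=\|W_\tau-\widetilde W_\tau\|_F$, run by strong induction on $\tau\le T$. The inductive hypothesis is
\[
d_s\le \eta\gamma s\exp(\eta\kappa s)\le\Delta_{\mathrm{path}}(T)\quad\text{for all } s\le\tau .
\]
Combined with \cref{lem:clean_meta} ($\|\widetilde W_s-W_0\|_F\le\frac{2\beta}{\alpha^2}\widehat R_0$) and the hypothesis $\Delta_{\mathrm{path}}(T)\le R_{\mathrm{loc}}-\frac{2\beta}{\alpha^2}\widehat R_0$, this keeps $W_s$ in $\mathcal D(W_0,R_{\mathrm{loc}})$. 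That is what licenses the use of \cref{prop:clean_jacobian} and \cref{prop:input_perturb} at $W_s$. Segments between $W_s$ and $\widetilde W_s$ also stay in the ball by convexity, which handles the averaged Jacobian.

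For the one-step recursion, subtract the two updates:
\[
w_{\tau+1}-\widetilde w_{\tau+1}=(w_\tau-\widetilde w_\tau)-\eta\bigl[J(W_\tau)^\top r_\tau-\widetilde J(\widetilde W_\tau)^\top\widetilde r_\tau\bigr].
\]
I will split the bracket into three pieces:
\[
(J(W_\tau)-\widetilde J(W_\tau))^\top r_\tau,\qquad
(\widetilde J(W_\tau)-\widetilde J(\widetilde W_\tau))^\top \widetilde r_\tau,\qquad
\widetilde J(W_\tau)^\top(r_\tau-\widetilde r_\tau).
\]

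\textbf{Third piece.} Decompose
\[
r_\tau-\widetilde r_\tau=\bigl(\mathcal F_{W_\tau}(X)-\mathcal F_{W_\tau}(\widetilde X)\bigr)+\bigl(\mathcal F_{W_\tau}(\widetilde X)-\mathcal F_{\widetilde W_\tau}(\widetilde X)\bigr).
\]
The first part has $\ell_2$ norm at most $\sqrt n\Delta_f$ by \cref{prop:input_perturb}. The second is $\widetilde J(W_\tau,\widetilde W_\tau)(w_\tau-\widetilde w_\tau)$ by \cref{lem:avg_jacobian}, so its norm is at most $\beta d_\tau$. Since the range of $\widetilde J$ lies in $S_+$, the upper bound in \cref{prop:clean_jacobian}(ii) gives $\|\widetilde J(\cdot)\|\le\beta$. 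Multiplying the third piece by the Jacobian therefore contributes at most $\beta_X(\sqrt n\Delta_f+\beta d_\tau)$, using $\beta\le\beta_X$.

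\textbf{Second piece.} Use \cref{prop:clean_jacobian}(iii) together with $\|\widetilde r_\tau\|\le\widehat R_0$. Bounding $\|\widetilde r_\tau\|$ needs care: \cref{lem:clean_meta} controls the $S_+$ part, while the $S_-$ part is constant and equals $\Pi_{S_-}(\widetilde r_0)$. The resulting bound is $L\widehat R_0\, d_\tau$, perhaps with a constant factor absorbed.

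\textbf{First piece.} This is at most $\Delta_J\|r_\tau\|$. Write $\|r_\tau\|\le\|\widetilde r_\tau\|+\|r_\tau-\widetilde r_\tau\|$; the extra $\Delta_J$-weighted terms are absorbed into $\beta_X=\beta+\Delta_J$.

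Collecting the three pieces gives the recursion
\[
d_{\tau+1}\le(1+\eta\kappa)d_\tau+\eta\gamma .
\]
Unrolling from $d_0=0$ yields $d_\tau\le\eta\gamma\sum_{s<\tau}(1+\eta\kappa)^s\le\eta\gamma\tau e^{\eta\kappa\tau}$, which closes the induction.

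\textbf{Main obstacle.} I expect the main obstacle to be matching the constants exactly to $\kappa=L\widehat R_0+\beta\beta_X$ and $\gamma=\Delta_J\widehat R_0+\beta_X\sqrt n\Delta_f$. This requires a uniform bound $\|\widetilde r_\tau\|_2\le\widehat R_0$, and obtaining it means controlling $\|\Pi_{S_-}\widetilde r_0\|$ via $R_0\le\widehat R_0$. It also requires arranging the first piece so that $\Delta_J$ multiplies $\widetilde r_\tau$ rather than $r_\tau$. I would first try the alternative split
\[
J(W_\tau)^\top r_\tau-\widetilde J(\widetilde W_\tau)^\top\widetilde r_\tau
= J(W_\tau)^\top(r_\tau-\widetilde r_\tau)+(J(W_\tau)-\widetilde J(W_\tau))^\top\widetilde r_\tau+(\widetilde J(W_\tau)-\widetilde J(\widetilde W_\tau))^\top\widetilde r_\tau,
\]
with $\|J(W_\tau)\|\le\beta+\Delta_J=\beta_X$. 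This yields exactly $\gamma$ and $\kappa$.

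\textbf{Output bound.} Finally, for $\tau\le T$,
\[
\|\mathcal F_{W_\tau}(X)-\mathcal F_{\widetilde W_\tau}(\widetilde X)\|_{2,\infty}\le\|\mathcal F_{W_\tau}(X)-\mathcal F_{W_\tau}(\widetilde X)\|_{2,\infty}+\|\widetilde J(W_\tau,\widetilde W_\tau)(w_\tau-\widetilde w_\tau)\|_2 .
\]
The first term is at most $\Delta_f$, and the second is at most $\beta\Delta_{\mathrm{path}}(T)$, which gives $\zeta_T$.
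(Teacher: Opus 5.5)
Your proposal is correct and is essentially the paper's proof. The paper runs the same discrete Gr\"onwall induction on $d_\tau$, keeping $W_\tau$ in $\mathcal D(W_0,R_{\mathrm{loc}})$ via $\Delta_{\mathrm{path}}(T)+\frac{2\beta}{\alpha^2}\widehat R_0\le R_{\mathrm{loc}}$; your ``alternative split'' is exactly its decomposition leading to $d_{\tau+1}\le(1+\eta\kappa)d_\tau+\eta\gamma$; and its output bound $\Delta_f+\beta d_\tau\le\zeta_T$ is the same as yours. Your first split also gives exactly $\gamma$ and $\kappa$ if you bound the third piece with $\beta$ and write $\|r_\tau\|_2\le\widehat R_0+\sqrt n\Delta_f+\beta d_\tau$. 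Your justifications fill in steps the paper leaves implicit: $\|\widetilde J\|\le\beta$, the bound on the averaged Jacobian via convexity of the ball, and $\|\widetilde r_\tau\|_2\le R_0\le\widehat R_0$ via the orthogonal $S_+/S_-$ decomposition.
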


\begin{proof}
    Let
    $$
    d_\tau:=\|W_\tau-\widetilde W_\tau\|_F,
    \quad
    p_\tau:=\| \mathcal{F}_{W_\tau}(X) - \mathcal{F}_{\widetilde{W}_{\tau}}(\widetilde{X}) \|_2.
    $$
    We prove by induction that $W_\tau\in\mathcal D(W_0,R_{\mathrm{loc}})$ and $d_\tau\le\Delta_{\mathrm{path}}(T)$ for all $\tau\le T$.
    
    At $\tau=0$, we have $d_0=0$ and $W_0=\widetilde W_0$.
    
    Assume the claim holds up to time $\tau\le T-1$.
    Then $W_\tau,\widetilde W_\tau\in\mathcal D(W_0,R_{\mathrm{loc}})$.
    Using the update rules, we can derive
    $
    d_{\tau+1}
    =
    \|W_{\tau+1}-\widetilde W_{\tau+1}\|_F
    \le
    d_\tau+\eta\|(J(W_\tau)-\widetilde J(\widetilde W_\tau))^\top\widetilde r_\tau\|_2+\eta\|J(W_\tau)^\top(r_\tau-\widetilde r_\tau)\|_2.
    $
    By \cref{prop:input_perturb} and \cref{prop:clean_jacobian}(iii),
    $$
    \|J(W_\tau)-\widetilde J(\widetilde W_\tau)\|\le\|J(W_\tau)-\widetilde J(W_\tau)\|+\|\widetilde J(W_\tau)-\widetilde J(\widetilde W_\tau)\|\le\Delta_J+Ld_\tau.
    $$
    Also
    $$
    \|J(W_\tau)\|\le\|J(W_\tau)-\widetilde J(W_\tau)\|+\|\widetilde J(W_\tau)\|\le\Delta_J+\beta=\beta_X.
    $$
    Furthermore, by \cref{lem:clean_meta}, we know that
    $
    \|\widetilde r_\tau\|_2\le {\color{black} \widehat R_0} ,
    $
    leading to
    $
    d_{\tau+1}\le d_\tau+\eta(\Delta_J+Ld_\tau) {\color{black} \widehat R_0} +\eta\beta_Xp_\tau.
    $    
    Now, we get
    $
    p_\tau
    =
    \|\mathcal{F}_{W_\tau}(X) - \mathcal{F}_{\widetilde{W}_{\tau}}(\widetilde{X}) \|_2
    \le
    \|\mathcal{F}_{W_\tau}(X) - \mathcal{F}_{W_{\tau}}(\widetilde{X}) \|_2 + \| \mathcal{F}_{W_{\tau}}(\widetilde{X}) - \mathcal{F}_{\widetilde{W}_{\tau}}(\widetilde{X}) \|_2
    \le
    \sqrt{n} \Delta_f + \|\widetilde J(W_\tau,\widetilde W_\tau)\|\|W_\tau-\widetilde W_\tau\|_F
    \le
    \sqrt{n} \Delta_f + \beta d_\tau.
    $
    Substituting this into the recursion for $d_{\tau+1}$ gives
    $
    d_{\tau+1}\le(1+\eta\kappa)d_\tau+\eta\gamma.
    $
    Since $d_0=0$, the induction gives
    $$
    d_\tau\le\eta\gamma\tau(1+\eta\kappa)^\tau\le\eta\gamma\tau e^{\eta\kappa\tau}\le\Delta_{\mathrm{path}}(T)
    $$
    for all $\tau\le T$.
    
    Also, by \cref{lem:clean_meta}, it holds that
    $
    \|\widetilde W_\tau-W_0\|_F\le\frac{2\beta}{\alpha^2} {\color{black} \widehat R_0} .
    $
    Therefore, we get
    $$
    \|W_\tau-W_0\|_F\le\|W_\tau-\widetilde W_\tau\|_F+\|\widetilde W_\tau-W_0\|_F\le\Delta_{\mathrm{path}}(T)+\frac{2\beta}{\alpha^2} {\color{black} \widehat R_0} \le R_{\mathrm{loc}}.
    $$
    So $W_\tau\in\mathcal D(W_0,R_{\mathrm{loc}})$ for all $\tau\le T$.
    
    Finally, using the bound on $p_\tau$ and the definition of $\zeta_T$, we can conclude that
    \begin{align*}
    \| \mathcal{F}_{W_\tau}(X) - \mathcal{F}_{\widetilde{W}_{\tau}}(\widetilde{X}) \|_{2,\infty}
    &\le
    \| \mathcal{F}_{W_\tau}(X) - \mathcal{F}_{W_{\tau}}(\widetilde{X}) \|_{2,\infty}
    + \| \mathcal{F}_{W_{\tau}}(\widetilde{X}) - \mathcal{F}_{\widetilde{W}_{\tau}}(\widetilde{X}) \|_2\\
    &\le
    \Delta_f+\beta d_\tau
    \le
    \Delta_f+\beta\Delta_{\mathrm{path}}(T)
    =
    \zeta_T.
    \end{align*}
\end{proof}

\subsection{Theoretical Results for the IM Effect}
\label{sec:appen-theory_mains}

\subsubsection{IM Effect in the Exact Clustered Case $( \varepsilon_c  = 0)$}
\label{app_secA_cluster_case}

We first show that, in the exact clustered case $(\varepsilon_c=0)$, the autoencoder reconstructs the underlying cluster structure before fitting sample-specific perturbations.

\begin{theorem}[Early Memorization of Cluster Information in the Exact Clustered Case]
\label{thm:ae_exact}
    Suppose that Assumptions \ref{assumption:outlier_rate}, \ref{assumption:outlier_magnitude}, \ref{assumption:exact_clustered}, \ref{assumption:center_jacobian}, \ref{assumption:radius_condition}, and \ref{assumption:phi_bounds} hold. 
    Let $\alpha,\beta,L$ be as defined in \cref{prop:clean_jacobian}, and assume that
    $
    0<\eta\le\min\left\{\frac{\alpha^2}{8\beta^4},\frac{\alpha^2}{2L\beta^2 {\color{black} \widehat R_0} }\right\}.
    $
    Fix an integer $T\ge T_\star(\rho \varepsilon_o )$.
    If
    $
    \Delta_{\mathrm{path}}(T)\le R_{\mathrm{loc}}-\frac{2\beta}{\alpha^2} {\color{black} \widehat R_0} 
    $
    and
    $
    \delta:=\min_{a\neq b}\|\mu_a-\mu_b\|_2>2\zeta_T+4\rho \varepsilon_o ,
    $
    then
    $$
    \|\mathcal{F}_{W_T}(X)-\widetilde X\|_{2,\infty}\le\zeta_T+2\rho \varepsilon_o .
    $$
    Equivalently,
    $$
    \|F_{W_T}(x_i)-\mu_{g(i)}\|_2\le\zeta_T+2\rho \varepsilon_o \quad\text{for all }i.
    $$
    Consequently, it follows that
    $$
    \argmin_{1\le k\le K}\|F_{W_T}(x_i)-\mu_k\|_2=g(i)\quad\text{for all }i.
    $$
\end{theorem}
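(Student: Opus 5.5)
Under \cref{assumption:exact_clustered} we have $\Xi=0$, hence $E=O$. The proof is a triangle inequality through the cluster-centered trajectory $\widetilde W_T$:
\[
\|\mathcal F_{W_T}(X)-\widetilde X\|_{2,\infty}\le \|\mathcal F_{W_T}(X)-\mathcal F_{\widetilde W_T}(\widetilde X)\|_{2,\infty}+\|\mathcal F_{\widetilde W_T}(\widetilde X)-\widetilde X\|_{2,\infty}.
\]
Each term is controlled by an earlier result. The first is the deviation of the actual GD path from the cluster-centered path. The second is the convergence of the cluster-centered dynamics to the cluster centers, up to the averaged outlier perturbation.

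\textbf{Second term.} By \cref{lem:ae_projection}, Assumptions~\ref{assumption:outlier_rate} and~\ref{assumption:outlier_magnitude} give $\|\Pi_{S_+}(E)\|_{2,\infty}=\|\Pi_{S_+}(O)\|_{2,\infty}\le\rho\varepsilon_o$. I apply \cref{lem:clean_meta} with $\nu=\rho\varepsilon_o$; its step-size condition and \cref{assumption:radius_condition} are exactly the hypotheses of the theorem. Since $T\ge T_\star(\rho\varepsilon_o)$, this yields $\|\mathcal F_{\widetilde W_T}(\widetilde X)-\widetilde X\|_{2,\infty}\le 2\rho\varepsilon_o$.

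\textbf{First term.} By hypothesis $\Delta_{\mathrm{path}}(T)\le R_{\mathrm{loc}}-\frac{2\beta}{\alpha^2}\widehat R_0$, and \cref{assumption:phi_bounds} holds. So \cref{prop:path_perturb} applies at $\tau=T$ and gives $\|\mathcal F_{W_T}(X)-\mathcal F_{\widetilde W_T}(\widetilde X)\|_{2,\infty}\le\zeta_T$.

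The one point to check is that the constants $\Delta_f,\Delta_J$ in \cref{prop:input_perturb} are valid here. They were derived with $\|x_i-\widetilde x_i\|\le\varepsilon_c+\varepsilon_o$, and with $\varepsilon_c=0$ this still holds. Adding the two terms gives $\|F_{W_T}(x_i)-\mu_{g(i)}\|_2\le\zeta_T+2\rho\varepsilon_o$ for every $i$.

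\textbf{Nearest-center conclusion.} Set $s:=\zeta_T+2\rho\varepsilon_o$. For any $k\ne g(i)$, the reverse triangle inequality gives
\[
\|F_{W_T}(x_i)-\mu_k\|_2\ge\|\mu_k-\mu_{g(i)}\|_2-s\ge\delta-s.
\]
The hypothesis $\delta>2\zeta_T+4\rho\varepsilon_o=2s$ gives $\delta-s>s\ge\|F_{W_T}(x_i)-\mu_{g(i)}\|_2$. So $g(i)$ is the unique minimizer.

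\textbf{Main obstacle.} This assembly has none of substance; the real work is in \cref{lem:clean_meta} and \cref{prop:path_perturb}. The only care needed is matching hypotheses: confirming that the lemma's $\widehat R_0$-based bounds cover $\widetilde r_\tau$, and that the induction in \cref{prop:path_perturb} keeps both trajectories inside $\mathcal D(W_0,R_{\mathrm{loc}})$ up to time $T$. Both are already guaranteed by the stated conditions.
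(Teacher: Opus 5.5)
Your proposal is correct and follows the paper's proof step for step. You use $E=O$ under \cref{assumption:exact_clustered}, \cref{lem:ae_projection} for $\|\Pi_{S_+}(E)\|_{2,\infty}\le\rho\varepsilon_o$, and \cref{lem:clean_meta} with $\nu=\rho\varepsilon_o$ together with \cref{prop:path_perturb}, combined by the triangle inequality; the nearest-center claim then follows from the reverse triangle inequality and $\delta>2(\zeta_T+2\rho\varepsilon_o)$. Your added check is also correct: \cref{prop:input_perturb} still applies because $\xi_i=0$ trivially satisfies $\|\xi_i\|_2\le\varepsilon_c$, a point the paper leaves implicit.
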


\begin{proof}
    Under \cref{assumption:exact_clustered}, we have $E=O$.
    By \cref{lem:ae_projection},
    $
    \|\Pi_{S_+}(E)\|_{2,\infty}\le\rho \varepsilon_o .
    $
    Applying \cref{lem:clean_meta} with $\nu=\rho \varepsilon_o $ and \cref{prop:path_perturb} give
    $
    \| \mathcal{F}_{\widetilde{W}_{T}}(\widetilde{X}) - \widetilde X\|_{2,\infty}\le2\rho \varepsilon_o
    $
    and
    $
    \|\mathcal{F}_{W_T}(X) - \mathcal{F}_{\widetilde{W}_{T}}(\widetilde{X}) \|_{2,\infty}\le\zeta_T.
    $
    Hence, we have that
    $$
    \|\mathcal{F}_{W_T}(X)-\widetilde X\|_{2,\infty}\le\zeta_T+2\rho \varepsilon_o ,
    $$
    which results in
    $
    \|F_{W_T}(x_i)-\mu_{g(i)}\|_2\le\zeta_T+2\rho \varepsilon_o,
    $
    for all $i \in [n].$
    
    Fix $i$ and write $k=g(i)$.
    For any $b\neq k$,
    \begin{align*}
        \|F_{W_T}(x_i)-\mu_b\|_2
        \ge
        \|\mu_b-\mu_k\|_2-\|F_{W_T}(x_i)-\mu_k\|_2
        \ge
        \delta-(\zeta_T+2\rho \varepsilon_o ).
    \end{align*}
    Since $\delta>2\zeta_T+4\rho \varepsilon_o $, we have
    $
    \delta-(\zeta_T+2\rho \varepsilon_o )>\zeta_T+2\rho \varepsilon_o \ge\|F_{W_T}(x_i)-\mu_k\|_2.
    $
    Therefore, we can conclude that
    $$
    \|F_{W_T}(x_i)-\mu_b\|_2>\|F_{W_T}(x_i)-\mu_k\|_2\quad\text{for all }b\neq k.
    $$
\end{proof}

The following corollary shows that, in the exact clustered case, the early memorization of cluster information directly leads to the IM effect in terms of reconstruction-error separation between inliers and outliers.

\begin{corollary}[Reconstruction-Error Separation in the Exact Clustered Case]
\label{cor:ae_outlier_exact}
Under the assumptions of \cref{thm:ae_exact}, every inlier $i\notin\mathcal{O}$ satisfies
$$ \|F_{W_T}(x_i)-x_i\|_2\le\zeta_T+2\rho \varepsilon_o. $$
On the other hand, every outlier $i\in\mathcal{O}$ satisfies
$$ \|F_{W_T}(x_i)-x_i\|_2\ge\|o_i\|_2-\zeta_T-2\rho \varepsilon_o. $$
Hence, if an outlier satisfies
$$ \|o_i\|_2>2\zeta_T+4\rho \varepsilon_o, $$
then its reconstruction error is strictly larger than that of every inlier.
\end{corollary}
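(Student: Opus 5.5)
The plan is to read everything off \cref{thm:ae_exact} plus the decomposition $x_i=\widetilde x_i+o_i$, which is exact under \cref{assumption:exact_clustered} because $\xi_i=0$. Set $\epsilon_T:=\zeta_T+2\rho\varepsilon_o$. The whole argument rests on the uniform bound already established there:
\[
\|F_{W_T}(x_i)-\mu_{g(i)}\|_2\le \epsilon_T \quad\text{for every } i\in[n].
\]
This bound depends on the cluster assignment $g(i)$ and not on whether $i$ is an outlier, so it holds for inliers and outliers alike. Each of the two claimed inequalities should then follow from a single triangle inequality.

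For an inlier $i\notin\mathcal O$ we have $o_i=0$, so $x_i=\widetilde x_i=\mu_{g(i)}$. Substituting gives
\[
\|F_{W_T}(x_i)-x_i\|_2=\|F_{W_T}(x_i)-\mu_{g(i)}\|_2\le\epsilon_T,
\]
which is the first claim.

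For an outlier $i\in\mathcal O$ we have $x_i=\mu_{g(i)}+o_i$. The reverse triangle inequality then gives
\[
\|F_{W_T}(x_i)-x_i\|_2\ge\|x_i-\mu_{g(i)}\|_2-\|F_{W_T}(x_i)-\mu_{g(i)}\|_2\ge\|o_i\|_2-\epsilon_T,
\]
which is the second claim.

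Finally, suppose an outlier satisfies $\|o_i\|_2>2\epsilon_T=2\zeta_T+4\rho\varepsilon_o$. Its error then obeys $\|F_{W_T}(x_i)-x_i\|_2>\epsilon_T$, while every inlier's error is at most $\epsilon_T$. This gives the strict separation between the outlier and every inlier; squaring both sides is harmless because both quantities are nonnegative.

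I expect no real obstacle, since all the work is already done in \cref{thm:ae_exact}, through \cref{lem:clean_meta} and \cref{prop:path_perturb}. The only point to check is that the bound $\|\mathcal F_{W_T}(X)-\widetilde X\|_{2,\infty}\le\epsilon_T$ is a sup over all $n$ blocks, outliers included. That is what lets us use the center $\mu_{g(i)}$, rather than $x_i$, as the reference point for an outlier.
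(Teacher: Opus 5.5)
Your proposal is correct and follows the paper's proof step for step. You get the inlier bound by substituting $x_i=\mu_{g(i)}$ into the uniform estimate of \cref{thm:ae_exact}, the outlier bound from the reverse triangle inequality with $x_i=\mu_{g(i)}+o_i$, and the separation by comparing the two when $\|o_i\|_2>2\zeta_T+4\rho\varepsilon_o$; your observation that the uniform bound in \cref{thm:ae_exact} also covers outlier blocks is what the paper's argument uses implicitly.
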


\begin{proof}
    If $i\notin\mathcal{O}$, then $o_i=0$, and under \cref{assumption:exact_clustered},
    $
    x_i=\mu_{g(i)}.
    $
    Hence
    $
    \|F_{W_T}(x_i)-x_i\|_2=\|F_{W_T}(x_i)-\mu_{g(i)}\|_2\le\zeta_T+2\rho \varepsilon_o 
    $
    by \cref{thm:ae_exact}.
    
    If $i\in\mathcal{O}$, then
    $
    x_i=\mu_{g(i)}+o_i.
    $
    Therefore
    \begin{align*}
        \|F_{W_T}(x_i)-x_i\|_2
        &=
        \|F_{W_T}(x_i)-\mu_{g(i)}-o_i\|_2
        \ge
        \|o_i\|_2-\|F_{W_T}(x_i)-\mu_{g(i)}\|_2\\
        &\ge
        \|o_i\|_2-\zeta_T-2\rho \varepsilon_o .
    \end{align*}
    Thus, if $\|o_i\|_2>2\zeta_T+4\rho \varepsilon_o $, then
    $$
    \|F_{W_T}(x_i)-x_i\|_2>\zeta_T+2\rho \varepsilon_o ,
    $$
    whereas every inlier has reconstruction error at most $\zeta_T+2\rho \varepsilon_o $.
\end{proof}

\subsubsection{Main Results: IM Effect in the General Clustered Case $( \varepsilon_c >0)$}
\label{app_secA_real_case}

We next extend the previous result to the general clustered case $(\varepsilon_c>0)$, showing that the autoencoder still memorizes the underlying cluster structure early, up to an additional error induced by within-cluster variation.

\begin{theorem}[Early Memorization of Cluster Information in the General Case]
\label{thm:ae_beyond_exact}
    Assume Assumptions \ref{assumption:clustered_model}, \ref{assumption:outlier_rate}, \ref{assumption:outlier_magnitude}, \ref{assumption:center_jacobian}, \ref{assumption:radius_condition} and \ref{assumption:phi_bounds}.    
    Let $\alpha,\beta,L$ be the values set in \cref{prop:clean_jacobian}.
    Assume
    $
    0<\eta\le\min\left\{\frac{\alpha^2}{8\beta^4},\frac{\alpha^2}{2L\beta^2 {\color{black} \widehat R_0} }\right\}.
    $
    Fix an integer $T\ge T_\star( \varepsilon_c +\rho \varepsilon_o )$.
    If
    $
    \Delta_{\mathrm{path}}(T)\le R_{\mathrm{loc}}-\frac{2\beta}{\alpha^2} {\color{black} \widehat R_0} 
    $
    and
    $
    \delta:=\min_{a\neq b}\|\mu_a-\mu_b\|_2>2\zeta_T+4( \varepsilon_c +\rho \varepsilon_o ),
    $
    then, we have
    $$
    \|\mathcal{F}_{W_T}(X) - \widetilde X\|_{2,\infty}\le\zeta_T+2( \varepsilon_c +\rho \varepsilon_o ).
    $$
    Equivalently,
    $$
    \|F_{W_T}(x_i)-\mu_{g(i)}\|_2\le\zeta_T+2( \varepsilon_c +\rho \varepsilon_o )\quad\text{for all }i.
    $$
    Consequently,
    $$
    \argmin_{1\le k\le K}\|F_{W_T}(x_i)-\mu_k\|_2=g(i)\quad\text{for all }i.
    $$
\end{theorem}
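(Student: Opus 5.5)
The proof will mirror that of \cref{thm:ae_exact}. The only change is that the input mismatch is now $E=\Xi+O$ rather than $O$ alone. The first step is to bound the cluster-wise average of this mismatch. By linearity of $\Pi_{S_+}$ and the triangle inequality for $\|\cdot\|_{2,\infty}$, we have
$$
\|\Pi_{S_+}(E)\|_{2,\infty}\le\|\Pi_{S_+}(\Xi)\|_{2,\infty}+\|\Pi_{S_+}(O)\|_{2,\infty}.
$$
Combining \cref{lem:xi_projection} and \cref{lem:ae_projection} then gives
$$
\|\Pi_{S_+}(E)\|_{2,\infty}\le\varepsilon_c+\rho\varepsilon_o=:\nu.
$$

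Next, I will invoke \cref{lem:clean_meta} with this $\nu$. The step-size condition and \cref{assumption:center_jacobian,assumption:radius_condition} are exactly as in the hypotheses, and $T\ge T_\star(\nu)$. The lemma therefore yields
$$
\|\mathcal{F}_{\widetilde W_T}(\widetilde X)-\widetilde X\|_{2,\infty}\le 2(\varepsilon_c+\rho\varepsilon_o).
$$
Two points need checking here. First, \cref{lem:clean_meta} was stated with $\widehat R_0$, which already includes the $\sqrt n\varepsilon_c$ term, so it applies verbatim to the general case. Second, its last claim uses only $\mathcal{F}_{W}(\widetilde X)\in S_+$, $\widetilde X\in S_+$, and the decomposition $\widetilde r_\tau=(\mathcal{F}_{\widetilde W_\tau}(\widetilde X)-\widetilde X)-E$. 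None of these depends on \cref{assumption:exact_clustered}.

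Then I will apply \cref{prop:path_perturb} under the assumed condition $\Delta_{\mathrm{path}}(T)\le R_{\mathrm{loc}}-\frac{2\beta}{\alpha^2}\widehat R_0$. This gives
$$
\|\mathcal{F}_{W_T}(X)-\mathcal{F}_{\widetilde W_T}(\widetilde X)\|_{2,\infty}\le\zeta_T.
$$
This step requires \cref{prop:input_perturb}, whose constants $\Delta_f,\Delta_J$ are built from $\varepsilon_c+\varepsilon_o$; this is where \cref{assumption:clustered_model} enters. Adding the two bounds by the triangle inequality produces the first claim, namely
$$
\|\mathcal{F}_{W_T}(X)-\widetilde X\|_{2,\infty}\le\zeta_T+2(\varepsilon_c+\rho\varepsilon_o).
$$
The equivalent blockwise statement follows because $\widetilde x_i=\mu_{g(i)}$.

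Finally, the cluster-assignment claim follows by the same reverse triangle inequality argument as in \cref{thm:ae_exact}. For $b\neq g(i)$,
$$
\|F_{W_T}(x_i)-\mu_b\|_2\ge\delta-\zeta_T-2(\varepsilon_c+\rho\varepsilon_o).
$$
The separation hypothesis $\delta>2\zeta_T+4(\varepsilon_c+\rho\varepsilon_o)$ makes this strictly larger than $\|F_{W_T}(x_i)-\mu_{g(i)}\|_2$, so the argmin is $g(i)$.

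I expect the main obstacle to be the bookkeeping in the second step. One must confirm that every ingredient of \cref{lem:clean_meta} and \cref{prop:path_perturb} (the residual bound $\|\widetilde r_\tau\|_2\le\widehat R_0$, the locality of iterates, and the Jacobian range property) survives with $E$ no longer supported only on outliers. Since those results were already phrased in terms of $E$ and $\widehat R_0$, I anticipate no new estimates, only verification.
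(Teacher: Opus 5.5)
Your proposal is correct and matches the paper's proof essentially step for step. Like the paper, you bound $\|\Pi_{S_+}(E)\|_{2,\infty}\le\varepsilon_c+\rho\varepsilon_o$ via \cref{lem:xi_projection,lem:ae_projection}, apply \cref{lem:clean_meta} with $\nu=\varepsilon_c+\rho\varepsilon_o$ together with \cref{prop:path_perturb}, combine the two bounds by the triangle inequality, and obtain the argmin claim from the separation condition on $\delta$; your check that \cref{lem:clean_meta} is already phrased in terms of $E$ and $\widehat R_0$, independently of \cref{assumption:exact_clustered}, is exactly what lets the paper reuse it verbatim.
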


\begin{proof}
    By the triangle inequality and \cref{lem:xi_projection,lem:ae_projection},
    $$
    \|\Pi_{S_+}(E)\|_{2,\infty}\le\|\Pi_{S_+}(\Xi)\|_{2,\infty}+\|\Pi_{S_+}(O)\|_{2,\infty}\le \varepsilon_c +\rho \varepsilon_o .
    $$
    Applying \cref{lem:clean_meta} with $\nu= \varepsilon_c +\rho \varepsilon_o $ gives
    $
    \| \mathcal{F}_{\widetilde{W}_{T}}(\widetilde{X}) -\widetilde X\|_{2,\infty}\le2( \varepsilon_c +\rho \varepsilon_o ).
    $
    and
    applying \cref{prop:path_perturb} gives
    $
    \| \mathcal{F}_{W_T}(X) - \mathcal{F}_{\widetilde{W}_{T}}(\widetilde{X}) \|_{2,\infty}\le\zeta_T.
    $
    Hence, we have
    $$
    \|\mathcal{F}_{W_T}(X) - \widetilde X\|_{2,\infty}\le\zeta_T+2( \varepsilon_c +\rho \varepsilon_o ),
    $$
    which concludes
    $$
    \|F_{W_T}(x_i)-\mu_{g(i)}\|_2\le\zeta_T+2( \varepsilon_c +\rho \varepsilon_o )\quad\text{for all }i.
    $$
    
    Fix $i$ and write $k=g(i)$.
    For any $b\neq k$,
    \begin{align*}
        \|F_{W_T}(x_i)-\mu_b\|_2
        \ge
        \|\mu_b-\mu_k\|_2-\|F_{W_T}(x_i)-\mu_k\|_2
        \ge
        \delta-(\zeta_T+2( \varepsilon_c +\rho \varepsilon_o )).
    \end{align*}
    Since $\delta>2\zeta_T+4( \varepsilon_c +\rho \varepsilon_o )$, we have
    $$
    \delta-(\zeta_T+2( \varepsilon_c +\rho \varepsilon_o ))>\zeta_T+2( \varepsilon_c +\rho \varepsilon_o )\ge\|F_{W_T}(x_i)-\mu_k\|_2.
    $$
    Therefore
    $$
    \|F_{W_T}(x_i)-\mu_b\|_2>\|F_{W_T}(x_i)-\mu_k\|_2\quad\text{for all }b\neq k.
    $$
\end{proof}

The following corollary presents our final main result: the early memorization of cluster information yields the IM effect even in the general clustered case $(\varepsilon_c>0)$, where reconstruction errors still separate inliers from sufficiently large outliers despite within-cluster variation.

\begin{corollary}[Reconstruction-Error Separation in the Exact Clustered Case]
\label{cor:ae_outlier_general}
    Under the assumptions and settings of \cref{thm:ae_beyond_exact}, every inlier $i\notin\mathcal{O}$ satisfies
    $$
    \|F_{W_T}(x_i)-x_i\|_2\le\zeta_T+3 \varepsilon_c +2\rho \varepsilon_o .
    $$
    On the other hand, every outlier $i\in\mathcal{O}$ satisfies
    $$
    \|F_{W_T}(x_i)-x_i\|_2\ge\|o_i\|_2-\zeta_T-3 \varepsilon_c -2\rho \varepsilon_o .
    $$
    Hence, if an outlier satisfies
    $$
    \|o_i\|_2>\delta+2\varepsilon_c,
    $$
    then its reconstruction error is strictly larger than that of every inlier.
\end{corollary}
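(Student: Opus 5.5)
The plan is to read off everything from the uniform center-approximation bound of \cref{thm:ae_beyond_exact}. Set $M:=\zeta_T+2(\varepsilon_c+\rho\varepsilon_o)$. Under the stated hypotheses, that theorem gives $\|F_{W_T}(x_i)-\mu_{g(i)}\|_2\le M$ for every $i\in[n]$. It also gives the separation condition $\delta>2\zeta_T+4(\varepsilon_c+\rho\varepsilon_o)$, which is part of its assumptions. The three claims then follow from the triangle inequality applied to the decomposition \cref{eq:cluster_struc}.

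\textbf{Inliers.} For $i\notin\mathcal{O}$ we have $o_i=0$, so $x_i=\mu_{g(i)}+\xi_i$. Then
$\|F_{W_T}(x_i)-x_i\|_2\le\|F_{W_T}(x_i)-\mu_{g(i)}\|_2+\|\xi_i\|_2\le M+\varepsilon_c=\zeta_T+3\varepsilon_c+2\rho\varepsilon_o$,
using \cref{assumption:clustered_model} for $\|\xi_i\|_2\le\varepsilon_c$.

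\textbf{Outliers.} For $i\in\mathcal{O}$ we have $x_i=\mu_{g(i)}+\xi_i+o_i$. The reverse triangle inequality gives
$\|F_{W_T}(x_i)-x_i\|_2\ge\|o_i\|_2-\|\xi_i\|_2-\|F_{W_T}(x_i)-\mu_{g(i)}\|_2\ge\|o_i\|_2-\zeta_T-3\varepsilon_c-2\rho\varepsilon_o$.

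\textbf{Separation.} Write $M':=\zeta_T+3\varepsilon_c+2\rho\varepsilon_o$. By the two bounds above, strict separation holds as soon as $\|o_i\|_2-M'>M'$, that is, as soon as $\|o_i\|_2>2\zeta_T+6\varepsilon_c+4\rho\varepsilon_o$.

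The only step needing care is checking that the well-separation condition of \cref{assumption:well_separation} implies this threshold. The hypothesis of \cref{thm:ae_beyond_exact} gives $\delta>2\zeta_T+4\varepsilon_c+4\rho\varepsilon_o$. Adding $2\varepsilon_c$ to both sides yields $\delta+2\varepsilon_c>2\zeta_T+6\varepsilon_c+4\rho\varepsilon_o$. Hence $\|o_i\|_2>\delta+2\varepsilon_c$ implies the required inequality, and the outlier's reconstruction error strictly exceeds $M'$, which bounds every inlier's error.

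Both sides of this comparison are nonnegative, so squaring preserves the strict inequality. This gives the squared-loss form used in \cref{thm_main}.
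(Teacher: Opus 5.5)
Your proposal is correct and matches the paper's proof essentially step for step. The paper applies the triangle inequality to $x_i=\mu_{g(i)}+\xi_i$ for inliers and the reverse triangle inequality for outliers. It then adds $2\varepsilon_c$ to the separation hypothesis $\delta>2\zeta_T+4(\varepsilon_c+\rho\varepsilon_o)$ of \cref{thm:ae_beyond_exact}, which shows that $\|o_i\|_2>\delta+2\varepsilon_c$ exceeds the threshold $2\zeta_T+6\varepsilon_c+4\rho\varepsilon_o$. Your final remark about squaring nonnegative quantities is a harmless addition that the paper leaves implicit.
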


\begin{proof}
    Fix $i$.
    If $i\notin\mathcal{O}$, then $o_i=0$, so
    $
    x_i=\mu_{g(i)}+\xi_i.
    $
    Therefore
    \begin{align*}
        \|F_{W_T}(x_i)-x_i\|_2
        &=
        \|F_{W_T}(x_i)-\mu_{g(i)}-\xi_i\|_2
        \le
        \|F_{W_T}(x_i)-\mu_{g(i)}\|_2+\|\xi_i\|_2\\
        &\le
        \zeta_T+2( \varepsilon_c +\rho \varepsilon_o )+ \varepsilon_c
        =
        \zeta_T+3 \varepsilon_c +2\rho \varepsilon_o.
    \end{align*}
    
    If $i\in\mathcal{O}$, then
    $
    x_i=\mu_{g(i)}+\xi_i+o_i.
    $
    Hence
    \begin{align*}
        \|F_{W_T}(x_i)-x_i\|_2
        &=
        \|F_{W_T}(x_i)-\mu_{g(i)}-\xi_i-o_i\|_2
        \ge
        \|o_i\|_2-\|F_{W_T}(x_i)-\mu_{g(i)}\|_2-\|\xi_i\|_2\\
        &\ge
        \|o_i\|_2-\zeta_T-2( \varepsilon_c +\rho \varepsilon_o )- \varepsilon_c
        =
        \|o_i\|_2-\zeta_T-3 \varepsilon_c -2\rho \varepsilon_o.
    \end{align*}
    Since \cref{thm:ae_beyond_exact} assumes
    $$
    \delta>2\zeta_T+4(\varepsilon_c+\rho\varepsilon_o),
    $$
    we obtain
    $$
    \delta+2\varepsilon_c>2\zeta_T+6\varepsilon_c+4\rho\varepsilon_o.
    $$
    Therefore, if
    $$
    \|o_i\|_2>\delta+2\varepsilon_c,
    $$
    then
    $$
    \|o_i\|_2>2\zeta_T+6 \varepsilon_c +4\rho \varepsilon_o,
    $$
    and hence
    $$
    \|o_i\|_2-\zeta_T-3 \varepsilon_c -2\rho \varepsilon_o
    >
    \zeta_T+3 \varepsilon_c +2\rho \varepsilon_o.
    $$
    Thus, every such outlier has reconstruction error strictly larger than that of every inlier.
\end{proof}

\subsubsection{Note: Monotonic Dependence of Key Quantities}
\label{app:dependence}

We summarize the dependence of the key constants appearing in the proofs. 
Throughout, we use the notation $c(\cdot)$ to indicate monotonic dependence.

We focus on the following elements:
$$
n_{\mathrm{min}},n_{\mathrm{max}},\varepsilon_c, \widetilde{R}_{0} ,\rho,\delta
$$
We use the notation $c = c(a_1\uparrow,\ldots,a_m\downarrow)$ to indicate that the constant $c$ depends on $a_1,\ldots,a_m$, where $\uparrow$ and $\downarrow$ denote that $c$ is increasing or decreasing in the corresponding argument. 
Logarithmic factors are ignored when specifying monotonicity.

The quantities $\alpha$, $\beta$, and $L$ scale with the cluster sizes as
\[
\alpha = c(n_{\min}\uparrow), \quad 
\beta = c(n_{\max}\uparrow), \quad 
L = c(n_{\max}\uparrow).
\]

The approximation and Jacobian errors satisfy
\[
\Delta_f = c(\varepsilon_c\uparrow), \quad 
\Delta_J = c(n\uparrow, \varepsilon_c\uparrow),
\]
which imply
\[
\beta_X = \beta + \Delta_J = c(n_{\max}\uparrow, \varepsilon_c\uparrow).
\]

The key growth parameters $\kappa$ and $\gamma$ satisfy
\[
\kappa = c(n_{\max}\uparrow, \varepsilon_c\uparrow, \widetilde{R}_{0}\uparrow), \quad
\gamma = c(n_{\max}\uparrow, \varepsilon_c\uparrow, \widetilde{R}_{0}\uparrow).
\]

Consequently, the path deviation and residual bounds satisfy
\[
\Delta_{\mathrm{path}}(T) = c(T, n_{\max}\uparrow, \varepsilon_c\uparrow, \widetilde{R}_{0}\uparrow), \quad
\zeta_T = c(T, n_{\max}\uparrow, \varepsilon_c\uparrow, \widetilde{R}_{0}\uparrow).
\]

The lower bound on the iteration index in the main theorem satisfies
\[
T_\star(\varepsilon_c + \rho \varepsilon_o) = c(n_{\min}\downarrow),
\]
while the admissible range for the upper bound is determined by the conditions
\[
\Delta_{\mathrm{path}}(T) \le R_{\mathrm{loc}} - \frac{2\beta}{\alpha^2}\widehat R_0,
\quad
\delta > 2\zeta_T + 4(\varepsilon_c + \rho \varepsilon_o),
\]
which yield upper bounds of the form
\[
c(n_{\min}\uparrow, n_{\max}\downarrow, \varepsilon_c\downarrow, \widetilde{R}_{0}\downarrow, \rho\downarrow, \delta\uparrow)
\]
since $\widehat{R}_{0}=\widetilde{R}_{0}+\sqrt n \varepsilon_c+\sqrt{\rho n} \varepsilon_o=c(\epsilon_c\uparrow,\widetilde{R}_{0}\uparrow, \rho\uparrow)$.
Therefore, the lower and upper bounds, i.e., $T_1$ and $T_2$, on the iteration interval in the main theorem scale as
\[
c(n_{\min}\downarrow)
\quad \text{and} \quad
c(n_{\min}\uparrow, n_{\max}\downarrow, \varepsilon_c\downarrow, \widetilde{R}_{0}\downarrow, \rho\downarrow, \delta\uparrow),
\]
respectively.


\clearpage
\section{Detailed Experimental Analysis}
\label{sec-appen:exp}

\subsection{Data Description}
\label{sec-appen:data}
\paragraph{Simulation Datasets}
We generate a synthetic dataset consisting of $n = 2{,}000$ samples in $\mathbb{R}^2$ with $K = 3$ cluster centers. 
The cluster centers $\mu_1, \mu_2,$ and $\mu_3$ are placed in $[-10, 10]^2$ such that the minimum pairwise separation satisfies $\delta = 16$, ensuring well-separated clusters. 

Inlier samples are distributed approximately uniformly across the three clusters (about $633$ samples per cluster) unless otherwise specified. 
Each inlier $x_i$ is generated as $x_i = \mu_{g(i)} + \xi_i$, where $g(i) \in \{1,2,3\}$ denotes the cluster assignment and $\xi_i \sim \mathcal{N}(0, \widetilde{\varepsilon}_c^2 I)$ is Gaussian noise. 
The parameter $\widetilde{\varepsilon}_c$ controls the within-cluster variation, corresponding to $\varepsilon_c$ in \cref{assumption:clustered_model}. 

Outliers are generated independently with an outlier rate $\rho = 0.05$, resulting in $100$ outlier samples. 
These samples are drawn uniformly from $[-18, 18]^2$, subject to the constraint that each outlier is at least a distance of $10$ away from every cluster center, ensuring that they lie outside the inlier regions.

Finally, all features are min-max scaled to the range $[0, 1/\sqrt{2}]$, so that the resulting inputs have bounded $\ell_2$ norms not exceeding one.

While the above describes the baseline data generation process, the parameters $\varepsilon_c$, $(n_{\min}, n_{\max})$, and $\rho$ are varied across simulation studies to examine their effects. 
We refer to \cref{sec:exp_simulation} in the main manuscript for detailed configurations.


\paragraph{Real Benchmark Datasets}
We evaluate a total of 57 datasets from \texttt{ADBench} \citep{han2022adbench}, including 46 tabular datasets, 6 image datasets, and 5 text datasets. 
The tabular datasets cover diverse application domains, including healthcare, finance, physical, and astronautics. 
Table~\ref{tab:adbench_summ} summarizes the basic information of all datasets.
For additional details, we refer the reader to the official \texttt{ADBench} GitHub repository\footnote{\url{https://github.com/Minqi824/ADBench}}.

\begin{table}[h!]
\renewcommand\thetable{B.1}
\centering
\caption{Description of \texttt{ADBench} benchmark datasets.
}
\label{tab:adbench_summ}
\vskip 0.1in
\setlength{\tabcolsep}{1mm}
\fontsize{8pt}{8pt}\selectfont
\resizebox{0.85\textwidth}{!}{
\begin{tabular}{l>{\ttfamily}lccccl}
\toprule
\textbf{Number} & {\normalfont{\textbf{Dataset Name}}} & \textbf{\#Samples} & \textbf{\#Features} & \textbf{\#Anomaly} & \textbf{\%Anomaly} & \textbf{Category} \\
\midrule
\textbf{1}      & ALOI                  & 49534              & 27                  & 1508               & 3.04               & Image             \\
\textbf{2}      & annthyroid            & 7200               & 6                   & 534                & 7.42               & Healthcare        \\
\textbf{3}      & backdoor              & 95329              & 196                 & 2329               & 2.44               & Network           \\
\textbf{4}      & breastw               & 683                & 9                   & 239                & 34.99              & Healthcare        \\
\textbf{5}      & campaign              & 41188              & 62                  & 4640               & 11.27              & Finance           \\
\textbf{6}      & cardio                & 1831               & 21                  & 176                & 9.61               & Healthcare        \\
\textbf{7}      & Cardiotocography      & 2114               & 21                  & 466                & 22.04              & Healthcare        \\
\textbf{8}      & celeba                & 202599             & 39                  & 4547               & 2.24               & Image             \\
\textbf{9}      & census                & 299285             & 500                 & 18568              & 6.2                & Sociology         \\
\textbf{10}     & cover                 & 286048             & 10                  & 2747               & 0.96               & Botany            \\
\textbf{11}     & donors                & 619326             & 10                  & 36710              & 5.93               & Sociology         \\
\textbf{12}     & fault                 & 1941               & 27                  & 673                & 34.67              & Physical          \\
\textbf{13}     & fraud                 & 284807             & 29                  & 492                & 0.17               & Finance           \\
\textbf{14}     & glass                 & 214                & 7                   & 9                  & 4.21               & Forensic          \\
\textbf{15}     & Hepatitis             & 80                 & 19                  & 13                 & 16.25              & Healthcare        \\
\textbf{16}     & http                  & 567498             & 3                   & 2211               & 0.39               & Web               \\
\textbf{17}     & InternetAds           & 1966               & 1555                & 368                & 18.72              & Image             \\
\textbf{18}     & Ionosphere            & 351                & 32                  & 126                & 35.9               & Oryctognosy       \\
\textbf{19}     & landsat               & 6435               & 36                  & 1333               & 20.71              & Astronautics      \\
\textbf{20}     & letter                & 1600               & 32                  & 100                & 6.25               & Image             \\
\textbf{21}     & Lymphography          & 148                & 18                  & 6                  & 4.05               & Healthcare        \\
\textbf{22}     & magic.gamma           & 19020              & 10                  & 6688               & 35.16              & Physical          \\
\textbf{23}     & mammography           & 11183              & 6                   & 260                & 2.32               & Healthcare        \\
\textbf{24}     & mnist                 & 7603               & 100                 & 700                & 9.21               & Image             \\
\textbf{25}     & musk                  & 3062               & 166                 & 97                 & 3.17               & Chemistry         \\
\textbf{26}     & optdigits             & 5216               & 64                  & 150                & 2.88               & Image             \\
\textbf{27}     & PageBlocks            & 5393               & 10                  & 510                & 9.46               & Document          \\
\textbf{28}     & pendigits             & 6870               & 16                  & 156                & 2.27               & Image             \\
\textbf{29}     & Pima                  & 768                & 8                   & 268                & 34.9               & Healthcare        \\
\textbf{30}     & satellite             & 6435               & 36                  & 2036               & 31.64              & Astronautics      \\
\textbf{31}     & satimage-2            & 5803               & 36                  & 71                 & 1.22               & Astronautics      \\
\textbf{32}     & shuttle               & 49097              & 9                   & 3511               & 7.15               & Astronautics      \\
\textbf{33}     & skin                  & 245057             & 3                   & 50859              & 20.75              & Image             \\
\textbf{34}     & smtp                  & 95156              & 3                   & 30                 & 0.03               & Web               \\
\textbf{35}     & SpamBase              & 4207               & 57                  & 1679               & 39.91              & Document          \\
\textbf{36}     & speech                & 3686               & 400                 & 61                 & 1.65               & Linguistics       \\
\textbf{37}     & Stamps                & 340                & 9                   & 31                 & 9.12               & Document          \\
\textbf{38}     & thyroid               & 3772               & 6                   & 93                 & 2.47               & Healthcare        \\
\textbf{39}     & vertebral             & 240                & 6                   & 30                 & 12.5               & Biology           \\
\textbf{40}     & vowels                & 1456               & 12                  & 50                 & 3.43               & Linguistics       \\
\textbf{41}     & Waveform              & 3443               & 21                  & 100                & 2.9                & Physics           \\
\textbf{42}     & WBC                   & 223                & 9                   & 10                 & 4.48               & Healthcare        \\
\textbf{43}     & WDBC                  & 367                & 30                  & 10                 & 2.72               & Healthcare        \\
\textbf{44}     & Wilt                  & 4819               & 5                   & 257                & 5.33               & Botany            \\
\textbf{45}     & wine                  & 129                & 13                  & 10                 & 7.75               & Chemistry         \\
\textbf{46}     & WPBC                  & 198                & 33                  & 47                 & 23.74              & Healthcare        \\
\textbf{47}     & yeast                 & 1484               & 8                   & 507                & 34.16              & Biology           \\
\textbf{48}     & CIFAR10               & 5263               & 512                 & 263                & 5                  & Image             \\
\textbf{49}     & FashionMNIST          & 6315               & 512                 & 315                & 5                  & Image             \\
\textbf{50}     & MNIST-C               & 10000              & 512                 & 500                & 5                  & Image             \\
\textbf{51}     & MVTec-AD              & 5354               & 512                 & 1258               & 23.5               & Image             \\
\textbf{52}     & SVHN                  & 5208               & 512                 & 260                & 5                  & Image             \\
\textbf{53}     & Agnews                & 10000              & 768                 & 500                & 5                  & NLP               \\
\textbf{54}     & Amazon                & 10000              & 768                 & 500                & 5                  & NLP               \\
\textbf{55}     & Imdb                  & 10000              & 768                 & 500                & 5                  & NLP               \\
\textbf{56}     & Yelp                  & 10000              & 768                 & 500                & 5                  & NLP               \\
\textbf{57}     & 20news        & 11905              & 768                 & 591                & 4.96               & NLP              \\
\bottomrule
\end{tabular}
}
\end{table}

All input features, including both raw features and learned representations, are preprocessed using min-max scaling prior to model training. 
For tabular datasets without learned representations, we directly use the scaled raw features as inputs.

For image and text datasets, we directly use the pre-computed embedding features provided in \texttt{ADBench}. 
Specifically, image datasets are represented using features extracted from a pre-trained Vision Transformer (ViT, \cite{DBLP:conf/iclr/DosovitskiyB0WZ21}), and text datasets are represented using embeddings obtained via BERT \cite{DBLP:conf/naacl/DevlinCLT19}.

For tabular datasets, we construct representations using TabPFN-v2 \cite{DBLP:journals/corr/abs-2502-17361} obtained from the official GitHub repository\footnote{\url{https://github.com/PriorLabs/TabPFN}}.
Since TabPFN requires a supervised setup, we treat the first column as the target variable and the remaining columns as input features, and extract representations based on the resulting model.
When the first column has negligible variance, we instead use the second column as the target.
To ensure computational feasibility and representation quality, we restrict our analysis to datasets satisfying $4{,}000 < n < 50{,}000$ and $p < 50$.
Under these criteria, the tabular datasets used for representation learning are:
\texttt{ALOI}, \texttt{annthyroid}, \texttt{campaign}, \texttt{landsat}, \texttt{magic.gamma}, \texttt{mammography}, \texttt{PageBlocks}, \texttt{shuttle}, \texttt{SpamBase}, \texttt{Wilt}, and \texttt{yeast}.

\subsection{Implementation Details}
\label{sec-appen:implementation}

As described in \cref{sec-appen:data}, all datasets are preprocessed using min-max scaling so that each feature lies in $[0, 1]$. 
All experiments are run on a single NVIDIA H100 GPU using \texttt{PyTorch}. 
Each reported result is averaged over five independent runs with different random initializations and mini-batch orderings.
For the simulation experiments involving $\widetilde{R}_0$, randomness is introduced only through the mini-batch ordering, while the initialization is kept fixed. 
The remainder of this subsection details the architecture and optimization choices for the simulation study (\cref{sec:exp_simulation}) and the real-world experiments (\cref{sec:exp-real}).


\clearpage
\paragraph{Simulation Study}

We train a single-hidden-layer autoencoder $F_W(x) = A \tanh(W x)$ with hidden size $H = 32.$
Both the encoder weight $W$ and the decoder weight $A$ are trainable and initialized using the Kaiming uniform scheme \citep{7410480}.
Training uses the Adam optimizer \citep{kingma2014adam} with a constant learning rate of $2 \times 10^{-4}$, batch size $500$, and $1{,}000$ epochs (i.e., $4{,}000$ optimizer updates given $n = 2{,}000$).
The IM window is computed as the maximal contiguous interval of training epochs $\tau$ during which $\textup{AUROC}(\tau) \ge 0.9$.
For experiment (iv), the three initializations with different $\widetilde{R}_0$ are obtained by sampling three random initializations using the Kaiming uniform scheme \citep{7410480}.
All simulation results are averaged over three random seeds.

\paragraph{Real-World Experiments: Backbones and Baselines}

For the two IM-based backbones, ODIM \citep{DBLP:conf/icml/KimHLKK24} and ALTBI \citep{DBLP:conf/aaai/ChoHBK25}, we follow the original source codes published by their authors, including the network architectures, optimizers, batch sizes, and stopping criteria. 
Since ODIM and ALTBI share the same implementation details, we adopt the configuration provided in the official ODIM GitHub repository.\footnote{\url{https://github.com/jshwang0311/ODIM}} 
Unless otherwise specified, we use min-max scaled raw input features for tabular data, ViT-based representations followed by min-max scaling for image data, and BERT-based representations followed by min-max scaling for text data.

At the beginning of each training run, we maintain an exponential moving average (EMA) of the network parameters,
$\bar{W}_{t} = \alpha\, \bar{W}_{t-1} + (1-\alpha)\, W_{t}$,
with decay $\alpha = 0.999$ over the first $50$ optimizer updates.
The averaged parameters $\bar{W}_{50}$ are then used as the initial weights for the main ALTBI/ODIM training run, after which the standard (non-EMA) optimizer dynamics resume.
A decay of $0.999$ corresponds to an effective averaging window of order $10^{3}$ iterations, which is long enough that the warm-up phase emphasizes the slowly-changing inlier-driven component of the gradient and suppresses the irregular signal contributed by outliers, consistent with the role of $\widetilde{R}_0$ identified in \cref{thm_main}.

The modified ODIM and ALTBI with two proposed strategies (denoted ODIM* and ALTBI*, respectively, in \cref{fig:real_auroc} and \cref{fig:real_auprc}) replace the input with a pre-trained representation (TabPFN for tabular data, ViT for image data) and warm up the parameters with the EMA scheme.

The 22 baselines cover classical machine-learning detectors, deep autoencoder-based methods, and diffusion-based detectors. 
These methods include traditional machine-learning approaches implemented in \texttt{ADBench}, such as kNN \cite{ramaswamy2000efficient}, LOF \cite{10.1145/335191.335388}, OCSVM \cite{ocsvm}, CBLOF \cite{he2003discovering}, PCA \cite{shyu2003novel}, FeatureBagging \cite{lazarevic2005feature}, IForest \citep{liu2008isolation}, MCD \cite{fauconnier2009outliers}, HBOS \cite{goldstein2012histogram}, LODA \cite{pevny2016loda}, COPOD \cite{li2020copod}, and ECOD \cite{li2022ecod}.
We also consider two deep learning-based UOD methods, DAGMM \cite{dagmm} and DeepSVDD \citep{deepsvdd}, which are available in \texttt{ADBench}, as well as more recent approaches beyond \texttt{ADBench}, including DROCC \cite{DBLP:conf/icml/GoyalRJS020}, ICL \citep{icl}, GOAD \cite{goad}, and DTE \cite{DBLP:journals/corr/abs-2305-18593}.

For fair comparison, the results of the 22 baseline methods are directly taken from the supplementary materials of \cite{DBLP:journals/corr/abs-2305-18593}, without re-running the corresponding models. 
In contrast, we implement ODIM, ALTBI, and their modified versions, by ourselves.

\clearpage
\subsection{Detailed Experiment Results}
\label{sec-append:futehr_exp}

{\color{black}

\textcolor{black}{
This section supports \cref{sec:exp-real} with three additional results:
(i) an ablation study that disentangles the effect of EMA warm-up from that of the pre-trained representation,
(ii) the AUPRC results, while the main body shows AUROC results, and
(iii) a full benchmark comparison on \texttt{ADBench} evaluating the proposed guidelines across all datasets.
}

\paragraph{\textcolor{black}{Ablation Study Results}}

\cref{tab:ablation_tabular} reports the AUROC of IM-based methods, ALTBI and ODIM, on the 47 tabular \texttt{ADBench} datasets under three configurations:
(i) the vanilla raw input training without the proposed techniques, corresponding to the original implementations of ODIM and ALTBI,
(ii) with the EMA warm-up applied to early iterations ($+$EMA), and
(iii) with both the EMA warm-up and a TabPFN embedding ($+$EMA $+$TabPFN), where the TabPFN representation is applied only to the subset of datasets described in \cref{sec-appen:data}.

The EMA warm-up alone improves the AUROC of both methods by 0.6 to 0.7 points, supporting the role of small $\widetilde{R}_0$ in widening the IM window.
Using the TabPFN representation gives a further AUROC improvement for ALTBI, while for ODIM the two strategies yield comparable AUROC.

}

\begin{table}[h]
    \renewcommand\thetable{B.2}
    \centering
    \small
    \caption{\textcolor{black}{
    Comparison of performance on 47 tabular \texttt{ADBench} datasets \citep{han2022adbench} in terms of the two proposed strategies.
    Best results are \textbf{bolded}.
    }}
    \label{tab:ablation_tabular}
    \vskip 0.1in
    \begin{tabular}{l cc}
        \toprule
        Method                       & ALTBI          & ODIM           \\
        \midrule
        Raw (Vanilla)                & 0.764          & 0.759          \\
        $+$EMA                      & 0.770          & \textbf{0.766} \\
        $+$EMA $+$TabPFN (ours)   & \textbf{0.775} & \textbf{0.766} \\
        \bottomrule
    \end{tabular}
\end{table}

{\color{black}
\cref{tab:ablation_image} reports the analogous comparison on the three image datasets.
As in the tabular case, `Raw (Vanilla)' denotes training on raw inputs without EMA warm-up, 
`$+$EMA' applies EMA to raw inputs, 
and `$+$EMA + ViT' further combines EMA with the ViT representation.
Using EMA alone does not provide a significant improvement. 
In contrast, `$+$EMA $+$ViT' improves the performance of ALTBI on \texttt{CIFAR10} and \texttt{FashionMNIST}, and consistently improves the performance of ODIM on these datasets, while yielding comparable AUROC on \texttt{SVHN}. 
This is because, on \texttt{CIFAR10} and \texttt{FashionMNIST}, the ViT embedding tightens the inlier cluster structure, allowing the EMA warm-up to further enlarge the IM window. 
In contrast, on \texttt{SVHN}, the ViT embedding does not produce sufficiently compact inlier clusters for the EMA warm-up to yield additional gains.
}

\begin{table}[h]
    \renewcommand\thetable{B.3}
    \centering
    \small
    \caption{\textcolor{black}{
    Per-component ablation on the three image benchmarks, in AUROC averaged over the ten normal-class settings.
    `$+$ViT' applies the original ALTBI \citep{DBLP:conf/aaai/ChoHBK25} or ODIM \citep{DBLP:conf/icml/KimHLKK24} method on a ViT embedding \citep{DBLP:conf/iclr/DosovitskiyB0WZ21}, and `$+$EMA $+$ViT' additionally adds the EMA warm-up.
    Best results are \textbf{bolded}.
    }}
    \label{tab:ablation_image}
    \vskip 0.1in
    \begin{tabular}{l ccc ccc}
        \toprule
        \multirow{2}{*}{Method} & \multicolumn{3}{c}{ALTBI} & \multicolumn{3}{c}{ODIM} \\
        \cmidrule(lr){2-4} \cmidrule(lr){5-7}
        & \texttt{CIFAR10}        & \texttt{FashionMNIST}   & \texttt{SVHN}           & \texttt{CIFAR10}       & \texttt{FashionMNIST}   & \texttt{SVHN}           \\
        \midrule
         {Raw (Vanilla)}     & {0.576} & {0.901} & {\textbf{0.592}} & {0.578} & {0.893} & {0.577} \\
        $+$EMA                       & 0.576 & 0.902 & \textbf{0.592} & 0.579 & 0.892 & 0.576 \\
        $+$EMA $+$ViT (ours)        & \textbf{0.888} & \textbf{0.920} & 0.581 & \textbf{0.876} & \textbf{0.902} & \textbf{0.579} \\
        \bottomrule
    \end{tabular}
\end{table}

\clearpage
\paragraph{AUPRC Results}

\Cref{fig:real_auprc} presents the performance comparison under the AUPRC metric.
On the tabular datasets (leftmost panel), ALTBI* and ODIM* are still among the top-performing methods, although the gap to machine-learning-based baselines such as MCD and IForest narrows compared to AUROC.
This result reflects the well-known sensitivity of AUPRC to the anomaly proportion, since on a subset of \texttt{ADBench} datasets with extremely low contamination rates, isolation- and density-based detectors can retain a relative advantage.
On the image datasets, the improvement from the pre-trained representation is even more pronounced under AUPRC than under AUROC.
That is, ALTBI* and ODIM* dominate all baselines on CIFAR10 and FashionMNIST by a large margin.

\begin{figure}[h]
    \centering
    \includegraphics[width=0.99\linewidth]{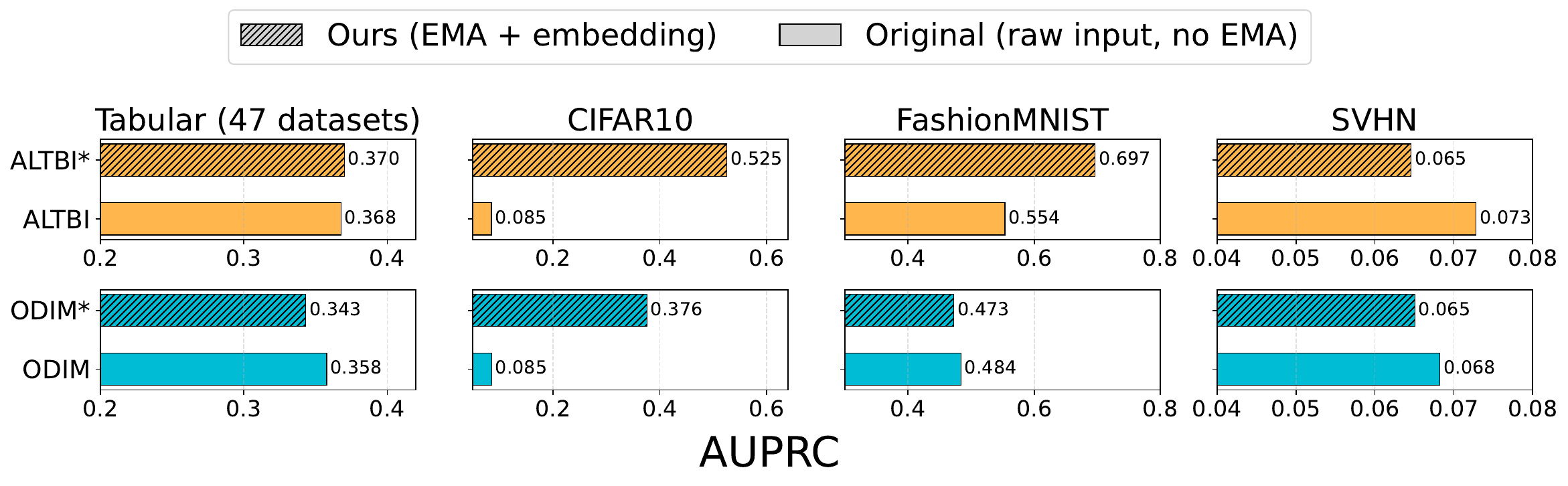}
    \caption{\textcolor{black}{
    Unsupervised outlier detection performance (AUPRC) on the \texttt{ADBench} benchmark \citep{han2022adbench}.
    (Left $\to$ Right) average over 47 tabular datasets and results on CIFAR10, FashionMNIST, and SVHN.
    Hatched bars (ALTBI* and ODIM*) denote the proposed variants that combine the EMA-based initialization with a pre-trained representation.
    }}
    \label{fig:real_auprc}
\end{figure}

\paragraph{Full Benchmark Results}
{\color{black}
\cref{tab:full_auroc} and \cref{tab:full_auprc} compare the IM-based methods (ALTBI and ODIM) on all 57 \texttt{ADBench} datasets, under AUROC and AUPRC.
In particular, we compare three variants:
Original (the original ALTBI and ODIM using \texttt{ADBench}'s default inputs for each data type, i.e., raw features for tabular data and ViT/BERT embeddings for image/text data),
$+$EMA (applying EMA warm-up on the same inputs), and
$+$EMA $+$TabPFN (additionally replacing a subset of tabular inputs with TabPFN embeddings).

The proposed guidelines consistently improve performance over the original setting. The average AUROC of ALTBI rises from $0.757$ (Original) to $0.762$ ($+$EMA) and further to $0.766$ (Ours), and a similar trend is observed for ODIM ($0.751 \to 0.756 \to 0.757$). Analogous improvements also appear for AUPRC, confirming consistent performance improvement across diverse data types.

\cref{tab:adbench1,tab:adbench2} show comparisons between our proposed method and 22 existing baselines, including both classical machine-learning and deep-learning approaches. 
The baseline results are directly taken from \citep{DBLP:journals/corr/abs-2305-18593}, whereas ALTBI and ODIM are run by our own implementations. 
All reported results are averaged over three random initializations.
We note that, since the original ALTBI and ODIM already employ embedding representations for image and text datasets, our modifications mainly consist of introducing the EMA warm-up and partially replacing tabular raw inputs with TabPFN embeddings. 
In \cref{tab:adbench1,tab:adbench2}, we report only the modified variant of ALTBI.
We can observe that ALTBI with our proposed approaches achieves the best performance in terms of both AUROC and AUPRC.
}

\begin{table}[h!]
\renewcommand\thetable{B.4}
\centering
\caption{
\textcolor{black}{Per-dataset AUROC of the variants of the IM-based methods (Original, $+$EMA, Ours) on the 57 \texttt{ADBench} datasets.
Best average AUROC for each method is \textbf{bolded}.}
}
\label{tab:full_auroc}
\vskip 0.15in
\setlength{\tabcolsep}{1mm}
\fontsize{6pt}{6pt}\selectfont
\resizebox{0.8\textwidth}{!}{
\begin{tabular}{l>{\ttfamily}lccc|ccc}
\toprule
\multirow{2}{*}{\textbf{Number}} & \multirow{2}{*}{\normalfont{\textbf{Dataset Name}}} & \multicolumn{3}{c}{\textbf{ALTBI}} & \multicolumn{3}{c}{\textbf{ODIM}} \\
\cmidrule(lr){3-5} \cmidrule(lr){6-8}
 & & {Original} & {$+$EMA} & {Ours} & {Original} & {$+$EMA} & {Ours} \\
\midrule
\textbf{1} & ALOI & 0.521 & 0.526 & 0.601 & 0.525 & 0.530 & 0.606 \\
\textbf{2} & annthyroid & 0.633 & 0.642 & 0.691 & 0.632 & 0.625 & 0.685 \\
\textbf{3} & backdoor & 0.876 & 0.868 & 0.868 & 0.909 & 0.897 & 0.897 \\
\textbf{4} & breastw & 0.981 & 0.982 & 0.982 & 0.980 & 0.978 & 0.978 \\
\textbf{5} & campaign & 0.726 & 0.726 & 0.726 & 0.729 & 0.735 & 0.735 \\
\textbf{6} & cardio & 0.845 & 0.840 & 0.840 & 0.869 & 0.876 & 0.876 \\
\textbf{7} & Cardiotocography & 0.540 & 0.530 & 0.530 & 0.542 & 0.534 & 0.534 \\
\textbf{8} & celeba & 0.804 & 0.797 & 0.797 & 0.782 & 0.760 & 0.760 \\
\textbf{9} & census & 0.670 & 0.672 & 0.672 & 0.682 & 0.685 & 0.685 \\
\textbf{10} & cover & 0.897 & 0.910 & 0.910 & 0.920 & 0.914 & 0.914 \\
\textbf{11} & donors & 0.609 & 0.650 & 0.650 & 0.606 & 0.624 & 0.624 \\
\textbf{12} & fault & 0.661 & 0.658 & 0.658 & 0.678 & 0.671 & 0.671 \\
\textbf{13} & fraud & 0.920 & 0.925 & 0.925 & 0.913 & 0.920 & 0.920 \\
\textbf{14} & glass & 0.748 & 0.771 & 0.771 & 0.757 & 0.769 & 0.769 \\
\textbf{15} & Hepatitis & 0.811 & 0.818 & 0.818 & 0.703 & 0.719 & 0.719 \\
\textbf{16} & http & 0.996 & 0.996 & 0.996 & 0.996 & 0.996 & 0.996 \\
\textbf{17} & InternetAds & 0.733 & 0.745 & 0.745 & 0.731 & 0.734 & 0.734 \\
\textbf{18} & Ionosphere & 0.850 & 0.883 & 0.883 & 0.862 & 0.860 & 0.860 \\
\textbf{19} & landsat & 0.527 & 0.539 & 0.569 & 0.499 & 0.545 & 0.600 \\
\textbf{20} & letter & 0.722 & 0.733 & 0.733 & 0.730 & 0.736 & 0.736 \\
\textbf{21} & Lymphography & 0.988 & 0.988 & 0.988 & 0.980 & 0.994 & 0.994 \\
\textbf{22} & magic.gamma & 0.746 & 0.745 & 0.694 & 0.724 & 0.727 & 0.672 \\
\textbf{23} & mammography & 0.808 & 0.814 & 0.763 & 0.813 & 0.816 & 0.774 \\
\textbf{24} & mnist & 0.796 & 0.794 & 0.794 & 0.791 & 0.779 & 0.779 \\
\textbf{25} & musk & 1.000 & 1.000 & 1.000 & 0.971 & 0.945 & 0.945 \\
\textbf{26} & optdigits & 0.602 & 0.684 & 0.684 & 0.552 & 0.697 & 0.697 \\
\textbf{27} & PageBlocks & 0.868 & 0.882 & 0.807 & 0.864 & 0.861 & 0.815 \\
\textbf{28} & pendigits & 0.939 & 0.957 & 0.957 & 0.911 & 0.918 & 0.918 \\
\textbf{29} & Pima & 0.716 & 0.713 & 0.713 & 0.712 & 0.718 & 0.718 \\
\textbf{30} & satellite & 0.734 & 0.760 & 0.760 & 0.702 & 0.740 & 0.740 \\
\textbf{31} & satimage-2 & 0.997 & 0.998 & 0.998 & 0.997 & 0.997 & 0.997 \\
\textbf{32} & shuttle & 0.983 & 0.984 & 0.985 & 0.982 & 0.981 & 0.709 \\
\textbf{33} & skin & 0.869 & 0.865 & 0.865 & 0.873 & 0.877 & 0.877 \\
\textbf{34} & smtp & 0.823 & 0.845 & 0.845 & 0.804 & 0.840 & 0.840 \\
\textbf{35} & SpamBase & 0.491 & 0.512 & 0.670 & 0.512 & 0.513 & 0.667 \\
\textbf{36} & speech & 0.471 & 0.468 & 0.468 & 0.471 & 0.469 & 0.469 \\
\textbf{37} & Stamps & 0.878 & 0.883 & 0.883 & 0.898 & 0.896 & 0.896 \\
\textbf{38} & thyroid & 0.938 & 0.939 & 0.939 & 0.938 & 0.942 & 0.942 \\
\textbf{39} & vertebral & 0.385 & 0.373 & 0.373 & 0.374 & 0.360 & 0.360 \\
\textbf{40} & vowels & 0.878 & 0.882 & 0.882 & 0.862 & 0.874 & 0.874 \\
\textbf{41} & Waveform & 0.749 & 0.713 & 0.713 & 0.763 & 0.780 & 0.780 \\
\textbf{42} & WBC & 0.994 & 0.993 & 0.993 & 0.992 & 0.993 & 0.993 \\
\textbf{43} & WDBC & 0.985 & 0.981 & 0.981 & 0.993 & 0.986 & 0.986 \\
\textbf{44} & Wilt & 0.349 & 0.375 & 0.409 & 0.364 & 0.382 & 0.401 \\
\textbf{45} & wine & 0.948 & 0.940 & 0.940 & 0.875 & 0.863 & 0.863 \\
\textbf{46} & WPBC & 0.506 & 0.503 & 0.503 & 0.502 & 0.526 & 0.526 \\
\textbf{47} & yeast & 0.411 & 0.411 & 0.469 & 0.398 & 0.409 & 0.471 \\
\textbf{48} & CIFAR10 & 0.889 & 0.888 & 0.888 & 0.877 & 0.876 & 0.876 \\
\textbf{49} & FashionMNIST & 0.921 & 0.920 & 0.920 & 0.902 & 0.902 & 0.902 \\
\textbf{50} & MNIST-C & 0.739 & 0.741 & 0.741 & 0.726 & 0.727 & 0.727 \\
\textbf{51} & MVTec-AD & 0.867 & 0.860 & 0.860 & 0.849 & 0.845 & 0.845 \\
\textbf{52} & SVHN & 0.581 & 0.581 & 0.581 & 0.579 & 0.579 & 0.579 \\
\textbf{53} & Agnews & 0.868 & 0.864 & 0.864 & 0.862 & 0.862 & 0.862 \\
\textbf{54} & Amazon & 0.546 & 0.548 & 0.548 & 0.544 & 0.539 & 0.539 \\
\textbf{55} & Imdb & 0.523 & 0.527 & 0.527 & 0.524 & 0.526 & 0.526 \\
\textbf{56} & Yelp & 0.555 & 0.557 & 0.557 & 0.558 & 0.554 & 0.554 \\
\textbf{57} & 20news & 0.725 & 0.725 & 0.725 & 0.725 & 0.722 & 0.722 \\
\midrule
\multicolumn{2}{c}{\textbf{Average}} & 0.757 & 0.762 & \textbf{0.766} & 0.751 & 0.756 & \textbf{0.757} \\
\bottomrule
\end{tabular}
}
\end{table}

\begin{table}[h!]
\renewcommand\thetable{B.5}
\centering
\caption{
\textcolor{black}{Per-dataset AUPRC of the variants of the IM-based methods (Original, $+$EMA, Ours) on the 57 \texttt{ADBench} datasets.
Best average AUPRC for each method is \textbf{bolded}.}
}
\label{tab:full_auprc}
\vskip 0.15in
\setlength{\tabcolsep}{1mm}
\fontsize{6pt}{6pt}\selectfont
\resizebox{0.8\textwidth}{!}{
\begin{tabular}{l>{\ttfamily}lccc|ccc}
\toprule
\multirow{2}{*}{\textbf{Number}} & \multirow{2}{*}{\normalfont{\textbf{Dataset Name}}} & \multicolumn{3}{c}{\textbf{ALTBI}} & \multicolumn{3}{c}{\textbf{ODIM}} \\
\cmidrule(lr){3-5} \cmidrule(lr){6-8}
 & & {Original} & {$+$EMA} & {Ours} & {Original} & {$+$EMA} & {Ours} \\
\midrule
\textbf{1} & ALOI & 0.035 & 0.035 & 0.057 & 0.037 & 0.037 & 0.064 \\
\textbf{2} & annthyroid & 0.156 & 0.155 & 0.181 & 0.154 & 0.154 & 0.192 \\
\textbf{3} & backdoor & 0.138 & 0.122 & 0.122 & 0.406 & 0.396 & 0.396 \\
\textbf{4} & breastw & 0.943 & 0.952 & 0.952 & 0.930 & 0.915 & 0.915 \\
\textbf{5} & campaign & 0.266 & 0.247 & 0.247 & 0.246 & 0.243 & 0.243 \\
\textbf{6} & cardio & 0.539 & 0.519 & 0.519 & 0.526 & 0.513 & 0.513 \\
\textbf{7} & Cardiotocography & 0.424 & 0.409 & 0.409 & 0.390 & 0.402 & 0.402 \\
\textbf{8} & celeba & 0.098 & 0.090 & 0.090 & 0.064 & 0.054 & 0.054 \\
\textbf{9} & census & 0.090 & 0.089 & 0.089 & 0.096 & 0.097 & 0.097 \\
\textbf{10} & cover & 0.047 & 0.051 & 0.051 & 0.068 & 0.060 & 0.060 \\
\textbf{11} & donors & 0.071 & 0.079 & 0.079 & 0.072 & 0.076 & 0.076 \\
\textbf{12} & fault & 0.477 & 0.475 & 0.475 & 0.495 & 0.487 & 0.487 \\
\textbf{13} & fraud & 0.299 & 0.323 & 0.323 & 0.329 & 0.283 & 0.283 \\
\textbf{14} & glass & 0.083 & 0.094 & 0.094 & 0.104 & 0.133 & 0.133 \\
\textbf{15} & Hepatitis & 0.454 & 0.459 & 0.459 & 0.242 & 0.250 & 0.250 \\
\textbf{16} & http & 0.296 & 0.309 & 0.309 & 0.285 & 0.289 & 0.289 \\
\textbf{17} & InternetAds & 0.365 & 0.405 & 0.405 & 0.336 & 0.343 & 0.343 \\
\textbf{18} & Ionosphere & 0.801 & 0.839 & 0.839 & 0.818 & 0.812 & 0.812 \\
\textbf{19} & landsat & 0.202 & 0.208 & 0.219 & 0.192 & 0.207 & 0.238 \\
\textbf{20} & letter & 0.120 & 0.130 & 0.130 & 0.135 & 0.132 & 0.132 \\
\textbf{21} & Lymphography & 0.782 & 0.705 & 0.705 & 0.563 & 0.846 & 0.846 \\
\textbf{22} & magic.gamma & 0.689 & 0.686 & 0.643 & 0.664 & 0.672 & 0.577 \\
\textbf{23} & mammography & 0.067 & 0.069 & 0.070 & 0.083 & 0.083 & 0.080 \\
\textbf{24} & mnist & 0.396 & 0.391 & 0.391 & 0.334 & 0.330 & 0.330 \\
\textbf{25} & musk & 1.000 & 1.000 & 1.000 & 0.537 & 0.316 & 0.316 \\
\textbf{26} & optdigits & 0.038 & 0.046 & 0.046 & 0.030 & 0.045 & 0.045 \\
\textbf{27} & PageBlocks & 0.553 & 0.584 & 0.584 & 0.543 & 0.543 & 0.457 \\
\textbf{28} & pendigits & 0.253 & 0.335 & 0.335 & 0.173 & 0.187 & 0.187 \\
\textbf{29} & Pima & 0.498 & 0.494 & 0.494 & 0.501 & 0.510 & 0.510 \\
\textbf{30} & satellite & 0.681 & 0.698 & 0.698 & 0.634 & 0.670 & 0.670 \\
\textbf{31} & satimage-2 & 0.907 & 0.920 & 0.920 & 0.948 & 0.942 & 0.942 \\
\textbf{32} & shuttle & 0.959 & 0.960 & 0.972 & 0.944 & 0.949 & 0.302 \\
\textbf{33} & skin & 0.449 & 0.441 & 0.441 & 0.457 & 0.462 & 0.462 \\
\textbf{34} & smtp & 0.079 & 0.033 & 0.033 & 0.507 & 0.409 & 0.409 \\
\textbf{35} & SpamBase & 0.380 & 0.389 & 0.528 & 0.394 & 0.392 & 0.512 \\
\textbf{36} & speech & 0.017 & 0.017 & 0.017 & 0.018 & 0.019 & 0.019 \\
\textbf{37} & Stamps & 0.309 & 0.312 & 0.312 & 0.338 & 0.331 & 0.331 \\
\textbf{38} & thyroid & 0.228 & 0.220 & 0.220 & 0.244 & 0.253 & 0.253 \\
\textbf{39} & vertebral & 0.097 & 0.095 & 0.095 & 0.097 & 0.094 & 0.094 \\
\textbf{40} & vowels & 0.308 & 0.297 & 0.297 & 0.242 & 0.290 & 0.290 \\
\textbf{41} & Waveform & 0.129 & 0.113 & 0.113 & 0.180 & 0.217 & 0.217 \\
\textbf{42} & WBC & 0.862 & 0.869 & 0.869 & 0.859 & 0.876 & 0.876 \\
\textbf{43} & WDBC & 0.625 & 0.530 & 0.530 & 0.783 & 0.568 & 0.568 \\
\textbf{44} & Wilt & 0.037 & 0.039 & 0.042 & 0.038 & 0.039 & 0.042 \\
\textbf{45} & wine & 0.502 & 0.416 & 0.416 & 0.259 & 0.233 & 0.233 \\
\textbf{46} & WPBC & 0.230 & 0.233 & 0.233 & 0.230 & 0.242 & 0.242 \\
\textbf{47} & yeast & 0.300 & 0.299 & 0.331 & 0.291 & 0.298 & 0.337 \\
\textbf{48} & CIFAR10 & 0.526 & 0.525 & 0.525 & 0.378 & 0.376 & 0.376 \\
\textbf{49} & FashionMNIST & 0.695 & 0.697 & 0.697 & 0.473 & 0.473 & 0.473 \\
\textbf{50} & MNIST-C & 0.304 & 0.313 & 0.313 & 0.213 & 0.215 & 0.215 \\
\textbf{51} & MVTec-AD & 0.535 & 0.534 & 0.534 & 0.510 & 0.503 & 0.503 \\
\textbf{52} & SVHN & 0.065 & 0.065 & 0.065 & 0.065 & 0.065 & 0.065 \\
\textbf{53} & Agnews & 0.281 & 0.264 & 0.264 & 0.286 & 0.285 & 0.285 \\
\textbf{54} & Amazon & 0.054 & 0.055 & 0.055 & 0.054 & 0.054 & 0.054 \\
\textbf{55} & Imdb & 0.055 & 0.056 & 0.056 & 0.055 & 0.055 & 0.055 \\
\textbf{56} & Yelp & 0.057 & 0.057 & 0.057 & 0.057 & 0.057 & 0.057 \\
\textbf{57} & 20news & 0.117 & 0.119 & 0.119 & 0.129 & 0.129 & 0.129 \\
\midrule
\multicolumn{2}{c}{\textbf{Average}} & 0.350 & 0.349 & \textbf{0.352} & \textbf{0.334} & 0.332 & 0.322 \\
\bottomrule
\end{tabular}
}
\end{table}


\begin{sidewaystable}[h!]
\centering
\caption{Per-dataset AUROC comparison on the 57 \texttt{ADBench} datasets. 
Methods marked with `*' are implemented and evaluated in our framework, while the remaining baseline results are taken from \citep{DBLP:journals/corr/abs-2305-18593}.
Best average AUROC for each method is \textbf{bolded}.}
\label{tab:adbench1}
\setlength{\tabcolsep}{0.7mm} 
\fontsize{6pt}{7pt}\selectfont 
\begin{tabular}{l|cccccccccccccccccccccccc|c}
\toprule
    & CBLOF & COPOD & ECOD  & FeatureBagging & HBOS  & IForest & kNN   & LODA  & LOF   & MCD   & OCSVM & PCA   & DAGMM & DeepSVDD & DROCC & GOAD  & ICL   & PlanarFlow & DDPM  & DTE-NP & DTE-IG & DTE-C & ODIM*  & ALTBI* & Ours* (w/ ALTBI) \\
\midrule
aloi             & 0.556 & 0.515 & 0.531 & 0.792          & 0.531 & 0.542   & 0.613 & 0.495 & 0.767 & 0.520 & 0.549 & 0.549 & 0.517 & 0.514    & 0.500 & 0.497 & 0.548 & 0.520      & 0.532 & 0.645  & 0.541  & 0.525 & 0.525 &	0.521 	& 0.601     \\
annthyroid       & 0.676 & 0.777 & 0.789 & 0.788          & 0.608 & 0.816   & 0.761 & 0.453 & 0.710 & 0.918 & 0.682 & 0.676 & 0.548 & 0.739    & 0.631 & 0.453 & 0.599 & 0.966      & 0.814 & 0.781  & 0.923  & 0.964 & 0.632 &	0.633 &	0.691 
    \\
backdoor         & 0.897 & 0.500 & 0.500 & 0.790          & 0.740 & 0.725   & 0.826 & 0.515 & 0.764 & 0.848 & 0.889 & 0.888 & 0.752 & 0.735    & 0.500 & 0.587 & 0.936 & 0.787      & 0.892 & 0.806  & 0.753  & 0.875 & 0.909 &	0.876 &	0.868 
    \\
breastw          & 0.961 & 0.994 & 0.990 & 0.408          & 0.984 & 0.983   & 0.980 & 0.970 & 0.446 & 0.985 & 0.935 & 0.946 & 0.811 & 0.625    & 0.847 & 0.845 & 0.807 & 0.965      & 0.766 & 0.976  & 0.905  & 0.891 & 0.980 &	0.981 &	0.982 
    \\
campaign         & 0.738 & 0.783 & 0.769 & 0.594          & 0.768 & 0.704   & 0.750 & 0.493 & 0.614 & 0.775 & 0.737 & 0.734 & 0.581 & 0.508    & 0.500 & 0.443 & 0.766 & 0.566      & 0.724 & 0.746  & 0.660  & 0.789 & 0.729 &	0.726 &	0.726 
    \\
cardio           & 0.832 & 0.921 & 0.935 & 0.579          & 0.839 & 0.922   & 0.830 & 0.856 & 0.551 & 0.815 & 0.934 & 0.949 & 0.625 & 0.498    & 0.655 & 0.908 & 0.461 & 0.796      & 0.723 & 0.777  & 0.631  & 0.721 & 0.869 &	0.845 &	0.840 
    \\
cardiotocography & 0.561 & 0.664 & 0.784 & 0.538          & 0.595 & 0.681   & 0.503 & 0.708 & 0.527 & 0.500 & 0.691 & 0.747 & 0.546 & 0.488    & 0.449 & 0.624 & 0.372 & 0.643      & 0.579 & 0.493  & 0.506  & 0.510 & 0.542 &	0.540 &	0.530 
    \\
celeba           & 0.753 & 0.757 & 0.763 & 0.514          & 0.754 & 0.707   & 0.736 & 0.600 & 0.432 & 0.803 & 0.781 & 0.792 & 0.627 & 0.491    & 0.726 & 0.432 & 0.684 & 0.703      & 0.796 & 0.699  & 0.700  & 0.812 & 0.782 &	0.804 &	0.797 
    \\
census           & 0.664 & 0.500 & 0.500 & 0.538          & 0.611 & 0.607   & 0.671 & 0.454 & 0.562 & 0.731 & 0.655 & 0.662 & 0.491 & 0.527    & 0.443 & 0.488 & 0.668 & 0.604      & 0.659 & 0.672  & 0.629  & 0.646 & 0.682 &	0.670 &	0.672 
    \\
cover            & 0.922 & 0.882 & 0.919 & 0.571          & 0.707 & 0.873   & 0.866 & 0.922 & 0.568 & 0.696 & 0.952 & 0.934 & 0.742 & 0.580    & 0.747 & 0.124 & 0.681 & 0.417      & 0.808 & 0.838  & 0.635  & 0.697 & 0.920 &	0.897 &	0.910 
    \\
donors           & 0.808 & 0.815 & 0.888 & 0.691          & 0.743 & 0.771   & 0.829 & 0.566 & 0.629 & 0.765 & 0.770 & 0.825 & 0.558 & 0.511    & 0.747 & 0.225 & 0.739 & 0.899      & 0.806 & 0.832  & 0.796  & 0.785 & 0.606 &	0.609 &	0.650 
    \\
fault            & 0.665 & 0.455 & 0.468 & 0.591          & 0.506 & 0.544   & 0.715 & 0.478 & 0.579 & 0.505 & 0.537 & 0.480 & 0.495 & 0.522    & 0.668 & 0.546 & 0.661 & 0.469      & 0.562 & 0.726  & 0.577  & 0.590 & 0.678 &	0.661 &	0.658 
    \\
fraud            & 0.954 & 0.943 & 0.949 & 0.616          & 0.945 & 0.950   & 0.955 & 0.856 & 0.548 & 0.911 & 0.954 & 0.952 & 0.857 & 0.769    & 0.500 & 0.724 & 0.931 & 0.895      & 0.924 & 0.956  & 0.942  & 0.938 & 0.913 &	0.920 &	0.925 
    \\
glass            & 0.855 & 0.760 & 0.710 & 0.659          & 0.820 & 0.790   & 0.870 & 0.624 & 0.618 & 0.795 & 0.661 & 0.715 & 0.630 & 0.517    & 0.743 & 0.545 & 0.729 & 0.766      & 0.560 & 0.881  & 0.681  & 0.864 & 0.757 &	0.748 &	0.771 
    \\
hepatitis        & 0.635 & 0.807 & 0.737 & 0.469          & 0.768 & 0.683   & 0.669 & 0.557 & 0.468 & 0.721 & 0.704 & 0.748 & 0.600 & 0.361    & 0.582 & 0.637 & 0.616 & 0.654      & 0.461 & 0.631  & 0.451  & 0.577 & 0.703 &	0.811 &	0.818 
    \\
http             & 0.996 & 0.991 & 0.980 & 0.288          & 0.991 & 1.000   & 0.051 & 0.060 & 0.338 & 1.000 & 0.994 & 0.997 & 0.838 & 0.249    & 0.500 & 0.996 & 0.921 & 0.994      & 0.998 & 0.051  & 0.973  & 0.995 & 0.996 &	0.996 &	0.996 
    \\
internetads      & 0.616 & 0.676 & 0.677 & 0.494          & 0.696 & 0.686   & 0.616 & 0.541 & 0.587 & 0.660 & 0.615 & 0.609 & 0.515 & 0.583    & 0.500 & 0.614 & 0.592 & 0.608      & 0.614 & 0.634  & 0.635  & 0.656 & 0.731 &	0.733 &	0.745 
    \\
ionosphere       & 0.892 & 0.783 & 0.717 & 0.876          & 0.544 & 0.833   & 0.922 & 0.788 & 0.864 & 0.951 & 0.838 & 0.777 & 0.641 & 0.514    & 0.766 & 0.829 & 0.629 & 0.884      & 0.758 & 0.924  & 0.697  & 0.911 & 0.862 &	0.850 &	0.883 
    \\
landsat          & 0.548 & 0.422 & 0.368 & 0.540          & 0.575 & 0.474   & 0.614 & 0.382 & 0.549 & 0.607 & 0.423 & 0.366 & 0.533 & 0.631    & 0.626 & 0.506 & 0.649 & 0.464      & 0.496 & 0.602  & 0.473  & 0.544 & 0.499 	&0.527 &	0.569 
    \\
letter           & 0.763 & 0.560 & 0.573 & 0.886          & 0.589 & 0.616   & 0.812 & 0.537 & 0.878 & 0.804 & 0.598 & 0.524 & 0.503 & 0.517    & 0.780 & 0.598 & 0.737 & 0.689      & 0.847 & 0.850  & 0.676  & 0.781 & 0.730 &	0.722 &	0.733 
    \\
lymphography     & 0.994 & 0.996 & 0.995 & 0.523          & 0.995 & 0.999   & 0.995 & 0.900 & 0.636 & 0.989 & 0.996 & 0.997 & 0.840 & 0.681    & 0.878 & 0.995 & 0.884 & 0.940      & 0.958 & 0.989  & 0.852  & 0.834 & 0.980 &	0.988 &	0.988 
    \\
magic.gamma      & 0.725 & 0.681 & 0.638 & 0.700          & 0.709 & 0.721   & 0.795 & 0.655 & 0.678 & 0.699 & 0.673 & 0.667 & 0.584 & 0.604    & 0.728 & 0.442 & 0.676 & 0.742      & 0.763 & 0.801  & 0.782  & 0.765 & 0.724 &	0.746 &	0.694 
    \\
mammography      & 0.795 & 0.905 & 0.906 & 0.726          & 0.838 & 0.860   & 0.852 & 0.867 & 0.702 & 0.690 & 0.871 & 0.888 & 0.719 & 0.451    & 0.779 & 0.414 & 0.658 & 0.782      & 0.749 & 0.849  & 0.799  & 0.810 & 0.813 &	0.808 &	0.763 
    \\

musk             & 1.000 & 0.948 & 0.953 & 0.575          & 1.000 & 0.998   & 0.964 & 0.993 & 0.581 & 1.000 & 1.000 & 1.000 & 0.912 & 0.538    & 0.575 & 1.000 & 0.790 & 0.748      & 1.000 & 0.882  & 0.785  & 0.965 & 0.971 &	1.000 &	1.000 
    \\
optdigits        & 0.785 & 0.500 & 0.500 & 0.539          & 0.868 & 0.696   & 0.395 & 0.493 & 0.538 & 0.413 & 0.507 & 0.518 & 0.408 & 0.519    & 0.565 & 0.657 & 0.533 & 0.492      & 0.402 & 0.386  & 0.513  & 0.508 & 0.552 &	0.602 &	0.684 
    \\
pageblocks       & 0.893 & 0.875 & 0.914 & 0.758          & 0.779 & 0.897   & 0.919 & 0.712 & 0.703 & 0.923 & 0.914 & 0.907 & 0.753 & 0.592    & 0.914 & 0.609 & 0.768 & 0.908      & 0.820 & 0.906  & 0.850  & 0.924 & 0.864 &	0.868 &	0.807 
    \\
pendigits        & 0.864 & 0.906 & 0.927 & 0.518          & 0.925 & 0.947   & 0.828 & 0.895 & 0.534 & 0.834 & 0.929 & 0.936 & 0.548 & 0.383    & 0.520 & 0.592 & 0.650 & 0.780      & 0.700 & 0.786  & 0.624  & 0.713 & 0.911 &	0.939 &	0.957 
    \\
pima             & 0.655 & 0.662 & 0.604 & 0.573          & 0.704 & 0.674   & 0.723 & 0.595 & 0.563 & 0.686 & 0.631 & 0.651 & 0.522 & 0.510    & 0.542 & 0.606 & 0.524 & 0.615      & 0.537 & 0.707  & 0.599  & 0.624 & 0.712 &	0.716 &	0.713 
    \\
satellite        & 0.742 & 0.633 & 0.583 & 0.545          & 0.762 & 0.695   & 0.721 & 0.614 & 0.550 & 0.804 & 0.662 & 0.601 & 0.675 & 0.562    & 0.608 & 0.702 & 0.627 & 0.671      & 0.715 & 0.702  & 0.582  & 0.711 & 0.702 &	0.734 &	0.760 
    \\
satimage-2       & 0.999 & 0.975 & 0.965 & 0.526          & 0.976 & 0.993   & 0.992 & 0.981 & 0.539 & 0.995 & 0.997 & 0.977 & 0.911 & 0.551    & 0.579 & 0.996 & 0.898 & 0.970      & 0.996 & 0.980  & 0.858  & 0.946 & 0.997 &	0.997 &	0.998 
    \\
shuttle          & 0.621 & 0.995 & 0.993 & 0.493          & 0.986 & 0.997   & 0.732 & 0.389 & 0.526 & 0.990 & 0.992 & 0.990 & 0.898 & 0.576    & 0.500 & 0.208 & 0.642 & 0.852      & 0.975 & 0.698  & 0.669  & 0.976 & 0.982 &	0.983 &	0.985 
    \\
skin             & 0.675 & 0.471 & 0.490 & 0.534          & 0.588 & 0.670   & 0.720 & 0.442 & 0.550 & 0.892 & 0.547 & 0.447 & 0.554 & 0.548    & 0.708 & 0.579 & 0.265 & 0.773      & 0.461 & 0.718  & 0.741  & 0.741 & 0.873 &	0.869 &	0.865 
    \\
smtp             & 0.863 & 0.912 & 0.882 & 0.794          & 0.809 & 0.905   & 0.933 & 0.819 & 0.899 & 0.948 & 0.845 & 0.856 & 0.868 & 0.895    & 0.500 & 0.915 & 0.656 & 0.784      & 0.956 & 0.930  & 0.769  & 0.951 & 0.804 &	0.823 &	0.845 
    \\
spambase         & 0.541 & 0.688 & 0.656 & 0.424          & 0.664 & 0.637   & 0.566 & 0.480 & 0.453 & 0.446 & 0.534 & 0.548 & 0.488 & 0.584    & 0.490 & 0.496 & 0.459 & 0.528      & 0.510 & 0.545  & 0.509  & 0.515 & 0.512 &	0.491 &	0.670 
    \\
speech           & 0.471 & 0.489 & 0.470 & 0.509          & 0.473 & 0.476   & 0.480 & 0.466 & 0.512 & 0.494 & 0.466 & 0.469 & 0.522 & 0.512    & 0.483 & 0.458 & 0.512 & 0.496      & 0.466 & 0.487  & 0.488  & 0.495 & 0.471 &	0.471 &	0.468 
    \\
stamps           & 0.660 & 0.929 & 0.877 & 0.502          & 0.904 & 0.907   & 0.870 & 0.831 & 0.512 & 0.838 & 0.882 & 0.909 & 0.719 & 0.465    & 0.760 & 0.774 & 0.505 & 0.838      & 0.556 & 0.820  & 0.692  & 0.753 & 0.898 &	0.878 &	0.883 
    \\
thyroid          & 0.909 & 0.939 & 0.977 & 0.707          & 0.948 & 0.979   & 0.965 & 0.819 & 0.657 & 0.986 & 0.958 & 0.955 & 0.719 & 0.505    & 0.889 & 0.574 & 0.693 & 0.992      & 0.871 & 0.964  & 0.828  & 0.990 & 0.938 &	0.938 &	0.939 
   \\
vertebral        & 0.463 & 0.263 & 0.417 & 0.473          & 0.317 & 0.362   & 0.379 & 0.294 & 0.487 & 0.389 & 0.426 & 0.378 & 0.470 & 0.394    & 0.425 & 0.468 & 0.449 & 0.409      & 0.563 & 0.400  & 0.451  & 0.458 & 0.374 &	0.385& 	0.373 
    \\
vowels           & 0.884 & 0.496 & 0.593 & 0.933          & 0.679 & 0.763   & 0.951 & 0.705 & 0.932 & 0.732 & 0.779 & 0.604 & 0.464 & 0.514    & 0.738 & 0.791 & 0.784 & 0.888      & 0.903 & 0.964  & 0.705  & 0.914 & 0.862 &	0.878& 	0.882 
    \\
waveform         & 0.701 & 0.739 & 0.603 & 0.715          & 0.694 & 0.707   & 0.750 & 0.594 & 0.693 & 0.572 & 0.669 & 0.635 & 0.523 & 0.609    & 0.674 & 0.592 & 0.661 & 0.640      & 0.617 & 0.729  & 0.523  & 0.602 & 0.763 &	0.749 &	0.713 
    \\
wbc              & 0.977 & 0.994 & 0.994 & 0.388          & 0.987 & 0.996   & 0.982 & 0.992 & 0.607 & 0.988 & 0.987 & 0.993 & 0.821 & 0.503    & 0.821 & 0.949 & 0.853 & 0.934      & 0.948 & 0.979  & 0.894  & 0.871 & 0.992 &	0.994 &	0.993 
    \\
wdbc             & 0.990 & 0.993 & 0.971 & 0.867          & 0.989 & 0.988   & 0.980 & 0.980 & 0.849 & 0.969 & 0.984 & 0.988 & 0.715 & 0.602    & 0.347 & 0.983 & 0.738 & 0.985      & 0.965 & 0.975  & 0.566  & 0.835 & 0.993 &	0.985 &	0.981 
    \\
wilt             & 0.396 & 0.345 & 0.394 & 0.666          & 0.348 & 0.451   & 0.511 & 0.313 & 0.678 & 0.859 & 0.317 & 0.239 & 0.432 & 0.465    & 0.400 & 0.555 & 0.649 & 0.794      & 0.659 & 0.552  & 0.834  & 0.851 & 0.364 &	0.349 &	0.409 
    \\
wine             & 0.453 & 0.865 & 0.738 & 0.323          & 0.907 & 0.786   & 0.470 & 0.822 & 0.330 & 0.975 & 0.671 & 0.819 & 0.513 & 0.507    & 0.621 & 0.734 & 0.455 & 0.390      & 0.374 & 0.425  & 0.310  & 0.557 & 0.875 &	0.948 &	0.940 
   \\
wpbc             & 0.487 & 0.519 & 0.489 & 0.436          & 0.548 & 0.516   & 0.512 & 0.501 & 0.447 & 0.534 & 0.485 & 0.486 & 0.449 & 0.493    & 0.483 & 0.466 & 0.488 & 0.483      & 0.493 & 0.502  & 0.489  & 0.468 & 0.502 &	0.506 &	0.503 
    \\
yeast            & 0.461 & 0.380 & 0.443 & 0.465          & 0.402 & 0.394   & 0.396 & 0.461 & 0.453 & 0.406 & 0.420 & 0.418 & 0.503 & 0.520    & 0.396 & 0.503 & 0.466 & 0.442      & 0.463 & 0.400  & 0.446  & 0.420 & 0.398 &	0.411 &	0.469 
    \\
CIFAR10          & 0.663 & 0.548 & 0.567 & 0.687          & 0.572 & 0.629   & 0.659 & 0.591 & 0.686 & 0.639 & 0.663 & 0.659 & 0.530 & 0.555    & 0.503 & 0.659 & 0.557 & 0.621      & 0.663 & 0.660  & 0.595  & 0.629 & 0.877 &	0.889 &	0.888 
    \\
MNIST-C          & 0.757 & 0.500 & 0.500 & 0.702          & 0.689 & 0.733   & 0.786 & 0.591 & 0.699 & 0.739 & 0.751 & 0.741 & 0.581 & 0.552    & 0.594 & 0.752 & 0.670 & 0.705      & 0.751 & 0.788  & 0.703  & 0.746 & 0.726 &	0.739 &	0.741 
   \\
MVTec-AD         & 0.754 & 0.500 & 0.500 & 0.745          & 0.732 & 0.747   & 0.763 & 0.644 & 0.742 & 0.618 & 0.735 & 0.724 & 0.596 & 0.603    & 0.544 & 0.730 & 0.683 & 0.637      & 0.732 & 0.761  & 0.655  & 0.730 & 0.849 &	0.867 &	0.860 
    \\
SVHN             & 0.601 & 0.500 & 0.500 & 0.629          & 0.542 & 0.580   & 0.604 & 0.534 & 0.628 & 0.583 & 0.604 & 0.599 & 0.528 & 0.521    & 0.521 & 0.597 & 0.571 & 0.580      & 0.605 & 0.607  & 0.567  & 0.600 & 0.579 &	0.581 &	0.581 
    \\
mnist            & 0.843 & 0.500 & 0.500 & 0.664          & 0.574 & 0.811   & 0.867 & 0.564 & 0.658 & 0.856 & 0.849 & 0.848 & 0.631 & 0.605    & 0.615 & 0.698 & 0.691 & 0.645      & 0.816 & 0.853  & 0.756  & 0.819 & 0.791 &	0.796 &	0.794    \\
FashionMNIST     & 0.871 & 0.500 & 0.500 & 0.748          & 0.748 & 0.831   & 0.875 & 0.672 & 0.738 & 0.840 & 0.860 & 0.853 & 0.664 & 0.647    & 0.564 & 0.860 & 0.758 & 0.819      & 0.861 & 0.873  & 0.767  & 0.841 & 0.902 &	0.921 &	0.920    \\
20news           & 0.564 & 0.533 & 0.544 & 0.610          & 0.537 & 0.550   & 0.567 & 0.539 & 0.610 & 0.583 & 0.559 & 0.545 & 0.518 & 0.515    & 0.496 & 0.553 & 0.547 & 0.513      & 0.547 & 0.570  & 0.527  & 0.579 & 0.725 &	0.725 &	0.725 
   \\
agnews           & 0.619 & 0.551 & 0.552 & 0.715          & 0.554 & 0.584   & 0.647 & 0.568 & 0.714 & 0.665 & 0.601 & 0.566 & 0.508 & 0.494    & 0.500 & 0.592 & 0.591 & 0.497      & 0.571 & 0.652  & 0.545  & 0.627 &0.862 &	0.868 &	0.864 
    \\
amazon           & 0.579 & 0.571 & 0.541 & 0.572          & 0.563 & 0.558   & 0.603 & 0.526 & 0.571 & 0.597 & 0.565 & 0.550 & 0.501 & 0.464    & 0.500 & 0.560 & 0.528 & 0.495      & 0.551 & 0.603  & 0.535  & 0.556 & 0.544 &	0.546 &	0.548 
    \\
imdb             & 0.496 & 0.512 & 0.471 & 0.499          & 0.499 & 0.489   & 0.494 & 0.466 & 0.500 & 0.504 & 0.484 & 0.478 & 0.487 & 0.526    & 0.500 & 0.486 & 0.521 & 0.493      & 0.478 & 0.495  & 0.486  & 0.484 & 0.524 &	0.523 &	0.527 
    \\
yelp             & 0.635 & 0.605 & 0.578 & 0.661          & 0.600 & 0.602   & 0.670 & 0.581 & 0.661 & 0.655 & 0.621 & 0.592 & 0.498 & 0.524    & 0.504 & 0.590 & 0.545 & 0.527      & 0.594 & 0.671  & 0.514  & 0.602 & 0.558 &	0.555 &	0.557 
    \\
\midrule
\textbf{Average} & 0.731 & 0.692 & 0.689 & 0.608          & 0.714 & 0.737   & 0.728 & 0.627 & 0.611 & 0.748 & 0.725 & 0.718 & 0.614 & 0.543    & 0.600 & 0.627 & 0.643 & 0.696      & 0.700 & 0.722  & 0.660  & 0.728 & 0.751 &	0.757 &	\textbf{0.766} 
    \\ 
\bottomrule
\end{tabular}
\end{sidewaystable}

\begin{sidewaystable}[h!]
\centering
\caption{Per-dataset AUPRC comparison on the 57 \texttt{ADBench} datasets. 
Methods marked with `*' are implemented and evaluated in our framework, while the remaining baseline results are taken from \citep{DBLP:journals/corr/abs-2305-18593}.
Best average AUPRC for each method is \textbf{bolded}.}
\label{tab:adbench2}
\setlength{\tabcolsep}{0.7mm} 
\fontsize{6pt}{7pt}\selectfont 
\begin{tabular}{l|cccccccccccccccccccccccc|c}
\toprule
    & CBLOF & COPOD & ECOD  & FeatureBagging & HBOS  & IForest & kNN   & LODA  & LOF   & MCD   & OCSVM & PCA   & DAGMM & DeepSVDD & DROCC & GOAD  & ICL   & PlanarFlow & DDPM  & DTE-NP & DTE-IG & DTE-C & ODIM*  & ALTBI* & Ours* (w/ ALTBI) \\ 
\midrule
aloi             & 0.037 & 0.031 & 0.033 & 0.104          & 0.034 & 0.034   & 0.048 & 0.033 & 0.097 & 0.032 & 0.039 & 0.037 & 0.033 & 0.034    & 0.030 & 0.033 & 0.046 & 0.032      & 0.036 & 0.056  & 0.040  & 0.033 & 0.037 &	0.035 &	0.057 
   \\
annthyroid       & 0.169 & 0.174 & 0.272 & 0.206          & 0.228 & 0.312   & 0.224 & 0.098 & 0.163 & 0.503 & 0.188 & 0.196 & 0.109 & 0.192    & 0.186 & 0.131 & 0.123 & 0.654      & 0.297 & 0.228  & 0.380  & 0.670 & 0.154 &	0.156 &	0.181 
   \\
backdoor         & 0.547 & 0.025 & 0.025 & 0.217          & 0.052 & 0.045   & 0.479 & 0.101 & 0.358 & 0.122 & 0.534 & 0.531 & 0.250 & 0.372    & 0.025 & 0.347 & 0.717 & 0.336      & 0.520 & 0.473  & 0.438  & 0.481 & 0.406 &	0.138 &	0.122 
   \\
breastw          & 0.890 & 0.989 & 0.982 & 0.284          & 0.954 & 0.956   & 0.932 & 0.955 & 0.297 & 0.962 & 0.897 & 0.946 & 0.660 & 0.482    & 0.776 & 0.826 & 0.635 & 0.908      & 0.537 & 0.921  & 0.770  & 0.715 & 0.930 &	0.943 &	0.952 
   \\
campaign         & 0.287 & 0.368 & 0.354 & 0.145          & 0.352 & 0.279   & 0.289 & 0.131 & 0.158 & 0.325 & 0.283 & 0.284 & 0.163 & 0.149    & 0.113 & 0.105 & 0.267 & 0.191      & 0.299 & 0.281  & 0.237  & 0.321 & 0.246 &	0.266 &	0.247 
   \\
cardio           & 0.482 & 0.576 & 0.567 & 0.161          & 0.458 & 0.559   & 0.402 & 0.428 & 0.159 & 0.364 & 0.536 & 0.609 & 0.193 & 0.177    & 0.272 & 0.540 & 0.108 & 0.471      & 0.278 & 0.376  & 0.184  & 0.268 & 0.526 &	0.539 &	0.519 
   \\
cardiotocography & 0.335 & 0.403 & 0.502 & 0.276          & 0.361 & 0.436   & 0.324 & 0.463 & 0.272 & 0.311 & 0.408 & 0.462 & 0.271 & 0.252    & 0.258 & 0.403 & 0.188 & 0.348      & 0.338 & 0.312  & 0.250  & 0.276 & 0.390 &	0.424 &	0.409 
   \\
celeba           & 0.069 & 0.093 & 0.095 & 0.024          & 0.090 & 0.063   & 0.061 & 0.047 & 0.018 & 0.092 & 0.103 & 0.112 & 0.044 & 0.031    & 0.047 & 0.021 & 0.045 & 0.066      & 0.093 & 0.052  & 0.058  & 0.077 & 0.064 &	0.098 &	0.090 
   \\
census           & 0.088 & 0.062 & 0.062 & 0.061          & 0.073 & 0.073   & 0.088 & 0.065 & 0.069 & 0.153 & 0.085 & 0.087 & 0.062 & 0.075    & 0.058 & 0.072 & 0.095 & 0.074      & 0.086 & 0.090  & 0.083  & 0.081 & 0.096 &	0.090 &	0.089 
   \\
cover            & 0.070 & 0.068 & 0.113 & 0.019          & 0.026 & 0.052   & 0.054 & 0.090 & 0.019 & 0.016 & 0.099 & 0.075 & 0.044 & 0.048    & 0.056 & 0.005 & 0.022 & 0.010      & 0.046 & 0.048  & 0.025  & 0.021 & 0.068 &	0.047 &	0.051 
   \\
donors           & 0.148 & 0.209 & 0.265 & 0.120          & 0.135 & 0.124   & 0.182 & 0.255 & 0.109 & 0.141 & 0.139 & 0.166 & 0.086 & 0.112    & 0.123 & 0.040 & 0.119 & 0.241      & 0.143 & 0.188  & 0.164  & 0.140 & 0.072 &	0.071 &	0.079 
   \\
fault            & 0.473 & 0.313 & 0.325 & 0.396          & 0.360 & 0.395   & 0.522 & 0.337 & 0.388 & 0.334 & 0.401 & 0.332 & 0.361 & 0.375    & 0.496 & 0.381 & 0.473 & 0.329      & 0.392 & 0.532  & 0.417  & 0.422 & 0.495 &	0.477 &	0.475 
   \\
fraud            & 0.145 & 0.252 & 0.215 & 0.003          & 0.209 & 0.145   & 0.169 & 0.146 & 0.003 & 0.488 & 0.110 & 0.149 & 0.084 & 0.250    & 0.002 & 0.257 & 0.127 & 0.447      & 0.146 & 0.137  & 0.188  & 0.648 & 0.329 &	0.299 &	0.323 
   \\
glass            & 0.144 & 0.111 & 0.183 & 0.151          & 0.161 & 0.144   & 0.167 & 0.090 & 0.144 & 0.113 & 0.130 & 0.112 & 0.111 & 0.090    & 0.159 & 0.076 & 0.122 & 0.113      & 0.073 & 0.206  & 0.135  & 0.168 & 0.104 &	0.083 &	0.094 
   \\
hepatitis        & 0.304 & 0.389 & 0.295 & 0.225          & 0.328 & 0.243   & 0.252 & 0.275 & 0.214 & 0.363 & 0.277 & 0.339 & 0.253 & 0.170    & 0.221 & 0.291 & 0.231 & 0.317      & 0.165 & 0.238  & 0.215  & 0.257 & 0.242 &	0.454 &	0.459 
   \\
http             & 0.464 & 0.280 & 0.145 & 0.047          & 0.302 & 0.886   & 0.010 & 0.004 & 0.050 & 0.865 & 0.356 & 0.500 & 0.368 & 0.093    & 0.004 & 0.441 & 0.091 & 0.363      & 0.642 & 0.024  & 0.295  & 0.440 & 0.285 &	0.296 &	0.309 
   \\
internetads      & 0.297 & 0.505 & 0.505 & 0.182          & 0.523 & 0.486   & 0.296 & 0.242 & 0.232 & 0.344 & 0.291 & 0.276 & 0.207 & 0.252    & 0.197 & 0.288 & 0.237 & 0.262      & 0.295 & 0.290  & 0.275  & 0.302 & 0.336 &	0.365 &	0.405 
   \\
ionosphere       & 0.881 & 0.663 & 0.633 & 0.821          & 0.353 & 0.779   & 0.911 & 0.741 & 0.807 & 0.947 & 0.829 & 0.721 & 0.473 & 0.392    & 0.728 & 0.781 & 0.472 & 0.824      & 0.633 & 0.920  & 0.610  & 0.880 & 0.818 &	0.801 &	0.839 
  \\
landsat          & 0.212 & 0.176 & 0.164 & 0.246          & 0.231 & 0.194   & 0.258 & 0.183 & 0.250 & 0.253 & 0.175 & 0.163 & 0.230 & 0.362    & 0.272 & 0.198 & 0.329 & 0.187      & 0.200 & 0.255  & 0.203  & 0.223 & 0.192 &	0.202 &	0.219 
  \\
letter           & 0.166 & 0.068 & 0.077 & 0.445          & 0.078 & 0.086   & 0.203 & 0.083 & 0.433 & 0.174 & 0.113 & 0.076 & 0.083 & 0.099    & 0.252 & 0.099 & 0.208 & 0.153      & 0.367 & 0.255  & 0.181  & 0.257 & 0.135 &	0.120 &	0.130 
  \\
lymphography     & 0.915 & 0.907 & 0.894 & 0.090          & 0.919 & 0.972   & 0.894 & 0.491 & 0.135 & 0.767 & 0.885 & 0.935 & 0.454 & 0.254    & 0.463 & 0.897 & 0.264 & 0.417      & 0.731 & 0.805  & 0.388  & 0.381 & 0.563 &	0.782 &	0.705 
   \\
magic.gamma      & 0.666 & 0.588 & 0.533 & 0.539          & 0.617 & 0.638   & 0.724 & 0.579 & 0.520 & 0.632 & 0.625 & 0.589 & 0.450 & 0.499    & 0.627 & 0.326 & 0.548 & 0.692      & 0.651 & 0.730  & 0.657  & 0.664 & 0.664 &	0.689 &	0.643 
  \\
mammography      & 0.140 & 0.430 & 0.435 & 0.070          & 0.132 & 0.218   & 0.181 & 0.218 & 0.085 & 0.036 & 0.187 & 0.204 & 0.111 & 0.025    & 0.114 & 0.046 & 0.046 & 0.074      & 0.099 & 0.175  & 0.082  & 0.170 & 0.083 &	0.067 &	0.070 
   \\
musk             & 1.000 & 0.369 & 0.475 & 0.140          & 0.999 & 0.945   & 0.708 & 0.842 & 0.118 & 0.992 & 1.000 & 1.000 & 0.500 & 0.107    & 0.196 & 1.000 & 0.128 & 0.391      & 0.984 & 0.434  & 0.137  & 0.553 & 0.537 &	1.000 &	1.000 
   \\
optdigits        & 0.059 & 0.029 & 0.029 & 0.036          & 0.192 & 0.046   & 0.022 & 0.029 & 0.035 & 0.022 & 0.027 & 0.027 & 0.026 & 0.039    & 0.032 & 0.039 & 0.030 & 0.027      & 0.022 & 0.021  & 0.028  & 0.028 & 0.030 &	0.038 &	0.046 
   \\
pageblocks       & 0.547 & 0.370 & 0.520 & 0.341          & 0.319 & 0.464   & 0.556 & 0.410 & 0.292 & 0.617 & 0.531 & 0.525 & 0.255 & 0.288    & 0.632 & 0.373 & 0.285 & 0.538      & 0.493 & 0.530  & 0.507  & 0.555 & 0.543 &	0.553 &	0.584 
   \\
pendigits        & 0.192 & 0.177 & 0.270 & 0.048          & 0.247 & 0.260   & 0.100 & 0.186 & 0.040 & 0.069 & 0.226 & 0.219 & 0.056 & 0.022    & 0.027 & 0.075 & 0.045 & 0.060      & 0.056 & 0.089  & 0.044  & 0.044 & 0.173 &	0.253 &	0.335 
   \\
pima             & 0.484 & 0.536 & 0.484 & 0.412          & 0.577 & 0.510   & 0.530 & 0.404 & 0.406 & 0.498 & 0.477 & 0.492 & 0.372 & 0.366    & 0.413 & 0.476 & 0.385 & 0.476      & 0.400 & 0.528  & 0.437  & 0.447 & 0.501 &	0.498 &	0.494 
   \\
satellite        & 0.656 & 0.570 & 0.526 & 0.378          & 0.688 & 0.649   & 0.582 & 0.613 & 0.381 & 0.768 & 0.654 & 0.606 & 0.527 & 0.406    & 0.465 & 0.658 & 0.451 & 0.596      & 0.662 & 0.563  & 0.380  & 0.529 & 0.634 &	0.681 &	0.698 
  \\
satimage-2       & 0.972 & 0.797 & 0.666 & 0.042          & 0.760 & 0.918   & 0.690 & 0.857 & 0.041 & 0.682 & 0.965 & 0.872 & 0.289 & 0.052    & 0.076 & 0.949 & 0.102 & 0.484      & 0.783 & 0.507  & 0.095  & 0.138 & 0.948 &	0.907 &	0.920 
   \\
shuttle          & 0.184 & 0.962 & 0.905 & 0.081          & 0.965 & 0.976   & 0.193 & 0.168 & 0.109 & 0.841 & 0.907 & 0.913 & 0.438 & 0.149    & 0.072 & 0.136 & 0.135 & 0.346      & 0.779 & 0.187  & 0.247  & 0.626 & 0.944 &	0.959 &	0.972 
   \\
skin             & 0.289 & 0.179 & 0.183 & 0.207          & 0.232 & 0.254   & 0.290 & 0.180 & 0.221 & 0.490 & 0.220 & 0.172 & 0.226 & 0.221    & 0.285 & 0.232 & 0.173 & 0.335      & 0.175 & 0.290  & 0.316  & 0.302 & 0.457 &	0.449 &	0.441 
   \\
smtp             & 0.403 & 0.005 & 0.589 & 0.001          & 0.005 & 0.005   & 0.415 & 0.312 & 0.022 & 0.006 & 0.383 & 0.382 & 0.179 & 0.240    & 0.000 & 0.358 & 0.004 & 0.006      & 0.502 & 0.411  & 0.012  & 0.422 & 0.507 &	0.079 &	0.033 
   \\
spambase         & 0.402 & 0.544 & 0.518 & 0.344          & 0.518 & 0.488   & 0.415 & 0.387 & 0.360 & 0.349 & 0.402 & 0.409 & 0.389 & 0.456    & 0.383 & 0.387 & 0.370 & 0.433      & 0.384 & 0.407  & 0.399  & 0.400 & 0.394 &	0.380 &	0.528 
   \\
speech           & 0.019 & 0.019 & 0.020 & 0.022          & 0.023 & 0.021   & 0.019 & 0.016 & 0.022 & 0.019 & 0.019 & 0.018 & 0.022 & 0.018    & 0.020 & 0.019 & 0.020 & 0.018      & 0.020 & 0.019  & 0.019  & 0.020 & 0.018 &	0.017 &	0.017 
   \\
stamps           & 0.211 & 0.398 & 0.324 & 0.143          & 0.332 & 0.347   & 0.317 & 0.280 & 0.153 & 0.257 & 0.318 & 0.364 & 0.198 & 0.099    & 0.241 & 0.285 & 0.117 & 0.284      & 0.143 & 0.273  & 0.235  & 0.226 & 0.338 &	0.309 &	0.312 
   \\
thyroid          & 0.272 & 0.179 & 0.472 & 0.069          & 0.501 & 0.562   & 0.392 & 0.189 & 0.077 & 0.702 & 0.329 & 0.356 & 0.126 & 0.024    & 0.338 & 0.318 & 0.066 & 0.734      & 0.325 & 0.360  & 0.118  & 0.705 & 0.244 	&0.228 &	0.220 
   \\
vertebral        & 0.123 & 0.085 & 0.110 & 0.124          & 0.091 & 0.097   & 0.095 & 0.089 & 0.130 & 0.101 & 0.107 & 0.099 & 0.134 & 0.107    & 0.118 & 0.124 & 0.115 & 0.111      & 0.150 & 0.098  & 0.133  & 0.119 & 0.097 	&0.097 &	0.095 
   \\
vowels           & 0.166 & 0.034 & 0.083 & 0.314          & 0.078 & 0.162   & 0.443 & 0.127 & 0.326 & 0.085 & 0.196 & 0.069 & 0.041 & 0.037    & 0.178 & 0.154 & 0.219 & 0.295      & 0.311 & 0.504  & 0.166  & 0.417 & 0.242 &	0.308 &	0.297 
   \\
waveform         & 0.122 & 0.057 & 0.040 & 0.078          & 0.048 & 0.056   & 0.133 & 0.040 & 0.071 & 0.040 & 0.052 & 0.044 & 0.032 & 0.061    & 0.150 & 0.042 & 0.063 & 0.150      & 0.050 & 0.109  & 0.037  & 0.043 & 0.180 &	0.129 &	0.113 
  \\
wbc              & 0.691 & 0.883 & 0.882 & 0.037          & 0.728 & 0.948   & 0.743 & 0.898 & 0.077 & 0.839 & 0.813 & 0.913 & 0.327 & 0.069    & 0.358 & 0.736 & 0.211 & 0.431      & 0.758 & 0.722  & 0.348  & 0.194 & 0.859 &	0.862 &	0.869 
   \\
wdbc             & 0.688 & 0.760 & 0.493 & 0.155          & 0.761 & 0.702   & 0.521 & 0.527 & 0.128 & 0.395 & 0.539 & 0.613 & 0.152 & 0.063    & 0.039 & 0.589 & 0.065 & 0.568      & 0.483 & 0.465  & 0.074  & 0.157 & 0.783 &	0.625 &	0.530 
   \\
wilt             & 0.040 & 0.037 & 0.042 & 0.081          & 0.039 & 0.044   & 0.049 & 0.036 & 0.083 & 0.153 & 0.035 & 0.032 & 0.047 & 0.046    & 0.041 & 0.065 & 0.109 & 0.115      & 0.076 & 0.054  & 0.211  & 0.163 & 0.038 &	0.037 &	0.042 
   \\
wine             & 0.170 & 0.364 & 0.195 & 0.061          & 0.412 & 0.207   & 0.081 & 0.250 & 0.064 & 0.737 & 0.135 & 0.264 & 0.120 & 0.116    & 0.126 & 0.229 & 0.087 & 0.086      & 0.075 & 0.074  & 0.064  & 0.103 & 0.259 &	0.502 &	0.416 
   \\
wpbc             & 0.227 & 0.234 & 0.217 & 0.206          & 0.241 & 0.237   & 0.234 & 0.227 & 0.210 & 0.257 & 0.222 & 0.229 & 0.214 & 0.240    & 0.234 & 0.214 & 0.234 & 0.236      & 0.238 & 0.227  & 0.238  & 0.231 & 0.230 &	0.230 &	0.233 
  \\
yeast            & 0.314 & 0.308 & 0.332 & 0.326          & 0.328 & 0.304   & 0.294 & 0.330 & 0.315 & 0.298 & 0.303 & 0.302 & 0.353 & 0.350    & 0.284 & 0.332 & 0.318 & 0.309      & 0.320 & 0.295  & 0.306  & 0.306 & 0.291 &	0.300 &	0.331 
  \\
CIFAR10          & 0.103 & 0.065 & 0.067 & 0.115          & 0.075 & 0.089   & 0.102 & 0.086 & 0.115 & 0.084 & 0.102 & 0.101 & 0.062 & 0.073    & 0.060 & 0.102 & 0.070 & 0.085      & 0.102 & 0.104  & 0.078  & 0.092 & 0.378 &	0.526 &	0.525 
   \\
MNIST-C          & 0.173 & 0.050 & 0.050 & 0.128          & 0.126 & 0.178   & 0.191 & 0.101 & 0.127 & 0.166 & 0.179 & 0.170 & 0.092 & 0.097    & 0.096 & 0.177 & 0.098 & 0.154      & 0.178 & 0.192  & 0.141  & 0.157 & 0.213 &	0.304 &	0.313 
  \\
MVTec-AD         & 0.570 & 0.236 & 0.236 & 0.536          & 0.546 & 0.570   & 0.580 & 0.464 & 0.532 & 0.451 & 0.555 & 0.540 & 0.362 & 0.387    & 0.317 & 0.546 & 0.404 & 0.454      & 0.546 & 0.578  & 0.439  & 0.517 & 0.510 &	0.535 &	0.534 
  \\
SVHN             & 0.079 & 0.050 & 0.050 & 0.084          & 0.064 & 0.073   & 0.079 & 0.064 & 0.083 & 0.068 & 0.078 & 0.078 & 0.059 & 0.063    & 0.060 & 0.078 & 0.068 & 0.074      & 0.078 & 0.080  & 0.069  & 0.077 & 0.065 &	0.065 &	0.065 
  \\
mnist            & 0.386 & 0.092 & 0.092 & 0.241          & 0.109 & 0.290   & 0.409 & 0.170 & 0.233 & 0.308 & 0.385 & 0.381 & 0.215 & 0.253    & 0.237 & 0.297 & 0.232 & 0.259      & 0.374 & 0.400  & 0.276  & 0.368 & 0.334 &	0.396 &	0.391 
   \\
FashionMNIST     & 0.329 & 0.050 & 0.050 & 0.194          & 0.269 & 0.320   & 0.346 & 0.180 & 0.188 & 0.245 & 0.329 & 0.319 & 0.138 & 0.181    & 0.106 & 0.328 & 0.158 & 0.297      & 0.325 & 0.339  & 0.213  & 0.267 & 0.473 &	0.695 &	0.697 
   \\
20news           & 0.067 & 0.061 & 0.062 & 0.087          & 0.061 & 0.062   & 0.069 & 0.062 & 0.088 & 0.072 & 0.064 & 0.062 & 0.054 & 0.058    & 0.055 & 0.063 & 0.063 & 0.056      & 0.063 & 0.072  & 0.060  & 0.068 & 0.129 &	0.117 &	0.119 
   \\
agnews           & 0.072 & 0.059 & 0.058 & 0.125          & 0.059 & 0.064   & 0.082 & 0.064 & 0.125 & 0.077 & 0.068 & 0.061 & 0.053 & 0.053    & 0.051 & 0.066 & 0.069 & 0.050      & 0.062 & 0.085  & 0.063  & 0.076 & 0.286 &	0.281 &	0.264 
   \\
amazon           & 0.061 & 0.060 & 0.055 & 0.058          & 0.059 & 0.058   & 0.062 & 0.054 & 0.058 & 0.062 & 0.059 & 0.057 & 0.049 & 0.046    & 0.050 & 0.058 & 0.052 & 0.050      & 0.057 & 0.062  & 0.055  & 0.057 & 0.054 &	0.054 &	0.055 
   \\
imdb             & 0.047 & 0.050 & 0.045 & 0.049          & 0.047 & 0.047   & 0.047 & 0.046 & 0.049 & 0.049 & 0.047 & 0.046 & 0.049 & 0.053    & 0.050 & 0.047 & 0.054 & 0.051      & 0.046 & 0.047  & 0.047  & 0.047 & 0.055 &	0.055 &	0.056 
  \\
yelp             & 0.073 & 0.072 & 0.065 & 0.085          & 0.070 & 0.070   & 0.083 & 0.067 & 0.085 & 0.075 & 0.073 & 0.069 & 0.049 & 0.058    & 0.051 & 0.068 & 0.054 & 0.056      & 0.069 & 0.085  & 0.054  & 0.066 & 0.057 &	0.057 &	0.057 
   \\
\midrule
\textbf{Average} & 0.318 & 0.288 & 0.296 & 0.179          & 0.308 & 0.336   & 0.308 & 0.260 & 0.181 & 0.337 & 0.324 & 0.328 & 0.198 & 0.170    & 0.199 & 0.285 & 0.185 & 0.283      & 0.301 & 0.295  & 0.216  & 0.288 & 0.334 &	0.350 &	\textbf{0.352} 

\\ 
\bottomrule
\end{tabular}
\end{sidewaystable}




\clearpage
\section{Broader Impacts}
\label{sec-appen:broader_impacts}

Our paper provides a theoretical analysis of the IM effect together with a few simple modifications that improve the performance of existing IM-based outlier detection methods, and is foundational rather than tied to any specific release of generative models.
Because outlier detection can be used as a general-purpose tool, downstream uses (e.g., fraud detection, cybersecurity, or medical screening) carry the usual risks of false positives and false negatives, but these risks arise at application phase rather than from the methodology proposed here.
Beyond the considerations already common to outlier detection in such settings, we do not identify additional specific societal concerns introduced by this work.


\end{document}